\documentclass[english, hidelinks, 12pt]{article}
\usepackage{geometry}
\usepackage{setspace}
\usepackage{bm}
\usepackage{graphicx}
\graphicspath{ {.} }
\usepackage{hyperref}
\usepackage{xr}

\usepackage{lineno}

\usepackage{amssymb, amsmath, amsthm, amsfonts, mathtools, bbm}
\usepackage{float}
\usepackage{color, xcolor,colortbl}
\usepackage[dvipsnames]{xcolor}
\usepackage{url}
\usepackage{enumitem}
\usepackage{csquotes}
\usepackage{multirow}
\usepackage{siunitx}
\usepackage{placeins}
\usepackage{authblk}
\usepackage{appendix}
\usepackage{lscape}
\usepackage{booktabs}
\usepackage{longtable}
\usepackage{stackrel}

\usepackage{comment}

\allowdisplaybreaks[1]

\usepackage{tikz}
\usetikzlibrary{arrows.meta, decorations.pathreplacing}

\usepackage{algorithm}
\usepackage{algpseudocode}

\usepackage{xr-hyper}
\usepackage{hyperref}
\hypersetup{
    colorlinks=true,
    linkcolor=blue,
    citecolor=blue,
    filecolor=magenta,
    urlcolor=cyan
}
\usepackage{cleveref}

\usepackage[round]{natbib}

\theoremstyle{plain}
\newtheorem{theorem}{Theorem}
\newtheorem{proposition}{Proposition}
\newtheorem{lemma}{Lemma}
\newtheorem{corollary}[theorem]{Corollary}
\newtheorem{assumption}{Assumption}
\newtheorem{remark}{Remark}

\newtheorem{definition}{Definition}

\numberwithin{equation}{section}

\usepackage{soul}       
\usepackage{ulem}   
\def\1{\mathbf{1}}
\def\0{\mathbf{0}}

\usepackage[makeroom]{cancel}

\usepackage{parskip}

\begin{document}
\def\spacingset#1{\renewcommand{\baselinestretch}%
{#1}\small\normalsize} \spacingset{1}
	
	\begin{titlepage}
		
		\title{{\Large\textbf{
        Perturbation is All You Need for Extrapolating Language Models
        }}}
        
                \author{
                Zetai Cen\thanks{Equal contribution. 
                School of Mathematics, University of Bristol.
                Email: zetai.cen@bristol.ac.uk.
                Supported by Engineering and Physical Sciences Research Council (EP/Z531327/1).},
                \hspace{0pt}
                Jin Zhu\thanks{Equal contribution.
                School of Mathematics, University of Birmingham.
                Email: j.zhu.7@bham.ac.uk},
                \hspace{0pt}
                Xinwei Shen\thanks{Department of Statistics, University of Washington.
                Email: xwshen@uw.edu},
                \hspace{0pt}
                and
                Chengchun Shi\thanks{Corresponding author.
                Department of Statistics, London School of Economics and Political Science.
                Email: c.shi7@lse.ac.uk}
                }

		\date{\empty}
		
		\maketitle

\begin{abstract}
This paper develops a statistical theory of extrapolation for large language models, by reinterpreting them through pre-post-additive noise models.
In contrast to the standard autoregressive next-token prediction based on an exact prefix, we introduce a perturbation-based procedure that first transforms the prefix into a semantic neighbour and then conditions on this perturbed variant for next-token prediction. 
This yields a hierarchical model with a pre-post-additive noise structure.
Within this framework, we develop a rigorous theory of extrapolability, namely, the capacity of a model class to make reliable predictions for token sequences that lie outside the empirical support of the training corpus, by establishing five properties of the proposed procedure: adaptivity, contractivity, robustness, extrapolability, and double robustness.
We evaluate the finite sample performance of the proposed procedure using both synthetic and real world language data. Results show that the proposed method consistently improves out-of-support prediction while maintaining competitive in-support performance, demonstrating that perturbation offers a practical route to language modelling.
\end{abstract}
		

		\noindent
		{\sl Keywords and phrases:}
        Extrapolation,
        large language models,
        perturbation,
        pre-post-additive noise model.

\noindent

	\end{titlepage}
	
	\setcounter{page}{2}

\maketitle


\renewcommand{\baselinestretch}{1.8}

\newpage

\section{Introduction}\label{sec: intro}

\subsection{Overview}

The past few years have witnessed the emergence of highly intelligent large language models (LLMs), including the GPT, Claude, Gemini, Qwen and DeepSeek series. These models are evolving over time and their impact is everywhere: they have been deeply integrated into our daily lives and work, representing an important milestone toward the realization of artificial general intelligence. 
Much of this progress has been driven by scaling: training increasingly large models on increasingly large corpora. However, the scaling paradigm appears to be approaching a hard ceiling. As the availability of high-quality text data appears to reach its limit \citep{villalobos2024position}, the scaling law which links model performance to parameter number and data volume \citep{kaplan2020scaling} suggests that performance gains from continued scaling are diminishing. This calls for more sample-efficient algorithms for further enhancement.

In this paper, we propose a simple yet effective technique to bolster the performance of existing LLMs, supported by a rigorous theoretical framework that characterizes its benefits. Our methodology builds upon autoregressive language models \citep{Bengioetal2000}, which decompose the joint probability mass function of a text sequence into a product of conditional probabilities, each corresponding to the distribution of the next token given the preceding context. Under this formulation, language generation can be viewed as a sequential next token prediction problem: at each time step, the model estimates the conditional distribution of the next token given the current input prefix, randomly draws a token from this distribution, appends it to the prefix, and repeats the procedure to generate the remaining sequence. 

We introduce a two-step procedure for next token prediction: 
(i) we generate a semantic perturbation (e.g., a synonym) of the input prefix; 
and (ii) the model then performs next token prediction conditioned on this perturbed variant rather than the original input. 
Figure~\ref{fig: pretrain_illustration} provides a graphical illustration for the proposed perturbed autoregressive model.

\begin{figure}[t!]
    \centering
    \includegraphics[width=\hsize]{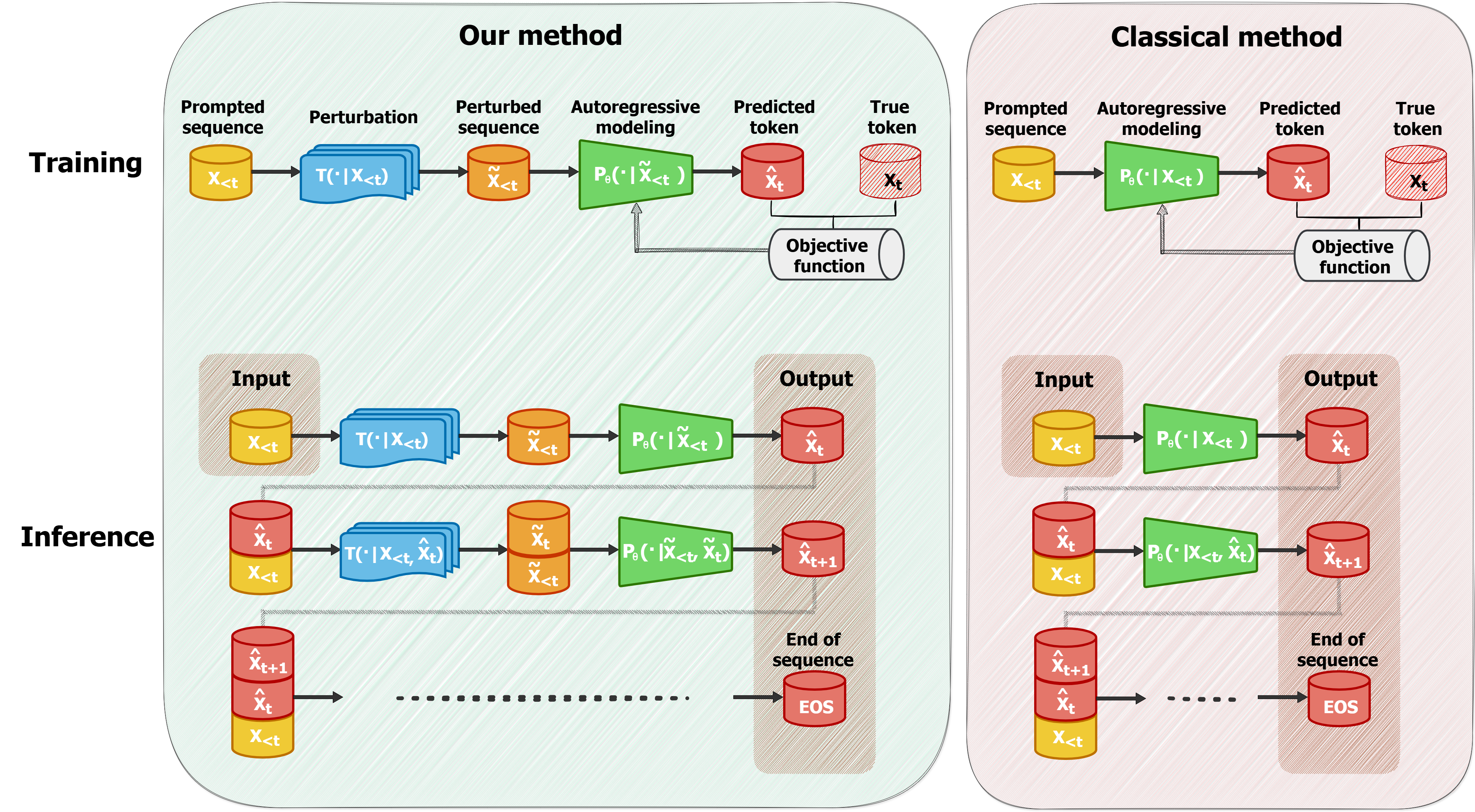}
    \caption{An illustration of our approach, in comparison with the classical LLM training.
    At both training and inference stages, where inference refers to test-time language generation using a trained model, our method processes an input prefix $\bm{X}_{<t}$, consisting of the first $t-1$ tokens, in two steps: (i) drawing a perturbed prefix  $\widetilde{\bm{X}}_{<t}$ from a perturbation distribution $T(\cdot| \bm{X}_{<t})$ and (ii) predicting the next token $\widehat{X}_t$ via the autoregressive model $P_\theta(\cdot| \widetilde{\bm{X}}_{<t})$.
    The classical method consists of our step~(ii) only, such that $\widehat{X}_t$ is drawn directly from $P_\theta(\cdot| \bm{X}_{<t})$.
    }
    \label{fig: pretrain_illustration}
\end{figure}

\subsection{Motivation}

Our use of perturbations for language modelling arises naturally from research in linguistics and cognitive science, which suggests that human language comprehension and generation are fundamentally robust to textual variation. Sophisticated cognitive processes frequently rely on analogical reasoning and metaphor, often remaining unaffected by semantic substitutions \citep{EricksonMattson1981, Satoetal2015}. As discussed by \citet{Satoetal2015}, people habitually speak about abstract concepts metaphorically.
For instance, describing a person as being ``full of love'' is conceptually equivalent to saying they have ``a lot of love to give''. This flexibility is further supported by neurocognitive evidence. For example, \citet{Lietal2024} demonstrate that reasoning on syntactic patterns involves brain regions specifically dedicated to semantic analogies. Collectively, these findings indicate that human language production is inherently resilient to and even driven by textual perturbations.

We bring this enlightenment to LLMs, by explicitly modelling the metaphor or analogy as a probabilistic perturbation process. 
In this paper, we adopt simple techniques (e.g., random insertion, deletion, and swap, see Section~\ref{sec: method}) to design the perturber, so as to pinpoint the importance of including perturbation per se.
We highlight that the above analogy motivates the perturbation mechanism, but our formal results do not rely on cognitive assumptions.

As a more important point of view,
we argue that employing such a perturbed model for pretraining is reasonable. Not only does it mimic the cognition of human beings, but it also addresses a crucial yet seldom studied problem in language generation: extrapolation. In our specific context, extrapolation can be thought of as the capability of LLMs to generate reasonable answers to unseen contexts during training. In practical applications, a model almost never encounters the exact prompts or contexts seen during training; inputs are inherently out-of-support or may even deviate substantially from the training corpus. Furthermore, we posit that the capacity for extrapolation is intrinsically linked to a model's creativity and the emergence of complex reasoning, namely, capabilities that require the model to generalize beyond rote memorization. 
Consequently, it is expected that an LLM specifically trained to enhance its extrapolative power would outperform standard models.

\subsection{Related works}\label{subsec: literature}

In machine learning, extrapolation is concerned with the performance of a trained model on data points that lie outside its training support. Recent works have advanced our understanding of the extrapolability of machine learning algorithms \citep{Christiansenetal2022, DongMa2023, Kongetal2024}.
Our proposal is related to \textit{engression} \citep{ShenMeinshausen2025}, which
considers the standard regression setting with a scalar response $Y$ and a vector of predictors $X$. Its goal is to learn the conditional distribution function of $Y$ given $X$. Toward that end, they distinguish among three types of regression models, depending on whether the noise enters after the regression function, before the regression function through perturbation of the covariates, or both:
\begin{enumerate}
    \item [(i)] \textit{Post-additive noise model}: $Y=f(X)+\varepsilon$;
    \item [(ii)] \textit{Pre-additive noise model}: $Y=f(X+\eta)$;
    \item [(iii)] \textit{Pre-post-additive noise model}: $Y=f(X+\eta)+\varepsilon$.
\end{enumerate}
In all three models, $f$ denotes a nonlinear deterministic function to be estimated, whereas 
$\varepsilon$ and $\eta$ denote some random noises. Most regression models studied in the literature fall into the first type, in which the noise is added to the response and its effect is independent of the regression function $f$. The second type, the pre-additive noise model, has the implication that each response $Y$ is generated not directly from the corresponding predictor 
$X$, but from a perturbed version $X+\eta$. The third type combines the first two.
Engression advocates the second type of model over the classical first type for effective distributional learning. This preference is motivated by the theoretical finding that pre-additive noise models can extrapolate under certain conditions.
In contrast,
we primarily utilize the third type 
by introducing a new model class whose extrapolation properties are then characterized using a related notion of uncertainty.

LLMs aim to characterize the probability distribution over text. 
Learning such models is one of the most fundamental problems in natural language processing.
In modern practice, these models are predominantly trained using Transformer-based autoregressive architectures \citep{vaswani2017attention}.
Pretraining refers to the procedure that encodes general linguistic knowledge into these high-capacity language models using massive human text corpora. This is typically achieved via maximum likelihood estimation (MLE), which is equivalent to minimizing the Kullback--Leibler (KL) divergence between the model’s predictive distribution and the empirical distribution of human texts \citep{radford2018improving}. More recently, \citet{ji2023tailoring}, \citet{zhang2023mixce}, and \citet{ren2024emo} proposed to minimize alternative divergence measures, such as the total variation distance and Earth Mover’s distance,
to mitigate the sensitivity of KL divergence to small perturbations in the data. 
Different from these works, which propose alternative training objectives, we propose an alternative model architecture based on perturbation. Our model can nonetheless be trained under these objectives.

\subsection{Contribution and organization}

To summarize, we make the following contributions. 
Methodologically, we propose a perturbed autoregressive language model, that differs from classical LLMs whose training relies exclusively on unperturbed sequences. 
Our approach also deviates from standard data augmentation that utilizes perturbations solely for enriching the training dataset.

Theoretically, we establish the following statistical properties to characterize how perturbation affects extrapolation across three regimes of the base model’s capability (\textit{perfect}, \textit{moderate}, and \textit{poor}); see Figure~\ref{fig:theory} for a visualization of our theoretical results. (i) First,  \ul{\textit{adaptivity}} shows that if the base model class is perfectly extrapolable, then the perturbed class inherits this property (Theorem~\ref{thm: simple_extrap_feature_map}). (ii) Second, \ul{\textit{contractivity}} shows that when the base class has imperfect extrapolation capability, perturbation can further improve it (Theorem~\ref{thm: contraction_global_extrap}). (iii) Third, \ul{\textit{robustness}} provides meaningful extrapolation guarantees even when the base class extrapolates poorly (Theorem~\ref{thm: just_perturb}). (iv) Fourth, \ul{\textit{extrapolability}} establishes that, even in the latter two regimes, the perturbed model class can still extrapolate perfectly with a suitably chosen perturber (Theorem~\ref{thm: extrap_class_charac_perturb}). (v) Finally, combining adaptivity and extrapolability yields a \ul{\textit{double robustness}} property: the perturbed model class is perfectly extrapolable if either the base model class is extrapolable or the perturber satisfies the required condition (Corollary \ref{coro:dr}).

\begin{figure}[t]
    \centering
    \includegraphics[width=0.7\linewidth]{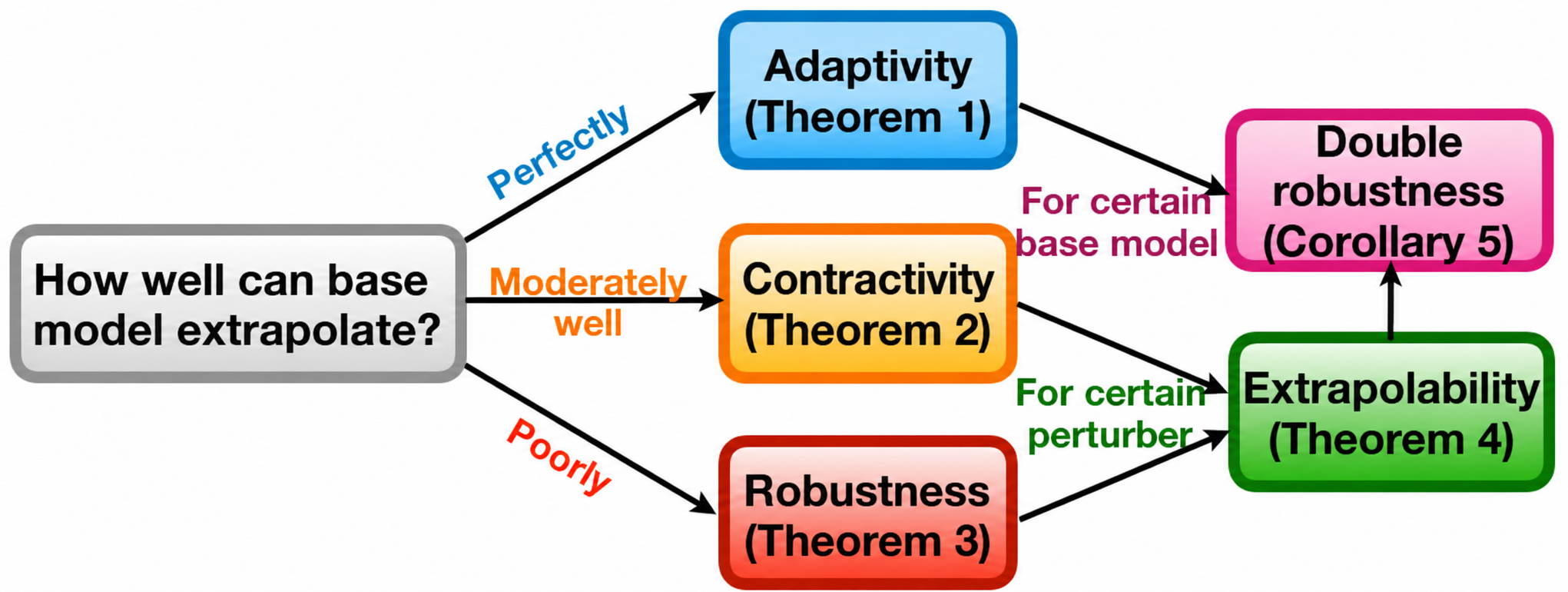}
    \caption{Roadmap of our theoretical results.}
    \label{fig:theory}
\end{figure}

Importantly, these theories are substantially different from those in \citet{ShenMeinshausen2025},  which consider tabular regression and study how pre-additive-noise models improve upon post-additive-noise models. In contrast, we consider language models with a different pre-post-additive structure, which yields adaptivity, robustness, and extrapolability guarantees that are absent from their analysis. 
We also note that the existing literature on LLMs is largely heuristic. By providing rigorous theoretical guarantees, we aim to illustrate how statistical theory can offer a principled foundation for understanding and improving LLMs.

The rest of this paper is organized as follows.
We introduce the perturbed language model and detail its parameter estimation in Section~\ref{sec: method}.
Then we establish in Section~\ref{sec: theory} a rigorous theoretical framework to study the extrapolation property of the proposed language model. 
We next investigate numerically how well the proposed method extrapolates, using synthetic experiments in Section~\ref{sec:simulation} and real-world applications in Section~\ref{sec:real-world}, respectively. Section~\ref{sec: conclusion} concludes the paper.
All the detailed proofs, additional numerical results and discussions are included in the Supplementary Materials.

\section{Methodology}\label{sec: method}

\subsection{Models and training}

We first formulate the pretraining problem in language modelling.
Let $\bm{X}$ denote a human-authored sentence,  represented as a sequence of tokens $(X_1, \ldots, X_L)$, where each token $X_t$ (encoded as an integer) is a word, subword, or punctuation symbol drawn from a vocabulary, denoted as a finite set $\mathcal{V}$. We assume a fixed sequence length $L$, where shorter sentences are extended to this length via zero padding, i.e., by appending a sequence of zeros until they reach length $L$. Let $P^*$ denote the probability distribution over $\bm{X}$. Our goal is to learn a parametric distribution $P_{\theta}$, parametrized by $\theta$, that is as close to the human language model $P^*$ as possible.

For any $t>1$, define $\bm{X}_{<t}:= (X_1,\dots,X_{t-1})$ as the sequence of tokens preceding the $t$th token, with the convention that $\bm{X}_{<1}=\emptyset$. By the chain rule, both $P^*$ and 
$P_\theta$ can be factorized into products of conditional distributions such that
\begin{align*}
    P^*(\bm{X}) &= \prod_{t=1}^L P^*(X_t | \bm{X}_{<t}) , \quad \\
    P_{\theta}(\bm{X}) &= \prod_{t=1}^L P_{\theta}(X_t | \bm{X}_{<t}) .
\end{align*}
Autoregressive language models are motivated by this factorization. Given a prefix $\bm{X}_{<t}$ of length $t-1$, an autoregressive language model outputs a probability distribution $P_{\theta}(\cdot|\bm{X}_{<t})$ over the vocabulary, conditional on the preceding tokens. The next token is then sampled from this distribution, and this procedure repeats for $t=1,2,\dots$, until an end-of-sequence (EOS) token is produced, at which point the sentence is complete. Note that the stochasticity in such classical models arises solely from the sampling of the next-token distribution. Consequently, these models are conceptually similar to post-additive noise models in classical regression (Section \ref{subsec: literature}).

Our approach differs from existing LLMs,
with the intuition inspired by the favourable extrapolation properties of pre- and pre-post-additive noise models proposed in the standard regression setting. As discussed in Section~\ref{subsec: literature}, these models do not link the predictor directly to the response; instead, the response depends on a perturbed version of the predictor that resides in its neighbourhood. Such a structure inherently captures out-of-support information: even when test data lies outside the training support, the model can still extrapolate effectively provided that the test data can be viewed as ``neighbours'' to the training data through these perturbations. We extend this mindset to language modelling. 

Specifically, we propose a novel \textit{perturbed autoregressive model} that can be applied on top of \textit{any} existing autoregressive LLM. Given a sequence of preceding tokens $\bm{X}_{<t}$, we first randomly perturb it to obtain its contextual neighbour, denoted by $\widetilde{\bm{X}}_{<t}$, which need not have the same length as $\bm{X}_{<t}$. This perturbed sequence is generated according to
\[
\widetilde{\bm{X}}_{<t} \sim T(\cdot | \bm{X}_{<t}) ,
\]
where $T$ denotes a specific perturbation distribution to be detailed later. We next feed $\widetilde{\bm{X}}_{<t}$ into a base autoregressive language model $P_\theta(\cdot| \widetilde{\bm{X}}_{<t})$, parametrized by $\theta$, to predict the next token, and repeat this procedure for $t = 1, 2, \ldots$, until the EOS token is generated.
It is worthwhile to mention that generating perturbation from $T(\cdot |\bm{X}_{<t})$ is conceptually similar to injecting pre-additive noise as in $X+\eta$ in classical regression. Meanwhile, the stochastic sampling from $P_\theta(\cdot| \widetilde{\bm{X}}_{<t})$ injects post-additive noise during next-token generation. The perturbed model can thus be viewed as an adaption of the pre-post-additive noise model described in Section~\ref{subsec: literature} to LLMs.

To train the model,
given a prefix $\bm{X}_{<t}$ from a training sequence $\bm{X}$, MLE boils down to maximizing
\begin{align*}
    \log\big[ \mathbb{E}_{\widetilde{\bm{X}}_{<t} \sim T(\cdot|\bm{X}_{<t})} \big\{ P_{\theta}(X_t|\widetilde{\bm{X}}_{<t}) \big\} \big] ,
\end{align*}
which is similar to the log-likelihood of a hierarchical model.
For computational efficiency, we instead maximize a lower bound, $\mathbb{E}_{\widetilde{\bm{X}}_{<t} \sim T(\cdot|\bm{X}_{<t})} \big\{ \log\big[ P_{\theta}(X_t|\widetilde{\bm{X}}_{<t}) \big] \big\}$, by Jensen's inequality.
To this end, we first sample $m$ number of perturbed sequences $\{\widetilde{\bm{X}}_{<t}^{(i)}\}_{i=1}^m$ such that 
\[
\widetilde{\bm{X}}_{<t}^{(i)} \sim T(\cdot| \bm{X}_{<t}) ,
\quad
i=1,\dots,m .
\]
By aggregating these over all $t$, this gives rise to our objective function for a single sequence, 
\begin{equation}
\label{eqn: obj_log}
\begin{split}
    &\mathcal{L}^{\text{Log}}(P_\theta; \bm{X}) :=\frac{1}{m}
    \sum_{i=1}^m\sum_{t=1}^L\log P_\theta(X_t | \widetilde{\bm{X}}_{<t}^{(i)}) .
\end{split}
\end{equation}
Finally, we aggregate $\mathcal{L}^{\text{Log}}(P_\theta; \bm{X})$ over all sequences $\bm{X}$ in a training corpus $\mathcal{D}$. The parameter $\theta$ is estimated by maximizing  $\sum_{\bm{X}\in \mathcal{D}}\mathcal{L}^{\text{Log}}(P_\theta; \bm{X})$.
Note that while multiple perturbed sequences are sampled during training to reduce the randomness arising from the perturbation distribution $T$, we employ a single perturbation during inference\footnote{Here, inference is used in the machine-learning sense: it refers to the deployment stage in which a trained model is used for language generation, and should be distinguished from the conventional use of inference in statistics, where it refers to uncertainty quantification.} for the sake of computational efficiency.
We summarize the training and inference procedures of our proposed model in Algorithms~\ref{alg: pretrain} and \ref{alg: inference}, respectively.

\begin{algorithm}[h!t!]
\caption{Perturbed autoregressive training.}
\label{alg: pretrain}
\begin{algorithmic}[1]
\setstretch{1.25}
\State \textbf{Input:} training corpus $\mathcal{D} = \{ \bm{X}^{(i)} \}_{i=1}^n$, $m$ perturbation steps, random insertion $T$ 
\For{$i=1,\dots, n$}
\For{$t=1,\dots, |\bm{X}^{(i)}|$}
\For{$j=1,\dots, m$}
\State Sample $\widetilde{\bm{X}}_{<t}^{(i,j)} \sim T(\cdot| \bm{X}_{<t}^{(i)})$
\EndFor
\EndFor
\EndFor
\State Set $\mathcal{L}(\theta)= \sum_{i=1}^n \sum_{t=1}^{|\bm{X}^{(i)}|} \sum_{j=1}^m \log P_\theta(X_t^{(i)}| \widetilde{\bm{X}}_{<t}^{(i,j)})$
\State \textbf{Output:} $\arg\max_\theta \mathcal{L}(\theta)$
\end{algorithmic}
\end{algorithm}

\begin{algorithm}[h!t!]
\caption{Perturbed autoregressive inference.}
\label{alg: inference}
\begin{algorithmic}[1]
\setstretch{1.25}
\State \textbf{Input:} prompt $\bm{X}_0$, random insertion $T$, parameter $\theta$
\State Set $\bm{X}_{<1} \gets \bm{X}_0$, $t\gets 0$
\Repeat
\State $t\gets t+1$
\State Sample $\widetilde{\bm{X}}_{<t} \sim T(\cdot| \bm{X}_{<t})$ 
\State Sample $X_t \sim P_\theta(\cdot | \widetilde{\bm{X}}_{<t})$ 
\State $\bm{X}_{<t+1} \gets (\bm{X}_{<t}, X_t)$ 
\Until $X_t$ is EOS
\State \textbf{Output:} $\bm{X}_{<t}$
\end{algorithmic}
\end{algorithm}

\subsection{Perturbation design}

Our proposal is related to data augmentation in that we can apply techniques such as synonym replacement and random insertion or deletion to construct perturbations.
Early approaches in natural language processing augment the data by replacing synonyms or keywords, inserting and deleting words or characters in the training text. Despite their simplicity, these techniques have been empirically shown to improve the model performance \citep{wei2019eda, feng2020genaug}. 
To be simple and computationally effective, we primarily utilize random insertion in our implementation,
which proceeds as follows:
\begin{itemize}
    \item [(i)] A subset of tokens is randomly selected from the prefix;
    \item [(ii)] Synonyms for these tokens are generated;
    \item [(iii)] These synonyms are inserted at random positions within the prefix (prior to the EOS token). 
\end{itemize}
Thus, the perturbation distribution $T$ is embedded through the transformation that takes a prefix as input and produces an output through steps (i)--(iii).
Synonym replacement and random deletion are also applicable, as shown in additional experiments in the Supplementary Materials.
It is worth pointing out that training and using our proposed models are as computationally efficient as any adopted base models. The additional computation takes at most a few minutes, compared with at least several hours required to train a base LLM.

To conclude this section, we remark a technical nuance which distinguishes our proposal from data augmentation.
The difference lies in that we introduce perturbations not only during training but also at inference time (e.g., when generating the model output), thus changing how the conditional distribution is modeled;
whereas classical augmentation methods apply perturbations exclusively during training and hence only tweak the empirical data distribution used for estimation.
Hence in our proposal, perturbation is an intrinsic component of the LLM architecture, rather than a technique to enrich the training dataset. 
We will elucidate in Section~\ref{sec: theory} why such a simple change could lead to nontrivial improvement,
and further demonstrate the advantages of perturbing at both training and inference time in our ablation study in Section~\ref{sec:ablation}.

\section{Theoretical analysis}\label{sec: theory}
Before detailing our theories, 
we provide a high-level summary of our theoretical findings. We establish the following five theoretical properties on the extrapolability of the proposed perturbed autoregressive model (see also, Figure \ref{fig:theory}):
\begin{itemize}[leftmargin=*]
    \item \textbf{Adaptivity} (Theorem~\ref{thm: simple_extrap_feature_map}): If the base class is already extrapolable, then the perturbed class inherits this property.
    \item \textbf{Contractivity} (Theorem~\ref{thm: contraction_global_extrap}): Perturbation can further improve the extrapolation capability of the base model class, rather than merely preserving it. 
    \item \textbf{Robustness} (Theorem~\ref{thm: just_perturb}): Even when the base model class has limited extrapolation capability, the perturbed model class continues to enjoy meaningful extrapolation guarantees that depend only on the perturber.
    \item \textbf{Extrapolability} (Theorem~\ref{thm: extrap_class_charac_perturb}): With a suitably chosen perturber, the perturbed model class can achieve perfect extrapolability.
    \item \textbf{Double robustness} (Corollary \ref{coro:dr}): The perturbed model class is extrapolable if either the base model class or the perturber is suitably chosen.
\end{itemize}

\subsection{Definition and basic properties}\label{subsec: def_extrap}
To formally characterize extrapolability, we first introduce some notations. Let $\mathcal{X}^*$ denote the space of all possible prefixes (i.e., 
the set of all token sequences of length $L$
over vocabulary $\mathcal{V}$), 
and let $\mathcal{X} \subseteq \mathcal{X}^*$ represent the training support, consisting of the subset of sequences observed during pretraining. 
We equip $\mathcal{X}^*$ with a metric $M(\cdot, \cdot)$ that measures the semantic distance between any two prefixes, such as the Hamming distance.
Let the distance between a prefix $\bm{X}$ and the training support $\mathcal{X}$ be $d(\bm{X}, \mathcal{X}) := \inf_{\bm{X}' \in \mathcal{X}} M(\bm{X}, \bm{X}')$.
For a given prefix $\bm{X}$, we further define the total variation distance between two token prediction distributions $P_\theta(\cdot|\bm{X})$ and $P_{\theta'}(\cdot|\bm{X})$ as $\mathbb{TV}_{\bm{X}}(P_{\theta}, P_{\theta'}) := \mathbb{TV}[ P_{\theta}(\cdot | \bm{X}), P_{\theta'}(\cdot | \bm{X}) ]$.

In contrast to classical statistical learning theory that primarily studies interpolation (viz.\ within the support of the training distribution), we study distributional extrapolability through the following notion of extrapolation uncertainty, adapted from \citet{ShenMeinshausen2025}, on a language model class $\mathcal{P}=\{P_{\theta}\}_{\theta
}$.

\begin{definition}[Extrapolation uncertainty and extrapolability]
\label{def: dist_extrap}
For a given $\delta \geq 0$, the 
\textit{extrapolation uncertainty} of a class of LLMs $\mathcal{P}$ 
is defined as
\begin{align*}
    &
    \mathcal{U}_{\mathcal{P}}(\delta) 
    := 
    \hspace{-3pt}
    \sup_{
    \substack{
    \text{\footnotesize $\bm{X}':$ } \\
    \text{\footnotesize $d(\bm{X}', \mathcal{X}) \leq \delta$ }
    }
    } 
    \sup_{
    \substack{
    \text{\footnotesize $P_{\theta}, P_{\theta'} \in \mathcal{P}:$ } \\
    \text{\footnotesize $\mathbb{TV}_{\bm{X}}(P_{\theta}, P_{\theta'}) =0 $ } \\
    \text{\footnotesize $ \forall \bm{X} \in \mathcal{X}$ }
    }
    }
    \hspace{-3pt}
    \mathbb{TV}_{\bm{X}'}(P_{\theta}, P_{\theta'}) .
\end{align*}
We say $\mathcal{P}$ is \textit{extrapolable}
if $\mathcal{U}_{\mathcal{P}}(\delta) = 0$ for any $\delta \geq 0$.
\end{definition}

In words, the extrapolation uncertainty $\mathcal{U}_{\mathcal{P}}(\delta)$ for a model class $\mathcal{P}$ can be understood as the worst performance of its models that are perfectly trained on $\mathcal{X}$, over any testing sequence $\bm{X}'$ that is at most $\delta$ away from the training support (quantified by the distance $d(\cdot, \mathcal{X})$). 
Intuitively, it represents the maximal disagreement, characterized by the total variation distance, among all models in $\mathcal{P}$ that are observationally indistinguishable on the training support, i.e., $\mathbb{TV}_{\bm{X}}(\cdot, \cdot) =0$ for all $\bm{X} \in \mathcal{X}$.
If these models behave substantially differently at a nearby point $\bm{X}' \notin \mathcal{X}$, prediction based on data in $\mathcal{X}$ would be largely uncertain.
The smaller this uncertainty value, the stronger the model’s extrapolation capability beyond the training support. In the extreme case where $\mathcal{U}_{\mathcal{P}}(\delta) = 0$, the model class is considered perfectly extrapolable.

In what follows, we present several basic properties on Definition~\ref{def: dist_extrap}.

\begin{proposition}[Basic properties of extrapolation uncertainty]\label{prop: basic_extrap}
We have:
\begin{itemize}
    \item [(i)] (Interpolation case). 
    If $\delta =0$, then $\mathcal{U}_{\mathcal{P}}(\delta) =0$ for any model class $\mathcal{P}$.
    \item [(ii)] (Monotonicity). 
    For any model class $\mathcal{P}$ and any $0\leq \delta_1\leq \delta_2$, we have
    \begin{align*}
        \mathcal{U}_{\mathcal{P}}(\delta_1) \leq
    \mathcal{U}_{\mathcal{P}}(\delta_2).
    \end{align*}
    \item [(iii)] (Right continuity).
    For any model class $\mathcal{P}$, $\mathcal{U}_{\mathcal{P}}(\delta)$ is right-continuous.
    \item [(iv)] (Well-definedness).
    The quantity $\mathcal{U}_{\mathcal{P}}^{(\infty)} := \lim_{\delta \to \infty} \mathcal{U}_{\mathcal{P}}(\delta)$ exists.
\end{itemize}
\end{proposition}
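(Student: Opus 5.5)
The plan is to work directly from Definition~\ref{def: dist_extrap}, writing $\mathcal{U}_{\mathcal{P}}(\delta)=\sup_{\bm{X}'\in B(\delta)} g(\bm{X}')$, where $B(\delta):=\{\bm{X}'\in\mathcal{X}^*: d(\bm{X}',\mathcal{X})\le\delta\}$ and $g(\bm{X}')$ denotes the inner supremum over pairs $P_\theta,P_{\theta'}\in\mathcal{P}$ that agree in total variation on all of $\mathcal{X}$. Two facts are used throughout: $g$ takes values in $[0,1]$, because total variation is bounded by one, and $g$ does not depend on $\delta$. Each part then reduces to a statement about how the sets $B(\delta)$ vary with $\delta$.

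For part (i), I first show that $B(0)=\mathcal{X}$. Since $\mathcal{X}^*$ is finite ($|\mathcal{V}|^L$ sequences), the infimum defining $d(\bm{X}',\mathcal{X})$ is a minimum. Hence $d(\bm{X}',\mathcal{X})=0$ gives some $\bm{X}\in\mathcal{X}$ with $M(\bm{X}',\bm{X})=0$, and $\bm{X}'=\bm{X}$ because $M$ is a metric. For any $\bm{X}'\in\mathcal{X}$, every admissible pair satisfies $\mathbb{TV}_{\bm{X}'}=0$ by the constraint itself, so $g(\bm{X}')=0$. If the constraint set is empty the supremum is $0$ by convention, or more simply the pair $\theta=\theta'$ is always admissible and contributes $0$. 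This gives $\mathcal{U}_{\mathcal{P}}(0)=0$.

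Part (ii) follows from the inclusion $B(\delta_1)\subseteq B(\delta_2)$ for $\delta_1\le\delta_2$: a supremum over a larger set is at least as large.

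For part (iii), which is the only step needing care, I again use finiteness of $\mathcal{X}^*$. The set $D:=\{d(\bm{X}',\mathcal{X}):\bm{X}'\in\mathcal{X}^*\}$ is finite, so for any $\delta\ge0$ there is $\epsilon>0$ with $(\delta,\delta+\epsilon)\cap D=\emptyset$. Therefore $B(\delta+h)=B(\delta)$ for all $h\in[0,\epsilon)$, so $\mathcal{U}_{\mathcal{P}}$ is constant on $[\delta,\delta+\epsilon)$, and right-continuity follows. In fact $\mathcal{U}_{\mathcal{P}}$ is a right-continuous step function. If one wanted to avoid finiteness, the general argument would use $B(\delta)=\bigcap_{h>0}B(\delta+h)$. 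However, the supremum over a decreasing intersection need not converge, so finiteness, or at least discreteness of the distance values, is essential. This is the main point to get right.

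For part (iv), $\mathcal{U}_{\mathcal{P}}$ is nondecreasing by (ii) and bounded above by $1$, so the limit as $\delta\to\infty$ exists by monotone convergence of real functions. Indeed, since $\mathcal{X}^*$ is finite, for $\delta\ge\max D$ we have $B(\delta)=\mathcal{X}^*$, so the limit is attained and equals the supremum of $g$ over all of $\mathcal{X}^*$.
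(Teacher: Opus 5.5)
Your proposal is correct and follows the paper's proof essentially step for step: monotonicity from the nested sets $B(\delta_1)\subseteq B(\delta_2)$, right-continuity from finiteness of the set of attained distances $\{d(\bm{X}',\mathcal{X}):\bm{X}'\in\mathcal{X}^*\}$ (so $\mathcal{U}_{\mathcal{P}}$ is constant on some interval $[\delta,\delta+\epsilon)$), and existence of $\mathcal{U}_{\mathcal{P}}^{(\infty)}$ from monotonicity together with the bound by $1$. Your extra argument in (i), that $B(0)=\mathcal{X}$ because the infimum over the finite set $\mathcal{X}$ is attained and $M$ is a metric, spells out what the paper simply calls direct from the definitions.
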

Properties (i)--(iii) are intuitive. Since $\delta$ measures the distance from the training support, the extrapolation uncertainty naturally increases with $\delta$. When $\delta=0$, evaluation occurs on the training support and therefore corresponds to interpolation, in which case the uncertainty vanishes to zero. 
While right continuity is a natural property for a metric defined on spaces of discrete sequences, property~(iii) confirms this.
Lastly,
property~(iv) establishes that $\mathcal{U}_{\mathcal{P}}^{(\infty)}$ is well defined, which serves as the object of Theorem~\ref{thm: contraction_global_extrap} below. 
Hereafter, 
we refer to $\mathcal{U}_{\mathcal{P}}^{(\infty)}$ as the \emph{global extrapolation uncertainty},
and to $\mathcal{U}_{\mathcal{P}}(\delta)$ for a finite $\delta>0$ as the \emph{local extrapolation uncertainty}.

\subsection{Extrapolation guarantees of the perturbed model class}\label{subsec: extrap_guarantee}
We delineate our theoretical results in this section. First, we define two model classes for our analysis. The first is the base model class 
$\mathcal{P}^{\textnormal{base}} = \{ P_\theta (\cdot| \bm{X}) : \bm{X} \in \mathcal{X}^*, \theta \in \Theta^{\textnormal{base}} \}$,  which serves as the foundation for our proposed approach. The second is the proposed perturbed model class (denoted by $\mathcal{P}^{\textnormal{perturb}}$), obtained by first drawing a perturbed prefix $\widetilde{\bm{X}}$ from $T(\cdot | \bm{X})$ and then sampling the next token from the base distribution $P_\theta(\cdot| \widetilde{\bm{X}})$. By the law of total probability, the distribution of each model in $\mathcal{P}^{\text{perturb}}$ can be represented as $\mathbb{E}_{\widetilde{\bm{X}}\sim T(\cdot | \bm{X})} P_\theta(\cdot| \widetilde{\bm{X}})$. Hence,
\begin{align*}
    \mathcal{P}^{\text{perturb}} = \left\{ \mathbb{E}_{\widetilde{\bm{X}}\sim T(\cdot | \bm{X})} P_\theta(\cdot| \widetilde{\bm{X}}) \,: \bm{X} \in \mathcal{X}^* , \theta \in \Theta^{\textnormal{base}} \right\} .
\end{align*}
Our theoretical analysis compares the extrapolation properties of the two model classes across three regimes for the base model: (a) it extrapolates perfectly; (b) it extrapolates, but imperfectly; and (c) it extrapolates poorly. Our first result concerns regime (a), formalized through Definition~\ref{def: dist_extrap}, under which the local extrapolation uncertainty of the base class satisfies $\mathcal{U}_{\mathcal{P}^{\textnormal{base}}}(\delta)=0$ for any $\delta>0$. 
Under the following assumption, we characterize the adaptivity of the perturbed model class.

\begin{assumption}
\label{ass: UI}
For any $P_\gamma, P_{\gamma'} \in \mathcal{P}^{\textnormal{perturb}}$ with $\mathbb{TV}_{\bm{X}}(P_\gamma, P_{\gamma'}) =0$ for all $\bm{X} \in \mathcal{X}$, their corresponding base models $P_\theta, P_{\theta'}$ also satisfy $\mathbb{TV}_{\bm{X}}(P_\theta, P_{\theta'}) =0$ for all $\bm{X} \in \mathcal{X}$.
\end{assumption}

\begin{theorem}[Adaptivity]
\label{thm: simple_extrap_feature_map}
Under Assumption~\ref{ass: UI}, if $\mathcal{P}^{\textnormal{base}}$ is extrapolable (recall Definition~\ref{def: dist_extrap}), 
then $\mathcal{P}^{\textnormal{perturb}}$ is extrapolable.
\end{theorem}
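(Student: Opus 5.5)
We have to show that $\mathcal{U}_{\mathcal{P}^{\textnormal{perturb}}}(\delta)=0$ for every $\delta\ge 0$. Fix $\delta\ge0$, a test prefix $\bm{X}'$ with $d(\bm{X}',\mathcal{X})\le\delta$, and two perturbed models $P_\gamma,P_{\gamma'}\in\mathcal{P}^{\textnormal{perturb}}$ that agree on the training support, i.e.\ $\mathbb{TV}_{\bm{X}}(P_\gamma,P_{\gamma'})=0$ for all $\bm{X}\in\mathcal{X}$. Write $P_\gamma(\cdot|\bm{X})=\mathbb{E}_{\widetilde{\bm{X}}\sim T(\cdot|\bm{X})}P_\theta(\cdot|\widetilde{\bm{X}})$, and similarly for $P_{\gamma'}$ with base parameter $\theta'$. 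It suffices to show that $\mathbb{TV}_{\bm{X}'}(P_\gamma,P_{\gamma'})=0$, since the supremum in Definition~\ref{def: dist_extrap} is then zero.

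\textbf{Step 1: transfer agreement to the base models.} By Assumption~\ref{ass: UI}, the base models satisfy $\mathbb{TV}_{\bm{X}}(P_\theta,P_{\theta'})=0$ for all $\bm{X}\in\mathcal{X}$. So the pair $(P_\theta,P_{\theta'})$ is admissible in the inner supremum defining $\mathcal{U}_{\mathcal{P}^{\textnormal{base}}}(\delta)$ for every $\delta$.

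\textbf{Step 2: use extrapolability of the base class.} Since $\mathcal{U}_{\mathcal{P}^{\textnormal{base}}}(\delta)=0$ for all $\delta\ge0$, we get $\mathbb{TV}_{\widetilde{\bm{X}}}(P_\theta,P_{\theta'})=0$ for every $\widetilde{\bm{X}}$ at finite distance from $\mathcal{X}$. Take $\delta=d(\widetilde{\bm{X}},\mathcal{X})$ for each $\widetilde{\bm{X}}$ in the support of $T(\cdot|\bm{X}')$. Here lies the one subtle point: we need every perturbed prefix to lie at finite distance from $\mathcal{X}$, and to be a legitimate input, i.e.\ an element of $\mathcal{X}^*$.
\begin{itemize}
\item Finiteness is automatic, since $\mathcal{X}^*$ is finite (sequences of length $L$ over a finite vocabulary) and $M$ is a metric, assuming $\mathcal{X}$ is nonempty.
\item Membership in $\mathcal{X}^*$ is what I would state explicitly: $T(\cdot|\bm{X}')$ is supported on $\mathcal{X}^*$, with padding or truncation to length $L$.
\end{itemize}
I expect this bookkeeping to be the main obstacle, though only notationally.

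\textbf{Step 3: convexity of total variation.} Apply Jensen's inequality, or equivalently the joint convexity of total variation under mixing with the same mixing measure:
\begin{align*}
\mathbb{TV}_{\bm{X}'}(P_\gamma,P_{\gamma'})
&=\mathbb{TV}\Big[\mathbb{E}_{\widetilde{\bm{X}}\sim T(\cdot|\bm{X}')}P_\theta(\cdot|\widetilde{\bm{X}}),\,\mathbb{E}_{\widetilde{\bm{X}}\sim T(\cdot|\bm{X}')}P_{\theta'}(\cdot|\widetilde{\bm{X}})\Big]\\
&\le \mathbb{E}_{\widetilde{\bm{X}}\sim T(\cdot|\bm{X}')}\,\mathbb{TV}_{\widetilde{\bm{X}}}(P_\theta,P_{\theta'}) .
\end{align*}
The integrand vanishes identically by Step~2, so the total variation at $\bm{X}'$ is zero. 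Taking suprema over the admissible pairs and over $\bm{X}'$ gives $\mathcal{U}_{\mathcal{P}^{\textnormal{perturb}}}(\delta)=0$ for all $\delta$, which is the claim.
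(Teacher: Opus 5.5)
Your proof is correct and is essentially the paper's argument. The paper combines Lemma~\ref{lemma: UI_imply_zero_cost} with the bound $\mathcal{U}_{\mathcal{P}^{\textnormal{perturb}}}(\delta)\le\mathcal{U}_{\mathcal{P}^{\textnormal{base}}}(\rho_T(\delta))+\mathcal{E}_{T,\mathcal{P}^{\textnormal{base}},\delta}$ of Theorem~\ref{thm: extrap_perturb}~(ii), whose proof is your convexity-of-total-variation step; Assumption~\ref{ass: UI} makes the perturbation cost vanish by taking the auxiliary base pair to be $(P_\theta,P_{\theta'})$ itself, which is what you do inline, and the only cosmetic difference is that you use base extrapolability pointwise at $\delta=d(\widetilde{\bm{X}},\mathcal{X})$ rather than at the uniform radius $\rho_T(\delta)$.
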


Assumption~\ref{ass: UI} can be viewed as an identifiability condition on the base models. The implication of Theorem~\ref{thm: simple_extrap_feature_map} is that under some identifiability conditions on the perturber, if the base LLM can extrapolate, then the perturbed LLM will preserve this property. While this result may initially seem to imply that perturbation provides no additional benefit, Theorems~\ref{thm: contraction_global_extrap}--\ref{thm: extrap_class_charac_perturb} below establish that perturbation can in fact offer guarantees beyond those available for the base model.

We also remark that even without Assumption~\ref{ass: UI}, the perturbed class $\mathcal{P}^{\textnormal{perturb}}$ may still extrapolate well, subject to an additional perturbation cost that is explicitly defined in Theorem~B.1~(i) of the Supplementary Materials.
In view of this, Theorem~\ref{thm: simple_extrap_feature_map} is a consequence when such perturbation cost is zero, which would hold if the perturbation transformation $T$ is invertible in some sense; we refer to the Supplementary Materials on how Assumption~\ref{ass: UI} can be relaxed.

We next highlight a contraction property.
To this end,
define the Dobrushin coefficient of $T$ as $\alpha(T) := \sup_{\bm{X}, \bm{X}' \in \mathcal{X}^*} \mathbb{TV} ( T(\cdot| \bm{X}), T(\cdot| \bm{X}') )$ \citep{Dobrushin1956}.
We have the following regarding the global extrapolation uncertainty $\mathcal{U}_{ \mathcal{P}}^{(\infty)}$ defined in Proposition~\ref{prop: basic_extrap}~(iii).

\begin{theorem}[Contractivity]
\label{thm: contraction_global_extrap}
Let Assumption~\ref{ass: UI} hold.
For any base model class $\mathcal{P}^{\textnormal{base}}$ and its perturbed class $\mathcal{P}^{\textnormal{perturb}}$ by $T$, 
we have $\mathcal{U}_{ \mathcal{P}^{\textnormal{perturb}}}^{(\infty)} \leq  
2 \alpha(T)\, \mathcal{U}_{ \mathcal{P}^{\textnormal{base}}}^{(\infty)}$.
\end{theorem}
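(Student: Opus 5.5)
The plan is to work pointwise with a fixed pair of perturbed models that are observationally indistinguishable on the training support. First, bound their disagreement at an arbitrary test prefix by a product of two factors: the Dobrushin coefficient $\alpha(T)$, and the worst disagreement of the underlying base models. Then take suprema.

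\textbf{Step 1 (reduction to base models).} Fix $P_\gamma,P_{\gamma'}\in\mathcal{P}^{\textnormal{perturb}}$ with $\mathbb{TV}_{\bm{X}}(P_\gamma,P_{\gamma'})=0$ for all $\bm{X}\in\mathcal{X}$, and let $P_\theta,P_{\theta'}$ be the corresponding base models. By Assumption~\ref{ass: UI}, $\mathbb{TV}_{\bm{X}}(P_\theta,P_{\theta'})=0$ for all $\bm{X}\in\mathcal{X}$. So the pair $(P_\theta,P_{\theta'})$ is admissible in the inner supremum defining $\mathcal{U}_{\mathcal{P}^{\textnormal{base}}}(\delta)$. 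Consequently, for every $\widetilde{\bm{X}}\in\mathcal{X}^*$,
\[
\mathbb{TV}_{\widetilde{\bm{X}}}(P_\theta,P_{\theta'})\le \mathcal{U}_{\mathcal{P}^{\textnormal{base}}}\bigl(d(\widetilde{\bm{X}},\mathcal{X})\bigr)\le \mathcal{U}^{(\infty)}_{\mathcal{P}^{\textnormal{base}}}.
\]
The last inequality uses monotonicity and the existence of the limit (Proposition~\ref{prop: basic_extrap}~(ii),(iv)). It also uses that $d(\widetilde{\bm{X}},\mathcal{X})<\infty$, which holds because $\mathcal{X}^*$ is finite and $\mathcal{X}\neq\emptyset$.

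\textbf{Step 2 (centering trick).} Write $f(\widetilde{\bm{X}}):=P_\theta(\cdot|\widetilde{\bm{X}})-P_{\theta'}(\cdot|\widetilde{\bm{X}})$, a signed measure on $\mathcal{V}$ with $\|f(\widetilde{\bm{X}})\|_1=2\,\mathbb{TV}_{\widetilde{\bm{X}}}(P_\theta,P_{\theta'})$. Pick any $\bm{X}_0\in\mathcal{X}$. Since the perturbed models agree at $\bm{X}_0$, we have $\sum_{\widetilde{\bm{X}}}T(\widetilde{\bm{X}}|\bm{X}_0)f(\widetilde{\bm{X}})=0$. Hence for any test prefix $\bm{X}'$,
\[
\mathbb{TV}_{\bm{X}'}(P_\gamma,P_{\gamma'})=\tfrac12\Bigl\|\sum_{\widetilde{\bm{X}}}\bigl\{T(\widetilde{\bm{X}}|\bm{X}')-T(\widetilde{\bm{X}}|\bm{X}_0)\bigr\}f(\widetilde{\bm{X}})\Bigr\|_1 .
\]
The triangle inequality bounds this by
\[
\tfrac12\sum_{\widetilde{\bm{X}}}\bigl|T(\widetilde{\bm{X}}|\bm{X}')-T(\widetilde{\bm{X}}|\bm{X}_0)\bigr|\cdot 2\sup_{\widetilde{\bm{X}}}\mathbb{TV}_{\widetilde{\bm{X}}}(P_\theta,P_{\theta'}).
\]
Since $\tfrac12\sum_{\widetilde{\bm{X}}}|T(\widetilde{\bm{X}}|\bm{X}')-T(\widetilde{\bm{X}}|\bm{X}_0)|=\mathbb{TV}(T(\cdot|\bm{X}'),T(\cdot|\bm{X}_0))\le\alpha(T)$, the whole expression is at most $2\alpha(T)\sup_{\widetilde{\bm{X}}}\mathbb{TV}_{\widetilde{\bm{X}}}(P_\theta,P_{\theta'})$. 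Step 1 then gives at most $2\alpha(T)\,\mathcal{U}^{(\infty)}_{\mathcal{P}^{\textnormal{base}}}$.

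\textbf{Step 3 (suprema and limit).} The bound is uniform in $\bm{X}'$ and in the admissible pair. Taking the double supremum therefore gives $\mathcal{U}_{\mathcal{P}^{\textnormal{perturb}}}(\delta)\le 2\alpha(T)\,\mathcal{U}^{(\infty)}_{\mathcal{P}^{\textnormal{base}}}$ for every $\delta$. Letting $\delta\to\infty$, using Proposition~\ref{prop: basic_extrap}~(iv) for the perturbed class, yields the claim.

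The main obstacle is Step 1: the perturbed prefixes $\widetilde{\bm{X}}$ range over all of $\mathcal{X}^*$, possibly including sequences of different lengths, and $\mathcal{X}^*$ is assumed to contain them. I need every such $\widetilde{\bm{X}}$ to be at finite distance from $\mathcal{X}$, so that the global uncertainty of the base class dominates. I would handle this through finiteness of $\mathcal{X}^*$ and the convention that $M$ is a genuine, finite-valued metric on it. The centering in Step 2 is what produces the factor $\alpha(T)$ rather than $1$. The factor $2$ comes from bounding the signed measure $f$ crudely in $\ell_1$, without recentering $f$.
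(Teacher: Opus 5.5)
Your proposal is correct and follows essentially the same route as the paper: center the perturbed disagreement at some $\bm{X}_0\in\mathcal{X}$, bound it by $2\,\mathbb{TV}(T(\cdot|\bm{X}'),T(\cdot|\bm{X}_0))\le 2\alpha(T)$ times the supremum of the base disagreement, and control that supremum by $\mathcal{U}^{(\infty)}_{\mathcal{P}^{\textnormal{base}}}$ via Assumption~\ref{ass: UI}. Your ordering is actually the rigorous version of the argument: you start from the perturbed pair, so the centering identity $\sum_{\widetilde{\bm{X}}}T(\widetilde{\bm{X}}|\bm{X}_0)f(\widetilde{\bm{X}})=0$ comes from the perturbed models agreeing at $\bm{X}_0$, whereas the paper infers $(Th)(\bm{X}_0)=0$ from $h(\bm{X}_0)=0$ (not valid in general, since $T(\cdot|\bm{X}_0)$ may charge prefixes outside $\mathcal{X}$) and invokes Assumption~\ref{ass: UI} only at the end.
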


Theorem~\ref{thm: contraction_global_extrap} shows that perturbation acts as a contraction operator on the global extrapolation uncertainty. In particular, whenever $\alpha(T) < 1/2$, the perturbed model class strictly improves the extrapolability of the underlying base class. At the end of this section, we use token replacement as a concrete perturbation strategy and show that it satisfies this contraction condition; see Proposition~\ref{prop: extrap_synonym}.

This theorem is particularly meaningful in regime (b), where the base model extrapolates reasonably well but remains imperfect. In this regime, it shows that the perturbed model class achieves strictly better extrapolation properties under mild conditions. It also yields an immediate adaptivity guarantee, in terms of global extrapolation uncertainty, in regime (a), where the base model extrapolates perfectly. Its upper bound, however, may not be tight in regime (c), where the base model extrapolates poorly. To address this limitation,  
the following theorem demonstrates a critical safety property for the local extrapolation uncertainty:  
the proposed framework remains robust to the failures of the base class in regime (c).

\begin{theorem}[Robustness]
\label{thm: just_perturb}
There exists a constant $C_{T,\delta}$ depending only  on $T$ and $\delta$, independent of $\mathcal{P}^{\textnormal{base}}$, such that
$
\mathcal{U}_{
    \text{\small 
    $\mathcal{P}^{\textnormal{perturb}}$
    }
    }(\delta)
    \leq
    2 C_{T,\delta}$.
\end{theorem}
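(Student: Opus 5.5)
The plan is to define $C_{T,\delta}$ explicitly as the largest total variation distance between the perturbation distribution at a test prefix and at its nearest training prefix. Concretely, set
\[
C_{T,\delta} := \sup_{\bm{X}' :\, d(\bm{X}',\mathcal{X})\le \delta}\ \inf_{\bm{X}\in\mathcal{X}} \mathbb{TV}\big(T(\cdot|\bm{X}'),\,T(\cdot|\bm{X})\big),
\]
or, if a cleaner form is preferred, the variant with the infimum restricted to $\bm{X}\in\mathcal{X}$ with $M(\bm{X},\bm{X}')\le\delta$. This quantity depends only on $T$, $\delta$ and the training support $\mathcal{X}$, and is free of $\mathcal{P}^{\textnormal{base}}$. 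Taking the support as fixed, this matches the claim.

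Fix $\bm{X}'$ with $d(\bm{X}',\mathcal{X})\le\delta$ and two perturbed models $P_\gamma, P_{\gamma'}$ that agree on $\mathcal{X}$. Write their base models as $P_\theta, P_{\theta'}$, so that $P_\gamma(\cdot|\bm{X})=\mathbb{E}_{\widetilde{\bm{X}}\sim T(\cdot|\bm{X})}P_\theta(\cdot|\widetilde{\bm{X}})$. For any $\bm{X}\in\mathcal{X}$, the triangle inequality gives
\[
\mathbb{TV}_{\bm{X}'}(P_\gamma,P_{\gamma'}) \le \mathbb{TV}_{\bm{X}'}(P_\gamma,\cdot)\big|_{\bm{X}} + \mathbb{TV}_{\bm{X}}(P_\gamma,P_{\gamma'}) + \mathbb{TV}_{\bm{X}}(\cdot,P_{\gamma'})\big|_{\bm{X}'}.
\]
More precisely, the middle term is $\mathbb{TV}[P_\gamma(\cdot|\bm{X}),P_{\gamma'}(\cdot|\bm{X})]$, which vanishes by assumption. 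The first term is $\mathbb{TV}[P_\gamma(\cdot|\bm{X}'),P_\gamma(\cdot|\bm{X})]$, and the third is the analogous term for $\gamma'$.

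The key step is a data-processing (Markov kernel contraction) bound:
\[
\mathbb{TV}\Big[\mathbb{E}_{T(\cdot|\bm{X}')}P_\theta(\cdot|\widetilde{\bm{X}}),\ \mathbb{E}_{T(\cdot|\bm{X})}P_\theta(\cdot|\widetilde{\bm{X}})\Big]\le \mathbb{TV}\big(T(\cdot|\bm{X}'),T(\cdot|\bm{X})\big).
\]
This holds because both sides are pushforwards of the two input distributions through the same kernel $\widetilde{\bm{X}}\mapsto P_\theta(\cdot|\widetilde{\bm{X}})$. On a finite space it can be checked directly by writing $\sum_v|\sum_{\tilde x}(T(\tilde x|\bm{X}')-T(\tilde x|\bm{X}))P_\theta(v|\tilde x)|\le\sum_{\tilde x}|T(\tilde x|\bm{X}')-T(\tilde x|\bm{X})|$. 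The bound holds uniformly in $\theta$, which is exactly why no property of $\mathcal{P}^{\textnormal{base}}$ enters.

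Combining the two steps gives $\mathbb{TV}_{\bm{X}'}(P_\gamma,P_{\gamma'})\le 2\,\mathbb{TV}(T(\cdot|\bm{X}'),T(\cdot|\bm{X}))$ for every $\bm{X}\in\mathcal{X}$. Taking the infimum over $\bm{X}$, and then the suprema over model pairs and over $\bm{X}'$, yields $\mathcal{U}_{\mathcal{P}^{\textnormal{perturb}}}(\delta)\le 2C_{T,\delta}$.

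The main obstacle is only definitional: choosing $C_{T,\delta}$ so that it is genuinely independent of the base class and finite. It is trivially bounded by $1$, and it is at most $\alpha(T)$ as a sanity check. No identifiability condition such as Assumption~\ref{ass: UI} is needed, since we only use agreement of the perturbed models on $\mathcal{X}$.
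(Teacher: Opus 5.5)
Your proof is correct and is essentially the paper's proof. The paper also sets $C_{T,\delta}=\eta_T(\delta)=\sup_{\bm{X}':\,d(\bm{X}',\mathcal{X})\le\delta}\inf_{\bm{X}\in\mathcal{X}}\mathbb{TV}\big(T(\cdot|\bm{X}),T(\cdot|\bm{X}')\big)$ and exploits agreement of the perturbed models at a nearest training prefix; the difference is only bookkeeping. It subtracts the vanishing term at a minimiser $\bm{X}^*$ and integrates $P_\theta-P_{\theta'}$ against $T(\cdot|\bm{X}')-T(\cdot|\bm{X}^*)$ token by token, whereas your triangle inequality plus data-processing bound works with whole distributions and does not need the infimum to be attained.
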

Theorem~\ref{thm: just_perturb} establishes an upper bound on the local extrapolation uncertainty of $\mathcal{P}^{\textnormal{perturb}}$ that is entirely agnostic to the underlying base model class. This implies that even if the base model class $\mathcal{P}^{\textnormal{base}}$ exhibits arbitrarily large local extrapolation uncertainty, the proposed perturbed model maintains a guaranteed level of robustness. This upper bound is determined solely by the properties of the perturber $T$ and the distance $\delta$, where the latter quantifies the deviation of the test sequence from the training support $\mathcal{X}$. 
We give an explicit form of the constant in Theorem~\ref{thm: just_perturb} under a concrete scenario in Proposition~\ref{prop: extrap_synonym} (deferred to the end of this section).

Notably, Theorem~\ref{thm: just_perturb} holds without any conditions.
The main idea behind it is that the perturber $T$ ``smooths'' the next-token distribution, conditional on some prefix $\bm{X}$, by averaging over neighbouring prefixes $\widetilde{\bm{X}}$. Consequently, if two perturbed models $P_\gamma, P_{\gamma'} \in \mathcal{P}^{\textnormal{perturb}}$ agree on the training support in the sense that $\mathbb{TV}_{\bm{X}}(P_\gamma, P_{\gamma'}) =0$ for any $\bm{X} \in \mathcal{X}$, then their predictions cannot differ substantially at nearby unseen prefixes.

Our next theorem establishes a stronger result: an appropriately selected perturber not only offers robustness passively, but can actively promote the extrapolability of the class of perturbed models.

\begin{assumption}
\label{ass: SP_perturb}
For any $\delta>0$ and any $\bm{X}'$ with $d(\bm{X}', \mathcal{X}) \leq \delta$, the perturbation process is such that $T(\cdot| \bm{X}') = T(\cdot| \bm{X}_0)$ for some $\bm{X}_0\in \mathcal{X}$.
\end{assumption}

\begin{theorem}[Extrapolability]
\label{thm: extrap_class_charac_perturb}
Under Assumption~\ref{ass: SP_perturb}, we have $\mathcal{U}_{
    \text{\small 
    $\mathcal{P}^{\textnormal{perturb}}$
    }
    }(\delta)=0$ for any $\delta>0$, i.e., $\mathcal{P}^{\textnormal{perturb}}$ is extrapolable.
\end{theorem}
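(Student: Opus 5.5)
The argument is a direct unwinding of Definition~\ref{def: dist_extrap}, using the fact that every model in $\mathcal{P}^{\textnormal{perturb}}$ depends on its input prefix only through the perturbation law $T(\cdot|\bm{X})$. Fix $\delta>0$, a test prefix $\bm{X}'$ with $d(\bm{X}',\mathcal{X})\le\delta$, and two perturbed models $P_\gamma,P_{\gamma'}\in\mathcal{P}^{\textnormal{perturb}}$. Write them as
\[
P_\gamma(\cdot|\bm{X})=\mathbb{E}_{\widetilde{\bm{X}}\sim T(\cdot|\bm{X})}P_\theta(\cdot|\widetilde{\bm{X}}),
\qquad
P_{\gamma'}(\cdot|\bm{X})=\mathbb{E}_{\widetilde{\bm{X}}\sim T(\cdot|\bm{X})}P_{\theta'}(\cdot|\widetilde{\bm{X}}),
\]
for base parameters $\theta,\theta'\in\Theta^{\textnormal{base}}$. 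Assume they are observationally indistinguishable on the training support, i.e.\ $\mathbb{TV}_{\bm{X}}(P_\gamma,P_{\gamma'})=0$ for all $\bm{X}\in\mathcal{X}$. The goal is to show $\mathbb{TV}_{\bm{X}'}(P_\gamma,P_{\gamma'})=0$. Taking suprema then gives $\mathcal{U}_{\mathcal{P}^{\textnormal{perturb}}}(\delta)=0$.

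\textbf{Step 1: reduce to a training prefix.} The key observation is that $P_\gamma(\cdot|\bm{X})$ is a functional of the conditional law $T(\cdot|\bm{X})$ and of $\theta$ alone. By Assumption~\ref{ass: SP_perturb} there is some $\bm{X}_0\in\mathcal{X}$ with $T(\cdot|\bm{X}')=T(\cdot|\bm{X}_0)$. Hence
\[
P_\gamma(\cdot|\bm{X}')=\mathbb{E}_{\widetilde{\bm{X}}\sim T(\cdot|\bm{X}_0)}P_\theta(\cdot|\widetilde{\bm{X}})=P_\gamma(\cdot|\bm{X}_0),
\]
and the same identity holds for $\gamma'$.

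\textbf{Step 2: conclude.} Combining Step 1 for both models,
\[
\mathbb{TV}_{\bm{X}'}(P_\gamma,P_{\gamma'})=\mathbb{TV}_{\bm{X}_0}(P_\gamma,P_{\gamma'})=0,
\]
where the last equality holds because $\bm{X}_0\in\mathcal{X}$ and the two models agree on the training support.

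\textbf{Main point to check.} The only delicate issue is that the argument is carried out for the perturbed class directly; it never passes to base models. This matters because no identifiability condition such as Assumption~\ref{ass: UI} is assumed here. The indistinguishability hypothesis must therefore be used exactly as stated, on the mixtures $P_\gamma,P_{\gamma'}$, and Step 2 does only that.

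The expectation in Step 1 is a finite sum, or a well-defined integral over a countable space of perturbed prefixes of varying length, so equality of the laws $T(\cdot|\bm{X}')$ and $T(\cdot|\bm{X}_0)$ transfers directly to equality of the mixtures. Since $\bm{X}'$, $\gamma$ and $\gamma'$ were arbitrary, the double supremum in Definition~\ref{def: dist_extrap} vanishes for every $\delta>0$. Together with Proposition~\ref{prop: basic_extrap}(i) for $\delta=0$, this shows $\mathcal{P}^{\textnormal{perturb}}$ is extrapolable.
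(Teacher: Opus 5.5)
Your proof is correct and is essentially the paper's argument. The paper deduces this theorem from its general Theorem~\ref{thm: extrap_class_charac}, whose proof is exactly your Steps 1--2 (the mixture at $\bm{X}'$ equals the mixture at some $\bm{X}_0\in\mathcal{X}$ because the perturbation laws coincide, and the two models agree at $\bm{X}_0$); it is phrased there for a hierarchical representation in which each model carries its own perturbation process $T_\beta$, whereas here $T$ is shared by the whole class, so a single $\bm{X}_0$ serves both models, as you use.
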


Theorem~\ref{thm: extrap_class_charac_perturb} strengthens the conclusion of Theorem \ref{thm: just_perturb} under a mild condition explained later. This theorem guarantees that the pretrained LLM under our framework can extrapolate perfectly for token sequences that are unseen in training, regardless of whether the base model class falls into regime (a), (b) or (c). 

The rationale behind this result is captured by Assumption~\ref{ass: SP_perturb}. Intuitively, a perturber satisfying this condition enables the model to map unfamiliar, out-of-support prefixes $\bm{X}'$ to familiar sequences $\bm{X}_0$ within the training support $\mathcal{X}$. This ensures that the model's predictions for $\bm{X}'$ are as reliable as those for $\bm{X}_0$. In essence, perturbation transforms an extrapolation task into an interpolation task by bridging the gap between the unseen $\bm{X}'$ and the seen $\bm{X}_0$.

Theorem~\ref{thm: extrap_class_charac_perturb} highlights the fundamental role of a well-designed perturber. In contrast to Theorem~\ref{thm: simple_extrap_feature_map}, which achieves perfect extrapolation through the extrapolability of the base model class, it provides a different route based entirely on the properties of the perturber.  
Together, Theorems~\ref{thm: simple_extrap_feature_map} and \ref{thm: extrap_class_charac_perturb} yield a double robustness property, which we summarize in the following corollary.

\begin{corollary}[Double robustness]\label{coro:dr}
    When either Assumption \ref{ass: UI} holds and $\mathcal{P}^{\textnormal{base}}$ is extrapolable, or Assumption~\ref{ass: SP_perturb} holds, $\mathcal{P}^{\textnormal{perturb}}$ becomes extrapolable. 
\end{corollary}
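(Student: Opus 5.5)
The corollary follows by a direct case split on which of the two hypotheses holds, with each case handled by one earlier theorem. No new argument is needed beyond checking that the hypotheses match exactly.

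\emph{Case 1: Assumption~\ref{ass: UI} holds and $\mathcal{P}^{\textnormal{base}}$ is extrapolable.} By Definition~\ref{def: dist_extrap}, extrapolability of $\mathcal{P}^{\textnormal{base}}$ means $\mathcal{U}_{\mathcal{P}^{\textnormal{base}}}(\delta)=0$ for all $\delta\ge 0$. This, together with Assumption~\ref{ass: UI}, is precisely the hypothesis of Theorem~\ref{thm: simple_extrap_feature_map}. That theorem gives $\mathcal{U}_{\mathcal{P}^{\textnormal{perturb}}}(\delta)=0$ for every $\delta\ge0$, so $\mathcal{P}^{\textnormal{perturb}}$ is extrapolable.

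\emph{Case 2: Assumption~\ref{ass: SP_perturb} holds.} Theorem~\ref{thm: extrap_class_charac_perturb} gives $\mathcal{U}_{\mathcal{P}^{\textnormal{perturb}}}(\delta)=0$ for every $\delta>0$. Definition~\ref{def: dist_extrap} also requires the case $\delta=0$, which is supplied by Proposition~\ref{prop: basic_extrap}(i): $\mathcal{U}_{\mathcal{P}}(0)=0$ for any model class. Hence $\mathcal{U}_{\mathcal{P}^{\textnormal{perturb}}}(\delta)=0$ for all $\delta\ge0$, and $\mathcal{P}^{\textnormal{perturb}}$ is extrapolable in this case too.

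The only point that needs attention is this mismatch of ranges in Case 2: Theorem~\ref{thm: extrap_class_charac_perturb} covers $\delta>0$, whereas the definition of extrapolability requires $\delta\ge0$. The interpolation property closes that gap. Apart from this, the proof is simply the disjunction of the two cases.
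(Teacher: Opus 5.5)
Your proposal is correct and matches the paper's own proof. The paper likewise splits into the two cases and applies Theorem~\ref{thm: simple_extrap_feature_map} to the first and Theorem~\ref{thm: extrap_class_charac_perturb} to the second. Your use of Proposition~\ref{prop: basic_extrap}(i) to cover $\delta=0$ is a valid bit of extra care that the paper leaves implicit.
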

In summary, perturbation is adaptive when the base model already extrapolates perfectly, in the sense that it preserves perfect extrapolability and incurs no loss (Theorem \ref{thm: simple_extrap_feature_map}). At the same time, when the base model is not perfectly extrapolable, a suitably designed perturber can improve its efficiency (Theorem \ref{thm: contraction_global_extrap}), provide robustness when the base model extrapolates poorly (Theorem~\ref{thm: just_perturb}), and even recover perfect extrapolation (Theorem~\ref{thm: extrap_class_charac_perturb}). Taken together, these results reinforce the main takeaway of this work: \ul{\textit{perturbation is all you need for extrapolating your language model}}.

To conclude this section, we illustrate one practical example under which Assumption~\ref{ass: SP_perturb} is fulfilled. According to \citet{Satoetal2015}, it is reasonable to assume that token sequences are linked to multiple concrete source domains before subsequent language generation, and several sequences belong to the same concrete source domain. 
Formally put, let $\{D_1, \dots, D_d\}$ be a partition of $\mathcal{X}^*$ with $\mathcal{X}^* = \bigcup_{i=1}^d D_i$. For every $i$, let $Q_i$ be a probability distribution on perturbed prefixes, and suppose $T(\cdot|\bm{X}) = Q_i$ for every $\bm{X} \in D_i$.
Then Assumption~\ref{ass: SP_perturb} holds as long as the training support $\mathcal{X}$ consists of at least one sequence from every $D_i$, $i=1,\dots,d$.

\begin{proposition}[Extrapolability of token replacement]\label{prop: extrap_synonym}
Let $\mathcal{X}^*$ be equipped with Hamming distance,
and $Q_0$ be a specified distribution over the vocabulary $\mathcal{V}$\footnote{Note that $\mathcal{V}$ may include a ``null'' token and drawing this token represents deleting the original token.}.
Suppose the perturbation proceeds as follows.
\begin{itemize}
    \item [] Let $\ell \in [1,L]$ be given.
    Independently for the last $\ell$ tokens in input sequence $\bm{X}=(X_1, \dots, X_L)$: with probability $\beta\in (0,1)$, replace each $X_l$ by a random draw from $Q_0$; otherwise leave it intact.
\end{itemize}
Then with the notations in Theorems~\ref{thm: contraction_global_extrap} and \ref{thm: just_perturb}, we have 
$\alpha(T) \leq \min\{ (1-\beta)\ell, 1\}$ and
$C_{T,\delta} = \min\{ (1-\beta)\delta, 1\}$.
Moreover, if $Q_0$ has support outside of $\mathcal{X}$, then the corresponding class $\mathcal{P}^{\textnormal{perturb}}$ satisfies Assumption~\ref{ass: UI}.
\end{proposition}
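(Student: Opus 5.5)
The proof splits into three parts: the Dobrushin bound on $\alpha(T)$, the robustness constant $C_{T,\delta}$, and Assumption~\ref{ass: UI}. Each part uses the product structure of $T$. Write $T(\cdot|\bm{X})=\delta_{\bm{X}_{1:L-\ell}}\otimes\bigotimes_{l=L-\ell+1}^{L} K(\cdot|X_l)$, where $K(\cdot|x)=(1-\beta)\delta_x+\beta Q_0$ is the single-token kernel.

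\textbf{Bounding $\alpha(T)$.} For one token, $\mathbb{TV}(K(\cdot|x),K(\cdot|x'))=(1-\beta)\mathbb{TV}(\delta_x,\delta_{x'})\le 1-\beta$, because the $\beta Q_0$ parts cancel. Total variation is subadditive under products of independent coordinates, which follows from a coupling or a telescoping hybrid argument. Summing over the $\ell$ perturbed coordinates gives at most $(1-\beta)\ell$. The trivial bound $\mathbb{TV}\le1$ supplies the minimum. There is one caveat. If $\bm{X}$ and $\bm{X}'$ differ in the first $L-\ell$ coordinates, the kernels put mass on disjoint sets and the TV equals $1$. In that case the claimed bound can only be read as the minimum with $1$, unless $\ell=L$. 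I will state the bound for sequences agreeing outside the last $\ell$ positions, or take $\ell=L$, and note that the minimum with $1$ covers the rest.

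\textbf{The constant $C_{T,\delta}$.} I will first prove a general form of Theorem~\ref{thm: just_perturb}. Take $P_\gamma,P_{\gamma'}$ that agree on $\mathcal{X}$, a test point $\bm{X}'$, and $\bm{X}_0\in\mathcal{X}$ with $M(\bm{X}',\bm{X}_0)\le\delta$. The finite space means the infimum is attained. By the triangle inequality,
\[
\mathbb{TV}_{\bm{X}'}(P_\gamma,P_{\gamma'})\le \mathbb{TV}_{\bm{X}'}(P_\gamma,\cdot)_{\bm{X}_0}+0+\mathbb{TV}_{\bm{X}_0}(\cdot,P_{\gamma'})_{\bm{X}'} .
\]
Here each outer term compares the same model at $\bm{X}'$ and at $\bm{X}_0$. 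Each such term is $\mathbb{TV}\big(\mathbb{E}_{T(\cdot|\bm{X}')}P_\theta,\mathbb{E}_{T(\cdot|\bm{X}_0)}P_\theta\big)\le \mathbb{TV}(T(\cdot|\bm{X}'),T(\cdot|\bm{X}_0))$, by the data-processing inequality for the Markov kernel $P_\theta$. This yields $C_{T,\delta}=\sup_{M(\bm{X},\bm{X}')\le\delta}\mathbb{TV}(T(\cdot|\bm{X}),T(\cdot|\bm{X}'))$. For Hamming distance at most $\delta$, the tokenwise calculation above gives at most $(1-\beta)\delta$ on the differing coordinates, capped at $1$, with the same caveat about differences in the unperturbed prefix.

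\textbf{Assumption~\ref{ass: UI}.} Suppose $\mathbb{E}_{T(\cdot|\bm{X})}P_\theta=\mathbb{E}_{T(\cdot|\bm{X})}P_{\theta'}$ for all $\bm{X}\in\mathcal{X}$. I want to recover $P_\theta(\cdot|\bm{X})=P_{\theta'}(\cdot|\bm{X})$ on $\mathcal{X}$. The mixture splits into the atom at $\bm{X}$ itself, with weight $(1-\beta)^\ell$, plus perturbed sequences, each of which has at least one token drawn from $Q_0$. The hypothesis that $Q_0$ has support outside $\mathcal{X}$ is meant to make these perturbed sequences land off $\mathcal{X}$. I will read it as saying that every perturbed sequence other than $\bm{X}$ lies outside $\mathcal{X}$, and that the base class is such that models are unconstrained there. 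The difference of the mixtures is then $(1-\beta)^\ell[P_\theta-P_{\theta'}](\cdot|\bm{X})$ plus off-support contributions.

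Forcing those contributions to vanish is the main obstacle. Agreement of the averages does not by itself separate the atom from the off-support terms. I will try an inclusion--exclusion (Möbius inversion) over subsets of replaced positions, using the fact that $K$ is invertible as a linear operator when $\beta<1$. If that fails, I will make the support condition precise enough that the off-support terms coincide for the two models, and then divide by $(1-\beta)^\ell>0$.
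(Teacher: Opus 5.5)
\textbf{Bounds on $\alpha(T)$ and $C_{T,\delta}$.} Your argument here is essentially the paper's.
\begin{itemize}
\item The paper couples $T(\cdot\mid\bm X)$ and $T(\cdot\mid\bm X')$ by sharing the Bernoulli indicators and the $Q_0$ draws, then uses $1-\beta^{|Z|}\le(1-\beta)|Z|$. Your tokenwise identity plus subadditivity of $\mathbb{TV}$ over products gives the same bound.
\item The paper then takes $C_{T,\delta}=\eta_T(\delta)$ from Theorem~\ref{thm: extrap_perturb}~(i). Your triangle-inequality and data-processing derivation of the factor $2$ is a cleaner substitute for that signed-measure computation.
\end{itemize}
Your caveat about $\ell<L$ is correct and exposes an error in the statement. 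The paper's coupling runs over all $L$ positions and then asserts $M\le\ell$.

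However, the cap at $1$ does not ``cover the rest''. If $\ell<L$, two sequences that differ in position $1$ have disjointly supported perturbation laws. Hence $\alpha(T)=1$, which contradicts $\min\{(1-\beta)\ell,1\}$ whenever $(1-\beta)\ell<1$. Similarly, take $\mathcal X=\{\bm X\}$ and let $\bm X'$ differ from $\bm X$ only in position $1$. Then $\eta_T(1)=1>1-\beta$. For a rich base class, even $\mathcal U_{\mathcal P^{\textnormal{perturb}}}(1)\le 2(1-\beta)$ fails once $\beta>1/2$. So you need $\ell=L$, or a restriction to sequences sharing the unperturbed prefix; a cap alone is not enough.

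\textbf{The claim about Assumption~\ref{ass: UI}.} This is the real gap. Your proposal does not close it, and it cannot be closed: the claim is false without a hypothesis on the base class.

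Take $L=\ell=1$, $\mathcal V=\{a,b,m\}$, $Q_0=\delta_m$, $\mathcal X=\{a\}$ and $\beta=1/2$. Then $T(\cdot\mid a)=\tfrac12\delta_a+\tfrac12\delta_m$ and $m\notin\mathcal X$. Set $P_\theta(\cdot\mid a)=\delta_a$, $P_\theta(\cdot\mid m)=\delta_b$, $P_{\theta'}(\cdot\mid a)=\delta_b$ and $P_{\theta'}(\cdot\mid m)=\delta_a$. Both perturbed models equal $\tfrac12(\delta_a+\delta_b)$ at $a$, yet $\mathbb{TV}_a(P_\theta,P_{\theta'})=1$. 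For general $\beta$, work around interior distributions: use discrepancy $\varepsilon(\delta_a-\delta_b)$ at $a$ and $-\tfrac{1-\beta}{\beta}\varepsilon(\delta_a-\delta_b)$ at $m$.

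Your two fallbacks both fail.
\begin{itemize}
\item The M\"obius-inversion idea fails for exactly this reason. Inverting $K$ would need $TP_\theta=TP_{\theta'}$ at the sequences with replaced tokens. Those lie off $\mathcal X$, where nothing is assumed.
\item No condition on $Q_0$ alone can help. Once replaced sequences leave $\mathcal X$, the restriction of $T(\cdot\mid\bm X)$ to $\mathcal X$ is just $(1-\beta)^\ell\delta_{\bm X}$ for every $\bm X\in\mathcal X$. So for a sufficiently rich base class, some nonzero discrepancy on $\mathcal X$ can always be absorbed at off-support atoms. Your reading ``models are unconstrained there'' is precisely the case where Assumption~\ref{ass: UI} fails.
\item Forcing the off-support terms to coincide is a hypothesis on the base class, not on $Q_0$. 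Principled versions are the spanning condition of Theorem~\ref{thm: extrap_feature_map} or the identifiability condition of Theorem~\ref{thm: extrap_RKHS}.
\end{itemize}
The paper's proof has the same hole. It asserts in one line that agreement of the mixtures forces agreement of the atomic parts, and it writes the atom weight as $1-\beta$ instead of $(1-\beta)^\ell$.
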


\begin{remark}
Note that Assumption~\ref{ass: SP_perturb} can be readily relaxed.
In particular, suppose that for a given $\delta>0$ and any $\bm{X}'$ with $d(\bm{X}', \mathcal{X}) \leq \delta$, the perturbation process is such that $T(\cdot| \bm{X}') = T(\cdot| \bm{X}_0)$ for some $\bm{X}_0\in \mathcal{S} \subseteq \mathcal{X}$.
Then it holds that $\mathcal{U}_{\text{$\mathcal{P}^{\textnormal{perturb}}$}}^{\mathcal{S}}(\delta)=0$, where $\mathcal{U}_{\text{$\mathcal{P}^{\textnormal{perturb}}$}}^{\mathcal{S}}$ denotes the local extrapolation uncertainty from Definition~\ref{def: dist_extrap} with $\mathcal{X}$ replaced by $\mathcal{S}$.
With $\mathcal{S}$ arbitrarily small, the assumption on extrapolation is now more realistic, at the cost of how well we quantify the local extrapolation uncertainty.
\end{remark}

\begin{remark}\label{remark: extension}
For our setup in Definition~\ref{def: dist_extrap}, the total variation distance can be replaced by any distance between two measures, such as the Wasserstein distance, 
and similar results may be developed, provided the corresponding perturbation satisfies suitable regularity conditions.
In addition, Theorems~\ref{thm: simple_extrap_feature_map}--\ref{thm: extrap_class_charac_perturb}
can hold for general distributions (i.e., not necessarily for discrete random variable), defined on any measurable set that is not necessarily finite.
It is worth highlighting that all our theoretical guarantees developed are agnostic to both model architectures and estimation procedures,
independently of the specific parametrization or optimization algorithm used in practice.
Under additional regularity conditions, consistency and asymptotic theory for estimators in the perturbed model can be developed by formulating its training as an M-estimation problem with an integrated likelihood, see \cite{cen2026learning} for instance.
These arguments are largely standard and depend on the particular model parametrization, we thus choose not to pursue this line of analysis.
\end{remark}


\section{Simulation study}\label{sec:simulation}

In this section, we set the ground-truth language model $P^*$ to a bi-gram language model to generate synthetic datasets for evaluation. Specifically, we set $P^*$ to be such that $P^*(X_t|\bm{X}_{<t}) = P^*(X_t|X_{t-1})$. As a result, the synthetic data distribution is fully characterized by an initial token distribution $\pi \in \mathbb{R}^{|\mathcal{V}|}$ and a token-to-token transition matrix $M \in \mathbb{R}^{|\mathcal{V}| \times |\mathcal{V}|}$, where $|\mathcal{V}|$ denotes the vocabulary size. We set $\pi$ to a uniform distribution, and draw each row of the transition matrix $M$ independently from a Dirichlet distribution with a concentration parameter of $0.5$. We sample 500 sequences from $P^{*}$ to form a training dataset $\mathcal{D}$, where each sequence has a maximum length of 10.

To estimate $P^*$, we parametrize $P_\theta$ via a neural bi-gram language model proposed by \citet{zhang2023mixce} and learn the model parameters via MLE. In detail, each token is first mapped to a $d$-dimensional continuous vector as one row in an embedding matrix $\mathbf{E} \in \mathbb{R}^{|\mathcal{V}| \times d}$. Given the embedding vector $E_{i-1}$ of the previous token $X_{i-1}$, the probability of next token is predicted via a two-layer neural network:
\begin{align*}
    P_{\theta}(X_i | X_{i-1}) =
    \overbrace{
        \textup{SoftMax}\Big(
        \mathbf{W}_2
        \underbrace{
            \textup{Dropout}\!\left(
            \textup{ReLU}(\mathbf{W}_1 E_{i-1} + \mathbf{b}_1)
            \right)
        }_{\textup{the first layer}}
        + \mathbf{b}_2
        \Big)
    }^{\textup{the second layer}},
\end{align*}
where the parameters $\theta$ include $\mathbf{W}_1$, $\mathbf{W}_2$, $\mathbf{b}_1$, and $\mathbf{b}_2$. We fix the embedding dimension as $d=50$ and the dropout rate as $0.1$ when estimating $\theta$. After obtaining the estimated $\theta$, we feed the embedding matrix $\mathbf{E}$ into $P_{\theta}$ to directly obtain the estimated transition matrix. 

Our procedure relies on a perturbation intensity parameter that controls the variability of the perturbation. Larger values of this parameter result in more tokens in the input prefix being replaced to generate perturbed samples. In the extreme case where the parameter is set to zero, the procedure reduces to standard pretraining without perturbation. 
More details on $P_{\theta}$ and our training procedures are deferred to Section \ref{subsec:config}. Since $P_\theta$ yields an estimated transition matrix $\widehat{M}$ and the oracle matrix $M$ is known, we evaluate the model's extrapolation capability by computing mean absolute error (MAE) between $\widehat{M}$ and $M$ over token pairs whose first elements are absent from the training set $\mathcal{D}$.

\begin{figure}[t]
    \centering
    \includegraphics[width=0.9\linewidth]{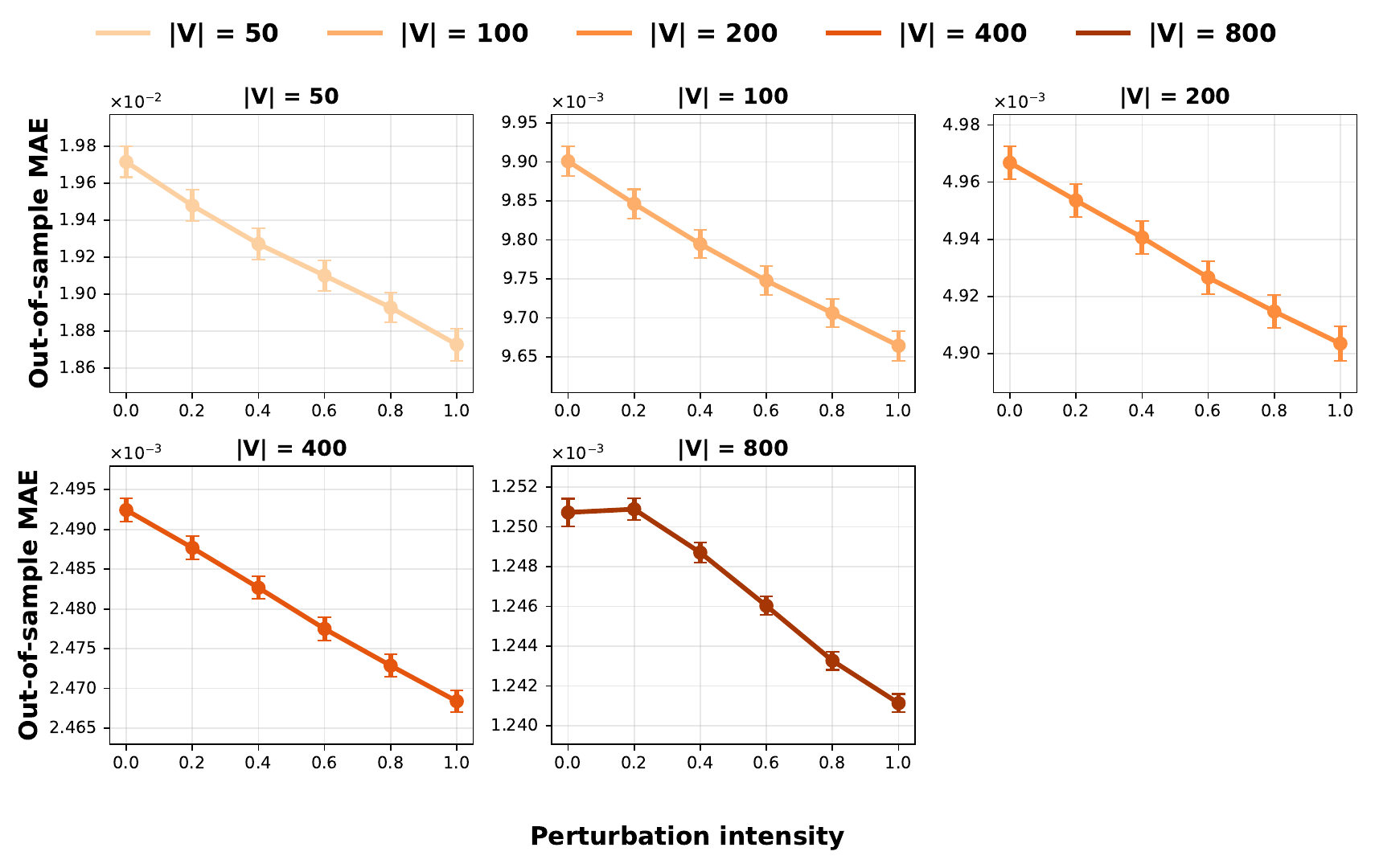}
    \caption{MAE of estimated transition matrices on unobserved token pairs with varying perturbation intensities. From top-left to bottom-right, the vocabulary size $|\mathcal{V}|$ increases from 50 to 800; a smaller $|\mathcal{V}|$ represents a dense data regime where training samples provide sufficient coverage. Results reported are mean MAEs $\pm$ 2 standard errors, aggregated over 100 independent replications.
    }
    \label{fig:sim}
\end{figure}

Figure~\ref{fig:sim} visualizes the MAEs of our learned models when varying the perturbation intensity parameter and the number of tokens $|\mathcal{V}|$ in the vocabulary. 
The top-left panel of Figure~\ref{fig:sim} represents a regime with a small vocabulary size, where the training set $\mathcal{D}$ sufficiently covers the text space. 
As $|\mathcal{V}|$ increases, our approach significantly reduces the MAE compared to the classical approach, and the improvement becomes increasingly pronounced as the perturbation intensity increases. These findings empirically verify that semantic perturbation can significantly bolster the extrapolation capabilities of LLMs.

\section{Real data analyses}\label{sec:real-world}

\subsection{A real-world application on language modeling}

In this subsection, we follow the experiment setup in \citet{ren2024emo} and use the WikiText-2 dataset \citep{merity2017pointer} for training. 
Details about the setup and data are reported in Section~\ref{subsec:config}.
We evaluate the learned model's performance on three \textit{out-of-sample} datasets: (i) WikiText-103, which is more than 50 times larger than WikiText-2; (ii) the test split of the WebText dataset released by OpenAI \citep{radford2019language}; and (iii) WritingPrompts, drawn from the writing prompts forum on Reddit \citep{fan2018hierarchical}. We also use the WikiText-2 test set to assess the \textit{in-sample} performance of the learned model.

We implement our method as described in Section~\ref{sec: method}, using random insertion to generate perturbations; see more variants and their results in the Supplementary Materials. 
We compare our method against five baseline algorithms: standard training by MLE, the Brier score maximizing, MixCE \citep{zhang2023mixce}, total variation distance minimization \citep[TVD,][]{ji2023tailoring}, and Earth Mover’s Distance optimization \citep[EMO,][]{ren2024emo}. For a fair comparison, all baseline methods use the same model architecture described below. Our approaches are built on the same base model architecture, with the addition of perturbation.

We assess the performance of the various training objectives across three representative decoder-only Transformer architectures: OPT-125M \citep{zhang2022opt}, GPT-2 \citep{radford2019language}, and GPT-Neo-1.3B \citep{gptneo}. 
The implementation details are deferred to Section \ref{subsec:config}. 
We use Mauve \citep{pillutla2023mauve} as our primary evaluation criterion, which measures the alignment between model-generated completions and human-written text by computing the area under the KL-divergence frontier and is widely adopted in the literature. In addition, we report the ROUGE-1 score \citep{lin2004rouge}, which rewards models for maintaining high $1$-gram overlap.

\begin{table*}[t]
\centering
\normalfont
\setlength{\tabcolsep}{3.5pt}
\caption{Results for various combinations of models and datasets, ROUGE-1 is written as R-1 for short. The best scores for each metric are highlighted in \text{bold}.
Results of our methods are denoted by ``MLE (ours)''. The largest standard errors for Mauve and R-1 are 0.0481 and 0.001, respectively.
}
\label{tab:language_model_multi}
\begin{tabular}{llcccccccc}
\toprule
\multirow{2}{*}{Model} & \multirow{2}{*}{Method} & \multicolumn{2}{c}{WikiText-2} & \multicolumn{2}{c}{WikiText-103} & \multicolumn{2}{c}{WebText} & \multicolumn{2}{c}{WritingPrompts} \\
 &  & Mauve & R-1 & Mauve & R-1 & Mauve & R-1  & Mauve & R-1  \\
\midrule
\multirow{7}{*}{OPT-125M} & MLE & 0.83 & 0.38 &  0.80 & 0.38  & 0.44 & 0.37 &  0.11 & 0.36  \\
 & MixCE & 0.77 & 0.38 &  0.79 & 0.38  & 0.46 & 0.37  & 0.10 & 0.36  \\
 & TVD & 0.78 & 0.38  & 0.77 & 0.38 & 0.40 & 0.37 & 0.10 & 0.36  \\
 & Brier & 0.71 & 0.37  & 0.67 & 0.37  & 0.40 & 0.36  & 0.11 & 0.36  \\
 & EMO & 0.80 & 0.38  & 0.79 & 0.38  & 0.49 & 0.37  & 0.10 & 0.36  \\
 & MLE (ours) & \textbf{0.90} & \textbf{0.41}  & \textbf{0.87} & \textbf{0.41}  & \textbf{0.56} & \textbf{0.39}  & \textbf{0.13} & \textbf{0.38}  \\
\midrule
\multirow{7}{*}{GPT-2} & MLE & 0.74 & 0.37  & 0.74 & 0.37  & 0.47 & 0.37  & 0.10 & 0.36  \\
 & MixCE & 0.77 & 0.38  & 0.77 & 0.38 & 0.48 & 0.37  & 0.10 & 0.36  \\
 & TVD & 0.75 & 0.37  & 0.75 & 0.37  & 0.45 & 0.37  & 0.09 & 0.36 \\
 & Brier & 0.71 & 0.36  & 0.74 & 0.36 & 0.49 & 0.36  & 0.11 & 0.36  \\
 & EMO & 0.82 & 0.39  & 0.82 & 0.39  & \textbf{0.58} & 0.38  & 0.13 & 0.37  \\
 & MLE (ours) & \textbf{0.89} & \textbf{0.40}  & \textbf{0.87} & \textbf{0.40}  & 0.57 & \textbf{0.39}  & \textbf{0.17} & \textbf{0.38} \\
\midrule
\multirow{7}{*}{GPT-Neo-1.3B} & MLE & 0.82 & 0.39  & 0.82 & 0.39  & 0.54 & 0.38  & 0.14 & 0.36  \\
 & MixCE & 0.79 & 0.39  & 0.79 & 0.39 & 0.53 & 0.38  & 0.14 & 0.37  \\
 & TVD & 0.81 & 0.39  & 0.81 & 0.39  & 0.53 & 0.38 & 0.14 & 0.37  \\
 & Brier & 0.79 & 0.38  & 0.79 & 0.38  & 0.51 & 0.37  & 0.13 & 0.36  \\
 & EMO & 0.85 & 0.38  & 0.85 & 0.38  & 0.56 & 0.36  & 0.13 & 0.35  \\
 & MLE (ours) & \textbf{0.87} & \textbf{0.41}  & \textbf{0.87} & \textbf{0.41}  & \textbf{0.58} & \textbf{0.39}  & \textbf{0.17} & \textbf{0.38} \\
\bottomrule
\end{tabular}
\end{table*}

The results are summarized in Table~\ref{tab:language_model_multi}. 
Our proposed method attains the highest ROUGE-1 scores in all reported combinations, and the highest Mauve score in all but GPT-2 on WebText, where our Mauve score is comparable to, but slightly below, that of EMO.
The results on the WikiText-2 test set show that our approach is also beneficial in interpolation, while those 
on WebText and WritingPrompts demonstrate its superior out-of-distribution extrapolation capability. 
These findings provide strong empirical evidence for the effectiveness of incorporating perturbations into training. 
We note that our improvements in ROUGE-1 scores are relatively minor, which is not too surprising given that the ROUGE-1 score measures 1-gram overlap and thus assesses how well the prediction matches lexical rather than semantic meanings. Thus, it favours verbose and repetitive predictions, which might be overly simple.

Lastly,
we showcase some examples of the generated texts. Table~\ref{tab:sample_human_140} presents the results of different methods by training GPT-2 on WikiText-2, with WebText being the test set. It can be seen that baseline methods generate some unexpected punctuations in the texts. In contrast, our methods generate more coherent texts.

\definecolor{lightgray}{RGB}{245, 245, 245}
\definecolor{wrongred}{RGB}{231, 76, 60}
\definecolor{headerblue}{HTML}{1F5AA6}
\definecolor{correctgreen}{HTML}{1B9E77} 

\begin{table*}[htbp]
\centering
\scriptsize
\renewcommand{\arraystretch}{1.2}
\caption{Examples of texts generated by different models when tested on the WebText dataset.}
\label{tab:sample_human_140}
\begin{tabular}{>{\raggedright\arraybackslash}p{2.5cm} p{11cm}}
\toprule
\rowcolor{headerblue!30}
\multicolumn{2}{c}{\textbf{Human Source}} \\
\midrule
\textbf{Input text} & Clinton talks about her time of ``reflection'' during sick days Hillary Clinton returned to the campaign trail Thursday afternoon, \\


\midrule
\rowcolor{headerblue!10}
\multicolumn{2}{c}{\textbf{Predictions}} \\
\midrule

\rowcolor{lightgray} \textbf{Ground truth} & debuting some new intro music and telling the crowd that her sick days allowed her a chance to ``reconnect with what this whole campaign is about." The former secretary of state, who took the stage to James Brown's ``I Feel Good," spent the beginning of the week at her home in Chappaqua, New York, after being diagnosed late last week with pneumonia.  \\

\midrule

\textbf{MLE}   & \textit{in Arkansas. Noting Sanders' performance in the Senate, she took a wide laugh. Clinton's presidential campaign ``lifted the veil of hypocrisy'' in the long days following the Iowa caucuses. In an earnings call to Goldman Sachs executives in early November, Clinton remarked that networking with Republicans ``would be tough, but James [ Risen ] is an important role model in ... .''} \\

\textbf{MixCE} & \textit{and the senator polices the stage and promises everyone that the thoughts of her former colleagues – not to mention more Youngstown fans – will continue to serve her well. The night before, the Democratic National Convention extended south @-@ west to Boston, with an extension from Boston to the Long Branch City Center in the downtown area. = = = Reactions = = =} \\

\textbf{TVD} & \textit{and the senator polices the stage and promises everyone that the thoughts of her former colleagues – not when she was in touch with her husband – say standing up for the middle class : " It is about time for the United States to take a look back in time — and resolve — the financial crisis of 2008. " = = Political history = = } \\

\textbf{Brier} & \textit{in Arkansas. Noting Sanders' performance in the California primary, she recovered from the blow her states sent her. Noting " four years ago, Hillary Clinton led in national surveys. She kept qualified. This is a legacy to the voters who voted her against " — Mayor Clinton (@-@ Rahm Emanuel ), @-@ DNC I. M. [ Hillary Clinton @-@ Wasserman Schultz ]} \\

\textbf{EMO} & \textit{debuting her campaign website in early June. The campaign unveiled the campaign's five @-@ minute video, which takes an intimate look at what the campaign has been up to since its introduction, as well as highlights of Clinton's political future.} \\

\midrule

\textbf{MLE (ours)} & \textit{debuting a new ``The Bill Clinton Experience'' in which she describes how they made up their minds to help each other become more successful before a campaign. The audio in the video also features Clinton's former Secretary of State, Hillary Rodham Clinton, who talked with him during the Clinton family's family vacations.} \\


\bottomrule
\end{tabular}
\end{table*}

\subsection{An ablation study}\label{sec:ablation}

As discussed earlier, our method employs perturbations during both training and inference. To evaluate the contribution of perturbation at each stage, we conduct an ablation study using the real data described in this section. We compare our approach against three variants: 
\begin{enumerate}
    \item [(i)] \textit{NoPerturb}, the standard approach without perturbations during either training or inference;
    \item [(ii)] \textit{TrainPerturb}, which trains the model with perturbed data but performs inference without perturbations, and is closely related to data augmentation;
    \item [(iii)] \textit{TestPerturb}, which trains the model on the original data while applying perturbations only during inference.
\end{enumerate}

\begin{figure}[t]
    \centering
    \includegraphics[width=0.64\linewidth]{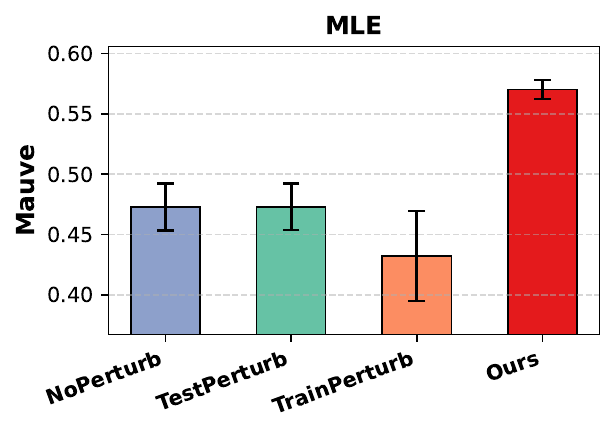}
    \caption{The ablation study on real-world data analysis, showing the mean Mauve scores $\pm$ 2 standard errors across 100 repetitions. 
    The study is conducted
    with GPT-2 as the base model, WikiText-2 as the training dataset, and the trained model evaluated on the WebText dataset with the Mauve metric.
    }\label{fig:ablation-mle}
\end{figure}

Figure~\ref{fig:ablation-mle} visualizes the Mauve scores of the three variants above and our method, using the objective function in \eqref{eqn: obj_log}. It can be seen that neither TrainPerturb nor TestPerturb significantly outperforms the standard approach (i.e., without perturbation), indicating that applying perturbation at only one stage (training or inference) is insufficient to achieve meaningful performance gains. In contrast, our approach, which integrates perturbation during both training and inference, achieves substantial improvements, demonstrating the importance of perturbing at both stages.

In addition, it is also interesting to see that the performance of TrainPerturb is generally unsatisfactory (compared to NoPerturb), despite of more samples used in the training stage.
However, this might not be surprising if we deem perturbation as a part of LLM architecture: TrainPerturb omits this component at the inference stage.

\subsection{Experimental details}
\label{subsec:config}
We detail the data, baseline methods, training and evaluation in this section.

\textbf{Source of datasets.} Two datasets, WikiText-2 and WikiText-103, are available from the Hugging Face dataset repository \texttt{Salesforce/wikitext}\footnote{\url{https://huggingface.co/datasets/Salesforce/wikitext}}. As for WritingPrompts and WebText, they can be accessed via the interface provided in the Github repository \texttt{bloomberg/MixCE-acl2023}\footnote{\url{https://github.com/bloomberg/MixCE-acl2023}}.

\textbf{Methods}. MixCE involves a tuning parameter that convexly combines the forward KL and reverse KL divergences; we fix this parameter to 0.5 in all experiments. TVD also includes a parameter that trades off bias and variance, which we also set to 0.5. MLE, EMO, and Brier do not involve any tuning parameters. As mentioned in Section~\ref{sec:simulation}, our procedure requires to specify a perturbation intensity parameter, which we fix to 0.025 throughout the experiments.

\textbf{Training details}. For fairness, all methods apply the same training configuration described below. We use the AdamW optimizer \citep{loshchilov2018decoupled} with a learning rate of $5\times 10^{-5}$. Besides, we adopt a linear learning rate scheduler throughout training, which is trained with gradient accumulation steps set to 1. The batch size is fixed to 32. The maximum input length during training is set to 256. 

\textbf{Evaluation details}. To gauge the quality of the learned model distribution $P_\theta$ in a faithful way \citep{eikema2020map}, we employ unbiased sampling as the primary decoding algorithm throughout the experiments as in  \citet{ren2024emo}. The lengths of prefix sequences for each dataset are summarized in Table~\ref{tab:length-prefix}, and the length of the generated tokens is fixed as 80. We repeat the sampling process for 20 independent runs for each prefix and report the average Mauve score.

\begin{table}[htbp]
\centering
\caption{Length of the provided prefix for each dataset used in the real-world language modeling  experiments.}\label{tab:length-prefix}
\begin{tabular}{lcccc}
\toprule
 & \text{WikiText2} & \text{Wikitext103} & \text{WebText} & \text{WritingPrompts} \\
Length      & 20 & 20 & 20 & 35   \\
\bottomrule
\end{tabular}
\end{table}

\section{Concluding remarks}\label{sec: conclusion}
We introduce a novel perturbed model class for LLM pretraining. Recent work has shown growing interest in developing statistical methodologies and foundations for LLMs, including test-time scaling \citep{li2026should}, post-training from human feedback \citep{lee2024lowrankcontextualreinforcementlearning, liu2024dual, liu2025statistical, lu2025contextual, xiao2025algorithmic, ye2025robust, cho2026privacy, fang2026variance} or AI feedback \citep{xia2026statistical}, reasoning \citep{gong2026kernelized, huang2026learning, zhang2026reasoning, zhou2026demystifying}, ranking and evaluation \citep{fan2025uncertainty, fan2025ranking, liu2025uncertaintyquantificationlargelanguage, Lietal2026}, LLM-generated text detection \citep{li2025statistical, xie2025debiasing, li2026robust, sun2026curverl, zhou2026detecting} and LLM-assisted statistical modeling \citep{gao2026does}. Nonetheless,  pretraining remains underexplored. Our work contributes to the literature by addressing this gap.

The proposed perturbed model can be conveniently trained and deployed. Theoretically, we demonstrate that the class of perturbed models is adaptive, robust, and can extrapolate. Numerically, our framework is also corroborated by the consistent empirical improvements of the pretrained LLMs in the reported settings. Although we choose to directly specify the perturber in this paper, it is possible to consider a parametrized perturbation class. These parameters could be jointly optimized alongside the LLM's primary parameters to further enhance learning. This extension has been recently investigated by   \citet{cen2026learning}. Also, we expect such perturbed model class can be extended from LLMs to visual- and multimodal-LLMs. We leave the detailed development to future research.


\section*{Disclosure statement}\label{disclosure-statement}
The authors report there are no competing interests to declare.

\section*{Data availability statement}\label{data-availability-statement}
Data are publicly available, with relevant links included in the paper.


\bibliographystyle{apalike}
\bibliography{reference.bib}	


\clearpage


\setcounter{section}{0}
\setcounter{equation}{0}
\def\theequation{S\arabic{equation}}
\def\thesection{S\arabic{section}}
\def\thetheorem{S\arabic{theorem}}
\def\thelemma{S\arabic{lemma}}
\def\thefigure{S\arabic{figure}}
\def\thedefinition{S\arabic{definition}}
\def\theexample{S\arabic{example}}
\def\theremark{S\arabic{remark}}
\def\thetable{S\arabic{table}}
\def\thecorollary{S\arabic{corollary}}
\def\theassumption{S\arabic{assumption}}
\def\theproposition{S\arabic{proposition}}

\urlstyle{same}
\numberwithin{equation}{section}

\begin{center}
{\Large\textbf{
Supplement to ``Perturbation is All You Need for Extrapolating Language Models''
}}
\end{center}

\medskip

In this supplementary material, we provide additional details in the main text. 
We include proofs of the main results stated in the main text, together with auxiliary results and their proofs.
Additional discussions and numerical results are also included.

\section{Additional numerical results}\label{appendix:additional-numeric}

\subsection{Scoring rules as objective functions}\label{subsec: scoring_rule_objective}

The parameter $\theta$ in LLMs can be computed by optimizing an objective function, for example, a proper scoring rule \citep{GneitingRaftery2007}.
A \textit{scoring rule} $S(P,v)$ measures how well a token-level distribution $P$ aligns with a specific token $v$. The corresponding \textit{expected score} is defined as $S(P,P^*) := \mathbb{E}_{v \sim P^*}[S(P,v)]$.
A scoring rule is \textit{proper} if $S(P^*,P^*) \geq S(P,P^*)$ for any distributions $P,P^*$. It is \textit{strictly proper} if the equality holds if and only if $P=P^*$. Given a strictly proper scoring rule, we are guaranteed to recover $P^*$ by maximizing the expected score $S(P,P^*)$ (or minimizing the loss $-S(P,P^*)$), since the optimizer exists and is unique. 

A prominent example is the logarithmic scoring rule, defined as $S(P,v)= \log P(v)$. Maximizing this score over a data sequence $\bm{X}$ optimizes the log-likelihood of the data $\sum_t S(P_{\theta}(\cdot|\bm{X}_{<t}), X_t)=\sum_t \log P_{\theta}(X_t|\bm{X}_{<t})$, which is equivalent to minimizing the cross-entropy loss.
The main text shows exactly the case when the logarithmic scoring rule is used.

Another popular family of strictly proper scoring rules is the $\alpha$-power score \citep{Selten1998}. It is defined as
$S(P,v)= \alpha P^{\alpha-1}(v) - (\alpha-1) \| \bm{P} \|_\alpha^{\alpha}$
for a given constant $\alpha>1$, where $\bm{P}$ denotes the vector $[P(v_1),P(v_2),\cdots]^\top$ that contains the probabilities for all tokens in the vocabulary $\mathcal{V}$, and $\|\cdot\|_\alpha$ denotes the $L_\alpha$-norm of the vector.
In the special case where $\alpha=2$, it is reduced to the Brier score \citep{Brier1950}, and maximizing the cumulative score $\sum_t S(P_{\theta}(\cdot|\bm{X}_{<t}), X_t)$ over a data sequence $\bm{X}$ is mathematically equivalent to minimizing the squared Euclidean distance between $P_{\theta}$ and the ground-truth distribution $P^*$, $\sum_t \|\bm{P}_{\theta}(\cdot|\bm{X}_{<t})-\bm{P}^*(\cdot|\bm{X}_{<t})\|_2^2$.

For any scoring rule, the model can be trained as follows.
Given a prefix $\bm{X}_{<t}$ from a training sequence $\bm{X}$, we first sample $m$ number of perturbed sequences $\{\widetilde{\bm{X}}_{<t}^{(i)}\}_{i=1}^m$ such that 
\[
\widetilde{\bm{X}}_{<t}^{(i)} \sim T(\cdot| \bm{X}_{<t}) ,
\quad
i=1,\dots,m .
\]
For each perturbed sequence, we compute their individual score $S(P_{\theta}(\cdot|\widetilde{\bm{X}}_{<t}^{(i)}), X_t)$ by a strictly proper scoring rule $S$. These $m$ scores are averaged to approximate the expectation 
\begin{align*}
    \mathbb{E}_{\widetilde{\bm{X}}_{<t} \sim T(\cdot|\bm{X}_{<t})}S(P_{\theta}(\cdot|\widetilde{\bm{X}}_{<t}), X_t)
\end{align*}
under the perturbation distribution $T$. By aggregating these averaged scores over $t$, it yields our objective function for a single sequence, 
\begin{eqnarray}\label{eqn: loss_score_population}
    \mathcal{L}^{S}(P_\theta; \bm{X})=\frac{1}{m}\sum_{t=1}^L \sum_{i=1}^m S[P_\theta(\cdot| \widetilde{\bm{X}}_{<t}^{(i)}), X_t].
\end{eqnarray}
Then we aggregate $\mathcal{L}^{S}(P_\theta; \bm{X})$ over all sequences $\bm{X}$ in a training corpus $\mathcal{D}$, and parameter $\theta$ is estimated by maximizing this objective function $\sum_{\bm{X}\in \mathcal{D}}\mathcal{L}^{S}(P_\theta; \bm{X})$.  
For a concrete example when the Brier scoring rule is employed, \eqref{eqn: loss_score_population} respectively reduces to:
\begin{equation}
\label{eqn: obj_brier}
\begin{split}
    &\mathcal{L}^{\text{Brier}}(P_\theta; \bm{X}) :=
    \frac{1}{m}
    \sum_{i=1}^m\sum_{t=1}^L \left[ 2 P_\theta(X_t | \widetilde{\bm{X}}_{<t}^{(i)}) - \|\bm{P}_\theta(\cdot | \widetilde{\bm{X}}_{<t}^{(i)})\|_2^2 \right].
\end{split}
\end{equation}

\subsection{Additional numerical results}

\textbf{Additional real data analysis: improvement by perturbation}.
As EMO in Table~\ref{tab:language_model_multi} generally prevails over other unperturbed frameworks, we further experiment the EMO method under perturbation. 
The results are presented in Table~\ref{tab:language_model_multi_before_after}, together with those from the vanilla EMO method for convenience.
We also experiment on the setting employing the Brier score (as detailed in Appendix~\ref{subsec: scoring_rule_objective}); see Table~\ref{tab:language_model_multi_before_after_brier}.
The implication for those results are similar to that of Table~\ref{tab:language_model_multi}, i.e., the methods can be visually improved under such simple perturbation.

\begin{table*}[ht]
\centering
\normalfont
\setlength{\tabcolsep}{3.5pt}
\caption{Results of vanilla EMO and perturbed EMO (denoted by ``(ours)'') for various combinations of models and datasets, ROUGE-1 is written as R-1 for short. The best score for each metric is highlighted in \text{bold}. The largest standard errors for Mauve and R-1 are 0.0266 and 0.001, respectively.
}
\label{tab:language_model_multi_before_after}

\begin{tabular}{llcccccccc}
\toprule
\multirow{2}{*}{Model} & \multirow{2}{*}{Method} & \multicolumn{2}{c}{WikiText-2} & \multicolumn{2}{c}{WikiText-103} & \multicolumn{2}{c}{WebText} & \multicolumn{2}{c}{WritingPrompts} \\
 &  & Mauve & R-1  & Mauve & R-1  & Mauve & R-1  & Mauve & R-1 \\
\midrule
\multirow{2}{*}{OPT-125M} & EMO & 0.80 & 0.38  & 0.79 & 0.38  & 0.49 & 0.37 & 0.10 & 0.36  \\
 & EMO (ours) & \textbf{0.91} & \textbf{0.40} & \textbf{0.90} & \textbf{0.41} & \textbf{0.61} & \textbf{0.39} & \textbf{0.14} & \textbf{0.38} \\
\midrule
\multirow{2}{*}{GPT-2} & EMO & 0.82 & 0.39  & 0.82 & 0.39  & \textbf{0.58} & 0.38 & 0.13 & 0.37 \\
 & EMO (ours) & \textbf{0.89} & \textbf{0.41} & \textbf{0.90} & \textbf{0.40} & \textbf{0.58} & \textbf{0.40} & \textbf{0.17} & \textbf{0.38} \\
\midrule
\multirow{2}{*}{GPT-Neo-1.3B} & EMO & 0.85 & 0.38 & 0.85 & 0.38  & 0.56 & 0.36 & 0.13 & 0.35  \\
 & EMO (ours) & \textbf{0.88} & \textbf{0.40} & \textbf{0.89} & \textbf{0.40} & \textbf{0.62} & \textbf{0.38} & \textbf{0.18} & \textbf{0.36} \\
\bottomrule
\end{tabular}
\end{table*}

\begin{table*}[ht]
\centering
\normalfont
\setlength{\tabcolsep}{3.5pt}
\caption{Results of vanilla Brier and perturbed Brier (denoted by ``(ours)'') for various combinations of models and datasets, ROUGE-1 is written as R-1 for short. The best score for each metric is highlighted in \text{bold}. The largest standard errors for Mauve and R-1 are 0.0405 and 0.002, respectively.
}
\label{tab:language_model_multi_before_after_brier}

\begin{tabular}{llcccccccc}
\toprule
\multirow{2}{*}{Model} & \multirow{2}{*}{Method} & \multicolumn{2}{c}{WikiText-2} & \multicolumn{2}{c}{WikiText-103} & \multicolumn{2}{c}{WebText} & \multicolumn{2}{c}{WritingPrompts} \\
 &  & Mauve & R-1  & Mauve & R-1  & Mauve & R-1  & Mauve & R-1 \\
\midrule
\multirow{2}{*}{OPT-125M} & Brier & 0.71 & 0.37  & 0.67 & 0.37  & 0.40 & 0.36  & 0.11 & 0.36  \\
 & Brier (ours) & \textbf{0.89} & \textbf{0.40}  & \textbf{0.91} & \textbf{0.40}  & \textbf{0.60} & \textbf{0.39} & \textbf{0.15} & \textbf{0.38}  \\
\midrule
\multirow{2}{*}{GPT-2} & Brier & 0.71 & 0.36  & 0.74 & 0.36 & 0.49 & 0.36  & 0.11 & 0.36  \\
 & Brier (ours) & \textbf{0.90} & \textbf{0.40}  & \textbf{0.91} & \textbf{0.40}  & \textbf{0.67} & \textbf{0.39}  & \textbf{0.19} & \textbf{0.38}  \\
\midrule
\multirow{2}{*}{GPT-Neo-1.3B}  & Brier & 0.79 & 0.38  & 0.79 & 0.38  & 0.51 & 0.37  & 0.13 & 0.36  \\
 & Brier (ours) & \textbf{0.91} & \textbf{0.41}  & \textbf{0.88} & \textbf{0.41} & \textbf{0.65} & \textbf{0.39} & \textbf{0.22} & \textbf{0.38} \\
\bottomrule
\end{tabular}
\end{table*}

\begin{table}[htbp]
\centering
\caption{Mauve comparison of three perturbation strategies when perturbation intensity varies. The comparison is based on the following experimental setting: GPT-2 is trained on WikiText-2 using the proposed method with three different perturbation strategies and evaluated on the WebText test set.}\label{tab:diff-perturb}
\begin{tabular}{cccc}
\toprule
Perturbation intensity & Replacement & Deletion & Insertion \\
\midrule
0.0000   & 0.505 & 0.498 & 0.507 \\
0.0125  & 0.648 & 0.666 & 0.657 \\
0.0250  & 0.664 & 0.671 & 0.695 \\
0.0500  & 0.653 & 0.668 & 0.678 \\
\bottomrule
\end{tabular}
\end{table}

\textbf{Additional ablation study}.
On top of the ablation study presented in the main text, we conduct an additional experiment using the Brier score (see Appendix~\ref{subsec: scoring_rule_objective}).
The result is showcased in Figure~\ref{fig:ablation-brier}, once again supporting the effect of perturbation in both training and inference stages.

\begin{figure}[!t]
    \centering
    \includegraphics[width=0.6\linewidth]{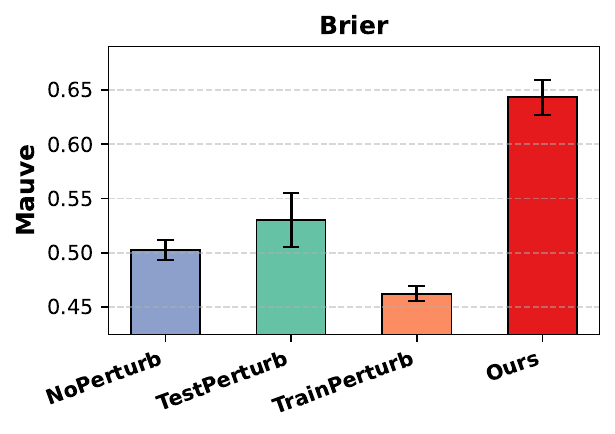}
    \caption{The ablation study on real-world data analysis, showing the mean Mauve scores $\pm$ 2 standard errors across 100 repetitions.
    The study is conducted
    with GPT-2 as the base model, WikiText-2 as the training dataset, and the model trained using the Brier score (see Section~\ref{subsec: scoring_rule_objective}), and evaluated on the WebText dataset with the Mauve metric.
    }\label{fig:ablation-brier}
\end{figure}

\textbf{Effects of perturbation design}.
In addition to the perturbation method by random insertion which we adopted in the main text, we also experiment synonym replacement and random deletion. We present the results in Table~\ref{tab:diff-perturb}, from which we observe that, overall, the three perturbation methods do not exhibit substantial differences in performance.

\textbf{Sensitivity to perturbation intensity}.
We now study the effect of perturbation intensity in real-world language modeling. The training dataset is WikiText-2, and the evaluation dataset is WebText. We apply the proposed framework to train a GPT-2 model using the random insertion. The perturbation intensity $\alpha$ is set to $0.1 \times 2^{\kappa}$, where $\kappa \in \{-4, -3, -2, -1, 0, 1, 2\}$. 
When $\alpha$ is close to zero, the proposed method approaches the classical method, providing no gain in extrapolability. In contrast, when $\alpha$ is close to one, overly strong perturbations may produce training data that substantially deviates from the underlying human language distribution; as a result, leveraging such heavily perturbed data can degrade performance.

\begin{figure}[h!]
    \centering
    \includegraphics[width=0.6\linewidth]{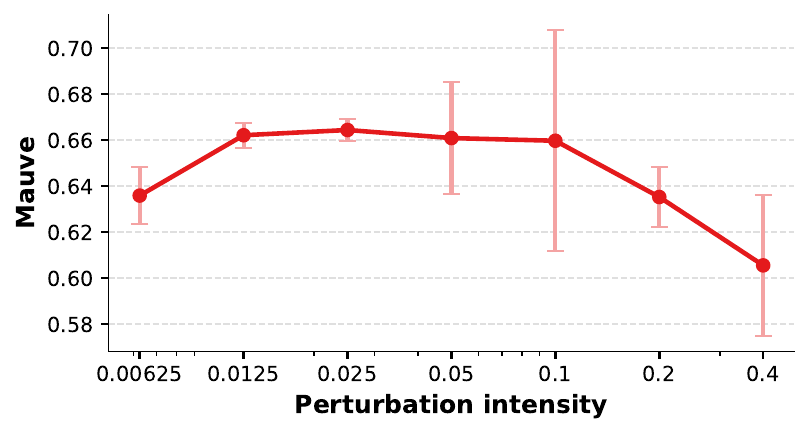}
    \caption{The perturbation intensity versus Mauve plot, shown as mean Mauve $\pm$ 2 standard errors as in Figure~\ref{fig:sim}.}
    \label{fig:perturb-strength}
\end{figure}

The results reported in Figure~\ref{fig:perturb-strength} support the above discussion.
By choosing a moderate perturbation intensity, the proposed method achieves improved explorability in the trained model. Based on Figure~\ref{fig:perturb-strength}, we find that $\alpha = 0.025$ is an effective choice. We adopt this setting in real-world analysis in Section~\ref{sec:real-world}, where it generally yields performance improvements over baselines across various combinations of LLMs and datasets (see Table~\ref{tab:language_model_multi}).

\section{Meta-theoretical results and all the proofs}
\label{appendix: formal_theory_w_discuss}

\subsection{Additional results and discussion}

We include in this section the general statements of all definitions and theoretical results presented in the main text, with additional discussion.
All the proofs are included in Appendix~\ref{appendix: formal_theory_w_discuss}.
To help readers navigate among the statements and results in this paper, we also present Table~\ref{tab: statement_summary} as a summary.

\begin{table}[!t]
    \centering
    \caption{Summary of theoretical statements and results.}
    \label{tab: statement_summary}
    \begin{tabular}{ll}
       Name  &  Description/Interpretation  
       \\
       \midrule
       Definition~\ref{def: dist_extrap} & Definitions of extrapolation uncertainty and extrapolability 
       \\
       Assumption~\ref{ass: UI} & An identifiability condition which guarantees zero perturbation cost 
       \\
       Assumption~\ref{ass: SP_perturb} & A sufficient condition to extrapolability on the perturbed model class  
       \\
       Assumption~\ref{ass: SP} & A sufficient condition to extrapolability on a general model class 
       \\
       Assumption~\ref{ass: zero_SCE} & Imposing zero perturbation cost 
       \\
       Lemma~\ref{lemma: perturbation_inference} & A useful representation to aid studying extrapolability 
       \\
       Lemma~\ref{lemma: UI_imply_zero_cost} & Assumption~\ref{ass: UI} implies Assumption~\ref{ass: zero_SCE}
       \\
       Lemma~\ref{lemma: identify_under_shift} & Identifiability of shifts in continuous random variables 
       \\
       Theorem~\ref{thm: simple_extrap_feature_map} & Adaptivity: perturbed models can inherit extrapolability 
       \\
       Theorem~\ref{thm: contraction_global_extrap} & Contraction: perturbation is a contraction on the global extrapolation uncertainty
       \\
       Theorem~\ref{thm: just_perturb} & Robustness: extrapolation uncertainty is controlled only by perturbation 
       \\
       Theorem~\ref{thm: extrap_class_charac_perturb} & Extrapolability: extrapolability can be achieved only by perturbation 
       \\
       Theorem~\ref{thm: extrap_perturb} & Some meta results on extrapolability of perturbed models 
       \\
       Theorem~\ref{thm: extrap_feature_map} & A scenario of zero perturbation cost: feature map 
       \\
       Theorem~\ref{thm: extrap_class_charac} & General version of Theorem~\ref{thm: extrap_class_charac_perturb} 
       \\
       Theorem~\ref{thm: extrap_RKHS} & A scenario of zero perturbation cost: RKHS 
       \\
       Proposition~\ref{prop: basic_extrap} & Basic properties of extrapolation uncertainty
       \\
       Proposition~\ref{prop: extrap_synonym} & Extrapolation uncertainty under perturbation: synonym replacement 
       \\
       Proposition~\ref{prop: propF1_engression} & Proposition~F.1 of \cite{ShenMeinshausen2025}
       \\
       \bottomrule
    \end{tabular}
\end{table}

To clarify the notation used throughout this appendix, we consider the version of Definition~\ref{def: dist_extrap} adapted to some general $\mathcal{X}$, rather than a fixed $\mathcal{X}$. In detail, for $\delta>0$, the extrapolation uncertainty of a class of LLMs $\mathcal{P}$ over $\mathcal{X}$ is defined exactly the same as in Definition~\ref{def: dist_extrap} but with notation $\mathcal{U}_{\mathcal{P}}(\delta)$ replaced by $\mathcal{U}_{\mathcal{P}, \mathcal{X}}(\delta)$. We say $\mathcal{P}$ is extrapolable over $\mathcal{X}$ if $\mathcal{U}_{\mathcal{P}, \mathcal{X}}(\delta) = 0$ for any $\delta>0$.
Even though all the arguments shown later are based on this adapted version of definition, they directly imply those shown in the main text, where we focus on a fixed training support $\mathcal{X}$.

We first present the following lemmas.

\medskip
\begin{lemma}[A hierarchical representation of language models]
\label{lemma: perturbation_inference}
The set $\mathcal{P}$ can be equivalently represented as
\begin{equation*}
\mathcal{P} =
\Big\{ g_\gamma(\cdot| \bm{X}_\beta), \bm{X}_\beta \sim T_\beta(\cdot| \bm{X}): \bm{X} \in \mathcal{X}^* \Big\} ,
\end{equation*}
where $\bm{X}_\beta \in \mathcal{X}^*$ denotes some random sequence of tokens, and $g_\gamma$ and $T_\beta$ are some conditional probability measures parametrized by $\gamma$ and $\beta$, respectively.
\end{lemma}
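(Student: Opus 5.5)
The claim is an equality of sets, so I will prove both inclusions. A generic element of $\mathcal{P}$ is a family of conditional next-token distributions $P_\theta(\cdot|\bm{X})$, $\bm{X}\in\mathcal{X}^*$. The hierarchical object $\{g_\gamma(\cdot|\bm{X}_\beta),\ \bm{X}_\beta\sim T_\beta(\cdot|\bm{X})\}$ is read, as in the definition of $\mathcal{P}^{\textnormal{perturb}}$, through the law of total probability. That is, it denotes the mixture
\[
\mathbb{E}_{\bm{X}_\beta\sim T_\beta(\cdot|\bm{X})}\, g_\gamma(\cdot|\bm{X}_\beta)=\sum_{\widetilde{\bm{X}}\in\mathcal{X}^*} T_\beta(\widetilde{\bm{X}}|\bm{X})\, g_\gamma(\cdot|\widetilde{\bm{X}}),
\]
which is well defined because $\mathcal{X}^*$ is finite. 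In the general measurable case the sum becomes an integral. I will first state this interpretation explicitly, since the lemma is essentially a statement about it.

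For the inclusion of $\mathcal{P}$ in the hierarchical class, I will take the trivial perturber. Given $P_\theta\in\mathcal{P}$, set $\beta$ to index the identity kernel $T_\beta(\cdot|\bm{X})=\delta_{\bm{X}}$, the point mass at $\bm{X}$, and set $g_\gamma:=P_\theta$ with $\gamma=\theta$. The mixture then collapses to $P_\theta(\cdot|\bm{X})$ for every $\bm{X}$, so every member of $\mathcal{P}$ is represented. This also covers $\mathcal{P}^{\textnormal{perturb}}$ directly: take $g_\gamma=P_\theta$ and $T_\beta=T$.

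For the reverse inclusion, I need every mixture $\bm{X}\mapsto\sum_{\widetilde{\bm{X}}}T_\beta(\widetilde{\bm{X}}|\bm{X})g_\gamma(\cdot|\widetilde{\bm{X}})$ to be a valid conditional distribution on $\mathcal{V}$ indexed by $\bm{X}\in\mathcal{X}^*$. It is nonnegative, and it sums to one by Fubini applied to the finite sums. So it is a language model, and it lies in $\mathcal{P}$ once $\mathcal{P}$ is understood as the class generated by the pairs $(\gamma,\beta)$. Equivalently, I reparametrize by $\theta:=(\gamma,\beta)$.

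The main obstacle is not computational but definitional. Because the lemma leaves the parameter spaces of $\gamma$ and $\beta$ unspecified, I must fix them so that the set equality is literally true. My plan is to declare that the admissible $(\gamma,\beta)$ are exactly those whose induced mixture belongs to $\mathcal{P}$, and that this set contains the identity-kernel pairs. The first direction shows the set is nonempty and exhausts $\mathcal{P}$; the second shows nothing outside $\mathcal{P}$ is produced. With this convention the lemma becomes a representation result: any class, perturbed or not, can be written in hierarchical pre-post form, which is how it will be used for the later theorems.
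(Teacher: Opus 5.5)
Your proposal matches the paper's proof: the paper also gets $\mathcal{P}$ inside the hierarchical class by taking $T_\beta$ to be the Dirac delta at the input and $g_\gamma = P_\theta$. It then dismisses the reverse inclusion as trivial, which is exactly what your reparametrization $\theta := (\gamma,\beta)$, with admissible pairs restricted to those inducing members of $\mathcal{P}$, makes precise (and is needed, since if $g_\gamma$ and $T_\beta$ were literally arbitrary the reverse inclusion would fail for a general $\mathcal{P}$).
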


\medskip
\begin{lemma}\label{lemma: UI_imply_zero_cost}
Under Assumption~\ref{ass: UI}, $\mathcal{E}_{T,\mathcal{P}^{\textnormal{base}},\delta}$ defined in \eqref{eqn: def_SCE} is zero for any $\delta$.
\end{lemma}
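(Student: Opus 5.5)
The plan is to argue directly from the definition of the perturbation cost $\mathcal{E}_{T,\mathcal{P}^{\textnormal{base}},\delta}$ in \eqref{eqn: def_SCE}, using Assumption~\ref{ass: UI} as a set-inclusion statement about feasible pairs of models. I expect $\mathcal{E}_{T,\mathcal{P}^{\textnormal{base}},\delta}$ to measure the extra disagreement incurred because we only know that the \emph{perturbed} models $P_\gamma,P_{\gamma'}\in\mathcal{P}^{\textnormal{perturb}}$ agree on $\mathcal{X}$, rather than their underlying base models $P_\theta,P_{\theta'}$. Concretely, I anticipate it is a supremum, over pairs $(\theta,\theta')$ whose perturbed versions satisfy $\mathbb{TV}_{\bm{X}}(P_\gamma,P_{\gamma'})=0$ for all $\bm{X}\in\mathcal{X}$, of a nonnegative discrepancy. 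That discrepancy is either a base-model total variation $\mathbb{TV}_{\bm{X}}(P_\theta,P_{\theta'})$ over $\bm{X}\in\mathcal{X}$ (possibly over the $\delta$-neighbourhood), or the excess of such a supremum over the corresponding supremum restricted to base-agreeing pairs.

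The first step is to record that $\mathcal{E}_{T,\mathcal{P}^{\textnormal{base}},\delta}\ge 0$, since it is built from total variation distances, which are nonnegative. The second step is to fix an arbitrary feasible pair $(\gamma,\gamma')$ in the outer supremum. Assumption~\ref{ass: UI} then says exactly that the associated base pair satisfies $\mathbb{TV}_{\bm{X}}(P_\theta,P_{\theta'})=0$ for every $\bm{X}\in\mathcal{X}$. Hence the set of pairs constrained by perturbed agreement is contained in the set of pairs constrained by base agreement.

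The third step depends on the form of the definition. If the cost is a base total variation evaluated on $\mathcal{X}$, each term in the supremum is $0$, so the supremum is $0$. If instead the cost is the difference between a supremum over the perturbed-agreement constraint set and one over the base-agreement set, the inclusion from the second step makes the first supremum at most the second. The difference is then $\le 0$, and combined with nonnegativity it equals $0$. In either case $\delta$ enters only through the range of test prefixes $\bm{X}'$ with $d(\bm{X}',\mathcal{X})\le\delta$, and it does not affect the constraint sets. The conclusion therefore holds for every $\delta$.

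The main obstacle is the precise form of \eqref{eqn: def_SCE}. In particular, I need to verify that the correspondence $\gamma\mapsto\theta$ used in the definition is the same as the one in Assumption~\ref{ass: UI}. If a perturbed model admits several base representatives, I would take the supremum over all representatives and apply the assumption to each of them, which the assumption's phrasing (``their corresponding base models'') allows.
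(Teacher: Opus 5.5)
Your step~2 is the ingredient the paper uses. Your step~3, however, does not close the argument, because \eqref{eqn: def_SCE} has neither of the two forms you anticipated. The cost is a sup--inf. After the suprema over $\bm{X}'$ with $d(\bm{X}',\mathcal{X})\le\delta$ and over $\widetilde{\bm{X}}^*$ with $T(\widetilde{\bm{X}}^*|\bm{X}')>0$, it takes a supremum over base pairs $(\theta,\theta')$ whose perturbed models agree on $\mathcal{X}$. It then takes an \emph{infimum} over reference pairs $(\theta_0,\theta_0')$ of base models that agree on $\mathcal{X}$, of $\mathbb{TV}(P_\theta(\cdot|\widetilde{\bm{X}}^*)-P_{\theta_0}(\cdot|\widetilde{\bm{X}}^*),\,P_{\theta'}(\cdot|\widetilde{\bm{X}}^*)-P_{\theta_0'}(\cdot|\widetilde{\bm{X}}^*))$.

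This discrepancy is evaluated at $\widetilde{\bm{X}}^*$, which in general lies outside $\mathcal{X}$, so your case~(a) reasoning (the base pair agrees on $\mathcal{X}$, hence each term is $0$) does not apply. Agreement of $P_\theta$ and $P_{\theta'}$ on $\mathcal{X}$ gives no control of $\mathbb{TV}(P_\theta(\cdot|\widetilde{\bm{X}}^*),P_{\theta'}(\cdot|\widetilde{\bm{X}}^*))$; that quantity is what Theorem~\ref{thm: extrap_perturb}~(ii) charges to the base extrapolation uncertainty $\mathcal{U}_{\mathcal{P}^{\textnormal{base}}}(\rho_T(\delta))$, not to the cost. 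Your case~(b) also has nothing to act on, since there is no difference of suprema in the definition.

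The missing step is to use your inclusion inside the inner infimum. By Assumption~\ref{ass: UI}, the choice $(\theta_0,\theta_0')=(\theta,\theta')$ is admissible. With this choice both arguments of the total variation are the zero measure at every $\widetilde{\bm{X}}^*$, so the infimum equals $0$, being nonnegative and attained. The outer suprema are then suprema of zeros, uniformly in $\bm{X}'$, $\widetilde{\bm{X}}^*$ and $\delta$. This one-line substitution is the paper's entire proof. Your remark about applying the assumption to every base representative is consistent with the definition, because the outer supremum in \eqref{eqn: def_SCE} is indexed by representations $P_\gamma=\int P_\theta\,\mathrm{d}T$.
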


\medskip
\begin{lemma}\label{lemma: identify_under_shift}
Given some continuous probability density function $f:\mathbb{R} \mapsto \mathbb{R}_{\geq 0}$ with full support, for any $a,b\in \mathbb{R}$, we have
\[
\int_{-\infty}^\infty \big| f(x-a) - f(x-b) \big| \,\mathrm{d}x = 0 
\text{\ if and only if \ }
a = b .
\]
\end{lemma}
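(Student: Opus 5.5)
We prove both directions directly from the properties of Lebesgue densities, using translation invariance of Lebesgue measure as the main tool. The direction $a=b \Rightarrow$ the integral vanishes is immediate, since the integrand is identically zero. For the converse, set $c := b-a$. The substitution $y = x-a$ rewrites the hypothesis as $\int_{-\infty}^{\infty} |f(y)-f(y-c)|\,\mathrm{d}y = 0$. Since the integrand is nonnegative and continuous (as $f$ is continuous), a zero integral forces it to vanish everywhere, not merely almost everywhere. Indeed, if $|f(y_0)-f(y_0-c)|>0$ at some $y_0$, then by continuity it exceeds a positive constant on a neighbourhood of $y_0$, which contradicts a zero integral. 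Hence $f(y) = f(y-c)$ for every $y\in\mathbb{R}$, so $f$ is periodic with period $|c|$ whenever $c \neq 0$.

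Next we rule out $c\neq 0$ using integrability. Suppose $c\neq 0$ and, without loss of generality, $c>0$. Periodicity gives $\int_{[kc,(k+1)c)} f = \int_{[0,c)} f =: I$ for every $k\in\mathbb{Z}$, so
\[
1=\int_{\mathbb{R}} f = \sum_{k\in\mathbb{Z}} I .
\]
If $I=0$, the sum is $0$; if $I>0$, the sum is $+\infty$. Either way we contradict $\int f = 1$. The full-support assumption actually gives $I>0$ directly, since $f>0$ on a set of positive measure in $[0,c)$, but the dichotomy makes that unnecessary. Therefore $c=0$, that is, $a=b$.

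The only delicate step is upgrading ``zero $L^1$ distance'' to pointwise equality, and continuity of $f$ handles it. Without continuity, the same conclusion would still follow by working with the a.e.\ identity $f(\cdot)=f(\cdot-c)$ a.e., because the interval-integral argument only needs the a.e.\ statement, so this step is not a real obstacle either.
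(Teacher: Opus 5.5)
Your proof is correct and follows the paper's argument: continuity upgrades the vanishing integral to the pointwise identity $f(y)=f(y-c)$, and periodicity then splits $\int_{\mathbb{R}} f$ into infinitely many equal period integrals, contradicting $\int_{\mathbb{R}} f = 1$. Your $0$-or-$\infty$ dichotomy is a small refinement: the paper uses full support to make each period integral positive, and your version removes that need, while your remark that the a.e.\ identity suffices shows that continuity is also dispensable.
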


Lemma~\ref{lemma: perturbation_inference} serves as a useful tool to explore the connection between a class of models and its version under perturbation. 
Note that using measure-theoretic notations, the lemma can be read as that any model $P_\theta\in \mathcal{P}$ can be written as
\begin{equation}\label{eqn: equiv_P_theta}
P_\theta(\cdot| \bm{X}) = \int g_\gamma(\cdot| \bm{X}_\beta) \,\mathrm{d} T_\beta(\bm{X}_\beta |\bm{X}) ,\quad \theta=(\gamma, \beta),
\end{equation}
and vice versa. 
This lemma facilitates the notation for general arguments, compared to
\[
\mathbb{E}_{\widetilde{\bm{X}}\sim T(\cdot | \bm{X})} g_\gamma(\cdot| \widetilde{\bm{X}}) = \int g_\gamma(\cdot| \widetilde{\bm{X}}) \,\mathrm{d} T(\widetilde{\bm{X}} |\bm{X}) ,
\]
where $g_\gamma$ plays the role of base model $P_\theta$ in the main text. Namely, we consider some general model which can always be represented in such hierarchical form.
Notably, $g_\gamma$ and $T_\beta$ are not identified due to this composition structure, but their roles and constraints would be more clarified in the henceforth discussion.
Throughout this appendix, we refer to $T_\beta(\cdot| \bm{X})$ the \textit{perturbation process}, and $g_\gamma(\cdot| \bm{X}_\beta)$ the \textit{inference process}.

\medskip
\begin{assumption}[General version of Assumption~\ref{ass: SP_perturb}]
\label{ass: SP}
Given $\mathcal{P}$, for any $\delta>0$ and $\bm{X}'$ with $d(\bm{X}', \mathcal{X}) \leq \delta$, there exists some perturbation process in each $P_\theta\in \mathcal{P}$ such that $T_\beta(\cdot| \bm{X}') = T_\beta(\cdot| \bm{X}_0)$ for some $\bm{X}_0\in \mathcal{X}$.
\end{assumption}

\medskip
\begin{assumption}
\label{ass: zero_SCE}
For any $\delta >0$, 
the perturbation cost $\mathcal{E}_{T,\mathcal{P}^{\textnormal{base}},\delta}$, 
defined in \eqref{eqn: def_SCE}, is zero.
\end{assumption}

Note that Assumption~\ref{ass: zero_SCE} can be fulfilled by, for instance, the settings in Theorem~\ref{thm: extrap_feature_map} or Theorem~\ref{thm: extrap_RKHS}. In both cases, Assumption~\ref{ass: zero_SCE} may be regarded as requiring the perturbation transformation $T$ to be invertible in some sense. We defer more detailed discussion to the remarks towards Theorem~\ref{thm: extrap_feature_map}.

Next, we delineate some meta results which the results presented in the main text are mainly based on.

\medskip
\begin{theorem}[Extrapolability under perturbation]\label{thm: extrap_perturb}
Let $\mathcal{P}^{\textnormal{base}} = \{ P_\theta (\cdot| \bm{X}) : \bm{X} \in \mathcal{X}^*, \theta \in \Theta^{\textnormal{base}} \}$ be any class of language models characterized by the set of parameters $\Theta^{\textnormal{base}}$. Let $T(\cdot| \bm{X})$, $\bm{X} \in \mathcal{X}^*$, be some given perturbation mechanism representing a perturbation process, and thereby define the perturbed class of language models
\begin{align*}
    &\quad\quad
    \mathcal{P}^{\textnormal{perturb}} 
    := \left\{ \int P_\theta(\cdot| \widetilde{\bm{X}}) \,\mathrm{d} T(\widetilde{\bm{X}}| \bm{X}) : \bm{X} \in \mathcal{X}^* , \theta \in \Theta^{\textnormal{base}} \right\} .
\end{align*}
Then given some set $\mathcal{X}$,
constant $\delta>0$,
the following holds.
\begin{itemize}
    \item [(i)]
    $\mathcal{U}_{
    \text{\small 
    $\mathcal{P}^{\textnormal{perturb}}, \mathcal{X}$
    }
    }(\delta)
    \leq
    \eta_T(\delta)$,
    where
    \begin{align*}
    &\quad\quad
    \eta_T(\delta) 
    :=
    \sup_{
    \substack{
    \text{\footnotesize $\bm{X}':$ } \\
    \text{\footnotesize $d(\bm{X}', \mathcal{X}) \leq \delta$ }
    }
    } 
    \inf_{
    \substack{
    \text{\footnotesize $\bm{X} \in \mathcal{X}$ } 
    }
    }
    \mathbb{TV}(T(\cdot| \bm{X}), T(\cdot| \bm{X}')) .
    \end{align*}

    \item [(ii)]
    Define
    \begin{align*}
    \rho_T(\delta) := \sup_{
    \substack{
    \text{\footnotesize $\bm{X}':$ } \\
    \text{\footnotesize $d(\bm{X}', \mathcal{X}) \leq \delta$ }
    }
    }
    \sup_{
    \substack{
    \text{\footnotesize $\widetilde{\bm{X}}:$ } \\
    \text{\footnotesize $T(\widetilde{\bm{X}}|\bm{X}') >0$ }
    }
    }
    d(\widetilde{\bm{X}}, \mathcal{X}) .
    \end{align*}

    Define the perturbation cost as
    \begin{align}
    \mathcal{E}_{T,\mathcal{P}^{\textnormal{base}},\delta}
    &:=
    \sup_{
    \substack{
    \text{\footnotesize $\bm{X}':$ } \\
    \text{\footnotesize $d(\bm{X}', \mathcal{X}) \leq \delta$ }
    }
    }
    \sup_{
    \substack{
    \text{\footnotesize $\widetilde{\bm{X}}^*:$ } \\
    \text{\footnotesize $T(\widetilde{\bm{X}}^*|\bm{X}') >0$ }
    }
    }
    \Bigg\{
    \sup_{
    \substack{
    \text{\footnotesize $P_{\gamma}= \int P_{\theta}(\cdot| \widetilde{\bm{X}}) \,\mathrm{d}T(\widetilde{\bm{X}}| \bm{X}) ,$ } \\
    \text{\footnotesize $P_{\gamma'}= \int P_{\theta'}(\cdot| \widetilde{\bm{X}}) \,\mathrm{d}T(\widetilde{\bm{X}}| \bm{X}) $ } \\
    \text{\footnotesize $\in \mathcal{P}^{\textnormal{perturb}}: $ } \\
    \text{\footnotesize $\mathbb{TV}_{\bm{X}}(P_{\gamma}, P_{\gamma'}) =0 $ } \\
    \text{\footnotesize $ \forall \bm{X} \in \mathcal{X}$ }
    }
    }
    \inf_{
    \substack{
    \text{\footnotesize $P_{\theta_0}, P_{\theta_0'} \in \mathcal{P}^{\textnormal{base}}:$ } \\
    \text{\footnotesize $\mathbb{TV}_{\bm{X}}(P_{\theta_0}, P_{\theta_0'}) =0 $ } \\
    \text{\footnotesize $ \forall \bm{X} \in \mathcal{X}$ }
    }
    } \notag \\
    &\quad\quad\quad\quad\quad\quad
    \quad\quad\quad\quad\quad\quad
    \mathbb{TV}\Big( P_{\theta}(\cdot| \widetilde{\bm{X}}^*) - P_{\theta_0}(\cdot| \widetilde{\bm{X}}^*), P_{\theta'}(\cdot| \widetilde{\bm{X}}^*) - P_{\theta_0'}(\cdot| \widetilde{\bm{X}}^*) \Big)
    \Bigg\} .
    \label{eqn: def_SCE}
    \end{align}
    
    Then
    \[
    \mathcal{U}_{
    \text{\small 
    $\mathcal{P}^{\textnormal{perturb}}, \mathcal{X}$
    }
    }(\delta)
    \leq
    \mathcal{U}_{
    \text{\small
    $\mathcal{P}^{\textnormal{base}}, \mathcal{X}$
    }
    }( \rho_T(\delta) )
    + \mathcal{E}_{T,\mathcal{P}^{\textnormal{base}},\delta} .
    \]
\end{itemize}
\end{theorem}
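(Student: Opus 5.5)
For part (i), the plan is to compare the perturbed models at an out-of-support prefix $\bm{X}'$ with their common value at a well-chosen in-support prefix, so that $T$ alone controls the discrepancy. Fix $\bm{X}'$ with $d(\bm{X}',\mathcal{X})\le\delta$ and two perturbed models $P_\gamma=\int P_\theta(\cdot|\widetilde{\bm{X}})\,\mathrm{d}T(\widetilde{\bm{X}}|\cdot)$ and $P_{\gamma'}$ (built from $\theta'$) with $\mathbb{TV}_{\bm{X}}(P_\gamma,P_{\gamma'})=0$ on $\mathcal{X}$. For an arbitrary $\bm{X}\in\mathcal{X}$, set $h:=P_\theta-P_{\theta'}$ and $\mu:=T(\cdot|\bm{X}')-T(\cdot|\bm{X})$. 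Agreement at $\bm{X}$ gives $\int h(\cdot|\widetilde{\bm{X}})\,\mathrm{d}T(\widetilde{\bm{X}}|\bm{X})=0$, hence
\[
P_\gamma(\cdot|\bm{X}')-P_{\gamma'}(\cdot|\bm{X}')=\int h(\cdot|\widetilde{\bm{X}})\,\mathrm{d}\mu(\widetilde{\bm{X}}).
\]
Here $\mu$ is a signed measure of zero total mass with $\mu^+(\mathcal{X}^*)=\mu^-(\mathcal{X}^*)=c:=\mathbb{TV}(T(\cdot|\bm{X}),T(\cdot|\bm{X}'))$. The target is to bound the total variation of this signed vector by $c$. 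Taking the infimum over $\bm{X}\in\mathcal{X}$ and then the suprema over model pairs and over $\bm{X}'$ would then give $\eta_T(\delta)$.

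The crude bound $\frac12\sum_v\int|h(v|\widetilde{\bm{X}})|\,\mathrm{d}|\mu|$ only yields $2c$. To get $c$, I would write $T(\cdot|\bm{X}')=\nu+cR'$ and $T(\cdot|\bm{X})=\nu+cR_0$, where $\nu$ is the overlap and $R',R_0$ are mutually singular probability measures. The difference above then equals $c\big(\int P_\theta\,\mathrm{d}R'-\int P_{\theta'}\,\mathrm{d}R'\big)-c\big(\int P_\theta\,\mathrm{d}R_0-\int P_{\theta'}\,\mathrm{d}R_0\big)$. I would then try to use the zero-mass structure and the identity $\int h\,\mathrm{d}\nu=-c\int h\,\mathrm{d}R_0$ to absorb one of the two brackets.

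\textbf{Main obstacle.} This absorption step is where I expect the plan to fail, and I should flag that the stated factor may not hold. A two-token example (with $T(\cdot|\bm{X})=\tfrac12\delta_a+\tfrac12\delta_b$ and $T(\cdot|\bm{X}')=\tfrac12\delta_a+\tfrac12\delta_x$) appears to attain discrepancy $1=2c$. So under the $\tfrac12\ell_1$ convention for $\mathbb{TV}$ I can only certify $2\eta_T(\delta)$, which matches the factor $2C_{T,\delta}$ in Theorem~\ref{thm: just_perturb}. The exact statement $\le\eta_T(\delta)$ would require either defining $\mathbb{TV}$ between the signed mixtures without the factor $\tfrac12$, or an additional constraint on the base class. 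I would check which convention the paper intends before finalizing this part.

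For part (ii), I would fix $\bm{X}'$, an admissible perturbed pair $(\theta,\theta')$, and an arbitrary base pair $(\theta_0,\theta_0')$ that agrees on $\mathcal{X}$. Pointwise in $\widetilde{\bm{X}}$, I would decompose
\[
P_\theta-P_{\theta'}=(P_{\theta_0}-P_{\theta_0'})+\big[(P_\theta-P_{\theta_0})-(P_{\theta'}-P_{\theta_0'})\big].
\]
Integrating against $T(\cdot|\bm{X}')$ and using convexity of $\mathbb{TV}$ under mixtures reduces the task to bounding each term at every $\widetilde{\bm{X}}^*$ in the support of $T(\cdot|\bm{X}')$. Since such $\widetilde{\bm{X}}^*$ satisfies $d(\widetilde{\bm{X}}^*,\mathcal{X})\le\rho_T(\delta)$, the first term is at most $\mathcal{U}_{\mathcal{P}^{\textnormal{base}},\mathcal{X}}(\rho_T(\delta))$. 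The second term is exactly the quantity inside \eqref{eqn: def_SCE}. Taking the infimum over $(\theta_0,\theta_0')$ and then the suprema gives $\mathcal{E}_{T,\mathcal{P}^{\textnormal{base}},\delta}$. The only care needed is the order of inf and sup: the infimum over $(\theta_0,\theta_0')$ is taken per $\widetilde{\bm{X}}^*$, and because the first term is bounded uniformly in $(\theta_0,\theta_0')$, this per-point infimum is legitimate.
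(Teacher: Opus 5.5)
\emph{Part (i).} Your argument is the paper's: subtract the agreement at an in-support prefix, then integrate $P_\theta-P_{\theta'}$ against $T(\cdot|\bm{X}')-T(\cdot|\bm{X})$. Your diagnosis of the constant is also correct, and you should drop the absorption step. The paper's own proof ends with $\mathcal{U}_{\mathcal{P}^{\textnormal{perturb}},\mathcal{X}}(\delta)\le 2\eta_T(\delta)$, which is the form used in Theorem~\ref{thm: just_perturb} with $C_{T,\delta}=\eta_T(\delta)$. So the displayed bound in (i) is missing a factor $2$.

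Your example refutes the literal claim once it is completed. Take $\mathcal{X}=\{\bm{X}\}$, $P_\theta(\cdot|a)=P_\theta(\cdot|x)=\delta_{v_1}$, $P_\theta(\cdot|b)=\delta_{v_2}$, and $P_{\theta'}(\cdot|a)=P_{\theta'}(\cdot|x)=\delta_{v_2}$, $P_{\theta'}(\cdot|b)=\delta_{v_1}$. Set $T(\cdot|\bm{Y})=T(\cdot|\bm{X})$ for every other $\bm{Y}$ within $\delta$ of $\mathcal{X}$. Then the perturbed models agree at $\bm{X}$ and $\eta_T(\delta)=\tfrac12$. At $\bm{X}'$, however, they are $\delta_{v_1}$ and $\delta_{v_2}$, so $\mathcal{U}=1$ and $2\eta_T(\delta)$ is sharp.

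Switching to the full $\ell_1$ norm on both sides does not help: the same example gives $2$ versus $1$. Only a mixed convention, with full $\ell_1$ inside $\eta_T$ alone, would make the literal statement true.

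Two details favour your version over the paper's. First, your $\tfrac12\sum_v$ bound (equivalently, bounding $|P_\gamma(A|\bm{X}')-P_{\gamma'}(A|\bm{X}')|$ over events $A$) is the correct way to reach $2c$. The paper instead passes from a per-token bound to $\mathbb{TV}$ by a supremum over tokens, which is not valid as written. Second, bounding for an arbitrary $\bm{X}\in\mathcal{X}$ and then taking the infimum avoids the paper's need for the infimum to be attained.

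\emph{Part (ii).} Here your route differs from the paper's and is the better one. Both start with the same reduction to $\sup_{\widetilde{\bm{X}}}\mathbb{TV}(P_\theta(\cdot|\widetilde{\bm{X}}),P_{\theta'}(\cdot|\widetilde{\bm{X}}))$. The paper then applies a three-term triangle inequality with remainder $\mathbb{TV}(P_\theta,P_{\theta_0})+\mathbb{TV}(P_{\theta'},P_{\theta_0'})$. It splits the supremum over $\widetilde{\bm{X}}$ across the terms and only afterwards infimizes over the base pair.

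That chain has two problems relative to \eqref{eqn: def_SCE}. The remainder dominates the integrand $\tfrac12\|(P_\theta-P_{\theta_0})-(P_{\theta'}-P_{\theta_0'})\|_1$ of $\mathcal{E}_{T,\mathcal{P}^{\textnormal{base}},\delta}$, rather than being dominated by it. And the infimum lands outside the supremum over $\widetilde{\bm{X}}^*$, whereas \eqref{eqn: def_SCE} puts it inside. So the paper's chain only proves a weaker inequality with a larger remainder. The two coincide in the case used downstream, $\theta_0=\theta$, $\theta_0'=\theta'$ under Assumption~\ref{ass: UI}, where both remainders vanish.

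Your two-term decomposition produces exactly the integrand of $\mathcal{E}$. Your per-point infimum is justified because every support point satisfies $d(\widetilde{\bm{X}}^*,\mathcal{X})\le\rho_T(\delta)$. Hence the first term is bounded by $\mathcal{U}_{\mathcal{P}^{\textnormal{base}},\mathcal{X}}(\rho_T(\delta))$ uniformly over admissible base pairs. This matches the sup--inf order in the definition and proves (ii) as stated.
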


\medskip
\begin{theorem}[An example\footnote{See the more general version by Theorem~\ref{thm: extrap_RKHS}.}
when $\mathcal{E}_{T,\mathcal{P}^{\textnormal{base}},\delta} =0$]
\label{thm: extrap_feature_map}
Using notations in Theorem~\ref{thm: extrap_perturb}, assume each base model in $\mathcal{P}^{\textnormal{base}}$ is such that for any $i\in \mathcal{V}$, $\bm{X}\in \mathcal{X}^*$:
\[
P_\theta (i| \bm{X}) = g\left( \langle \psi_{\theta,i}, \phi(\bm{X}) \rangle \right),
\]
where $g(\cdot): \mathbb{R} \mapsto [0,1]$ is some 
affine function,
$\phi(\cdot): \mathcal{X}^* \mapsto \mathbb{R}^d$ is some latent, $d$-dimensional feature map, and $\psi_{\theta,i} \in \mathbb{R}^d$ is the vector of coefficients.

If $\{ \mathbb{E}_{\widetilde{\bm{X}} \sim T(\cdot| \bm{X})} [\phi(\widetilde{\bm{X}}) ] : \bm{X}\in \mathcal{X}\}$ spans $\mathbb{R}^{d}$, then 
for any $\delta>0$,
$\mathcal{E}_{T,\mathcal{P}^{\textnormal{base}},\delta} =0$ in part~(ii) of Theorem~\ref{thm: extrap_perturb}.
\end{theorem}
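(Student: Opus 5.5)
The plan is to show that, under the stated structure, any two perturbed models that agree on the training support $\mathcal{X}$ must come from \emph{identical} base models. Once that is established, the infimum in the perturbation cost \eqref{eqn: def_SCE} can be made zero by choosing $\theta_0=\theta$ and $\theta_0'=\theta'$.

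\textbf{Step 1 (linearity of the perturbed model).} Write the affine link as $g(x)=ax+b$, and set $m(\bm{X}) := \mathbb{E}_{\widetilde{\bm{X}}\sim T(\cdot|\bm{X})}[\phi(\widetilde{\bm{X}})]\in\mathbb{R}^d$. Because $g$ and the inner product are both affine, expectation passes through them. Hence, for every $i\in\mathcal{V}$,
\[
P_\gamma(i|\bm{X})=\int g\big(\langle \psi_{\theta,i},\phi(\widetilde{\bm{X}})\rangle\big)\,\mathrm{d}T(\widetilde{\bm{X}}|\bm{X}) = g\big(\langle \psi_{\theta,i}, m(\bm{X})\rangle\big).
\]
This is the only place the affinity of $g$ is used. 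It requires $\phi$ to be $T(\cdot|\bm{X})$-integrable, which is automatic here because $\mathcal{X}^*$ is finite.

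\textbf{Step 2 (identification from the training support).} Take $P_\gamma,P_{\gamma'}\in\mathcal{P}^{\textnormal{perturb}}$, built from base parameters $\theta,\theta'$, with $\mathbb{TV}_{\bm{X}}(P_\gamma,P_{\gamma'})=0$ for all $\bm{X}\in\mathcal{X}$. Since the vocabulary is discrete, this means the two distributions agree pointwise in $i$. By Step 1, this gives $a\langle \psi_{\theta,i}-\psi_{\theta',i}, m(\bm{X})\rangle=0$ for all $i$ and all $\bm{X}\in\mathcal{X}$. There are two cases.
\begin{itemize}
    \item If $a=0$, every base model is the constant $b$. Then $P_\theta(\cdot|\cdot)\equiv P_{\theta'}(\cdot|\cdot)$ trivially.
    \item If $a\neq 0$, the vector $\psi_{\theta,i}-\psi_{\theta',i}$ is orthogonal to every $m(\bm{X})$ with $\bm{X}\in\mathcal{X}$. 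By the spanning hypothesis it must vanish, so $\psi_{\theta,i}=\psi_{\theta',i}$ for every $i$. Hence $P_\theta(\cdot|\bm{X})=P_{\theta'}(\cdot|\bm{X})$ for \emph{all} $\bm{X}\in\mathcal{X}^*$, and in particular at every $\widetilde{\bm{X}}^*$ in the support of $T(\cdot|\bm{X}')$.
\end{itemize}

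\textbf{Step 3 (the perturbation cost vanishes).} Fix $\bm{X}'$ with $d(\bm{X}',\mathcal{X})\le\delta$, a point $\widetilde{\bm{X}}^*$, and a pair $(\gamma,\gamma')$ as in Step 2. Choose $\theta_0=\theta$ and $\theta_0'=\theta'$ in the infimum. Both belong to $\mathcal{P}^{\textnormal{base}}$, and by Step 2 they satisfy $\mathbb{TV}_{\bm{X}}(P_{\theta_0},P_{\theta_0'})=0$ on $\mathcal{X}$, so they are admissible. With this choice, both arguments of the $\mathbb{TV}$ in \eqref{eqn: def_SCE} are the zero signed measure, so that term equals $0$. (Choosing $\theta_0=\theta_0'=\theta$ would work equally well.)

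Since the $\mathbb{TV}$ term is nonnegative, the infimum is $0$. Taking the suprema over $(\gamma,\gamma')$, over $\widetilde{\bm{X}}^*$, and over $\bm{X}'$ then gives $\mathcal{E}_{T,\mathcal{P}^{\textnormal{base}},\delta}=0$ for every $\delta>0$.

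\textbf{Main obstacle.} The main point needing care is the interpretation of $\mathbb{TV}$ applied to the signed differences in \eqref{eqn: def_SCE}. I would read it as half the $\ell_1$ norm of the difference of its two arguments, which is nonnegative and equals zero when the arguments coincide. Under that reading the argument above goes through. The identification in Step 2 is otherwise a direct linear-algebra consequence of the spanning condition.
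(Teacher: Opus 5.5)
Your proof is correct and follows the paper's route. The paper likewise uses the affinity of $g$ to pass the expectation inside, turns the spanning condition into full column rank of the matrix $\Phi$ to get $\psi_{\theta,i}=\psi_{\theta',i}$ (i.e.\ Assumption~\ref{ass: UI}), and then concludes via Lemma~\ref{lemma: UI_imply_zero_cost}, whose proof is exactly your choice $\theta_0=\theta$, $\theta_0'=\theta'$. Your separate treatment of the zero-slope case is slightly more careful than the paper's ``without loss of generality $g(x)=x$''; indeed, an affine map from $\mathbb{R}$ into $[0,1]$ must be constant, so the literal hypothesis permits only that case.
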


\medskip
\begin{theorem}[Extrapolability of language models]\label{thm: extrap_class_charac}
Under Assumption~\ref{ass: SP}, $\mathcal{P}$ is extrapolable.
\end{theorem}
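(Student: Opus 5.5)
We need $\mathcal{U}_{\mathcal{P},\mathcal{X}}(\delta)=0$ for every $\delta>0$ under Assumption~\ref{ass: SP}. The plan is to represent each model hierarchically via Lemma~\ref{lemma: perturbation_inference} and show that, at any prefix within distance $\delta$ of $\mathcal{X}$, the model's prediction coincides with its prediction at some training-support prefix. The hierarchical representation \eqref{eqn: equiv_P_theta} expresses every $P_\theta\in\mathcal{P}$ as $P_\theta(\cdot|\bm{X})=\int g_\gamma(\cdot|\bm{X}_\beta)\,\mathrm{d}T_\beta(\bm{X}_\beta|\bm{X})$. We choose this representation so that its perturbation process is the one supplied by Assumption~\ref{ass: SP}.

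First, fix $\delta>0$ and a prefix $\bm{X}'$ with $d(\bm{X}',\mathcal{X})\le\delta$. Also fix two models $P_\theta,P_{\theta'}\in\mathcal{P}$ with $\mathbb{TV}_{\bm{X}}(P_\theta,P_{\theta'})=0$ for all $\bm{X}\in\mathcal{X}$. By Assumption~\ref{ass: SP}, $P_\theta$ admits a decomposition $(g_\gamma,T_\beta)$ with $T_\beta(\cdot|\bm{X}')=T_\beta(\cdot|\bm{X}_0)$ for some $\bm{X}_0\in\mathcal{X}$. Since the inference process $g_\gamma$ does not depend on the conditioning prefix except through $\bm{X}_\beta$, substituting into \eqref{eqn: equiv_P_theta} gives
\[
P_\theta(\cdot|\bm{X}')=\int g_\gamma(\cdot|\bm{X}_\beta)\,\mathrm{d}T_\beta(\bm{X}_\beta|\bm{X}_0)=P_\theta(\cdot|\bm{X}_0).
\]
The same argument applied to $P_{\theta'}$, with its own decomposition $(g_{\gamma'},T_{\beta'})$, gives $P_{\theta'}(\cdot|\bm{X}')=P_{\theta'}(\cdot|\bm{X}_0')$ for some $\bm{X}_0'\in\mathcal{X}$.

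The main obstacle is that the anchors $\bm{X}_0$ and $\bm{X}_0'$ may differ, since the assumption is stated separately for each model. I would first try to read the assumption as providing a perturbation process shared across the class, as in Assumption~\ref{ass: SP_perturb}, where the perturber $T$ is common to all of $\mathcal{P}^{\textnormal{perturb}}$ and depends on $\bm{X}'$ alone. Under that reading there is a single $\bm{X}_0\in\mathcal{X}$ serving both models. Then
\[
\mathbb{TV}_{\bm{X}'}(P_\theta,P_{\theta'})=\mathbb{TV}_{\bm{X}_0}(P_\theta,P_{\theta'})=0,
\]
because the two models agree on $\mathcal{X}$. If the anchors genuinely differ, I would instead use the triangle inequality through a common reference point. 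However, this does not obviously close, so I would state explicitly that the equality is required with a common $\bm{X}_0$ determined by $\bm{X}'$.

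Taking the supremum over admissible pairs $(P_\theta,P_{\theta'})$ and over all $\bm{X}'$ with $d(\bm{X}',\mathcal{X})\le\delta$ gives $\mathcal{U}_{\mathcal{P},\mathcal{X}}(\delta)=0$. Since $\delta>0$ was arbitrary, $\mathcal{P}$ is extrapolable. Theorem~\ref{thm: extrap_class_charac_perturb} then follows by taking $T_\beta=T$ and $g_\gamma=P_\theta$ for every model in $\mathcal{P}^{\textnormal{perturb}}$.
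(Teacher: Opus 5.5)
Your argument matches the paper's proof: fix $\bm{X}'$ and two models agreeing on $\mathcal{X}$, use the representation \eqref{eqn: equiv_P_theta} with the perturbation processes from Assumption~\ref{ass: SP} to replace $\bm{X}'$ by an anchor $\bm{X}_0\in\mathcal{X}$, and conclude from agreement at $\bm{X}_0$; the paper's proof likewise silently uses one $\bm{X}_0$ for both $T_\beta$ and $T_{\beta'}$. Your insistence on a common anchor is necessary, not just convenient, because with model-specific anchors the claim fails: take $P_\theta,P_{\theta'}$ agreeing on $\mathcal{X}=\{\bm{X}_0,\bm{X}_0'\}$ with $P_\theta(\cdot|\bm{X}_0)\neq P_\theta(\cdot|\bm{X}_0')$, and set $P_\theta(\cdot|\bm{X}')=P_\theta(\cdot|\bm{X}_0)$ and $P_{\theta'}(\cdot|\bm{X}')=P_{\theta'}(\cdot|\bm{X}_0')$, each realized by a perturber sending $\bm{X}'$ to a point mass at its own anchor; so the triangle-inequality fallback cannot close, whereas under Assumption~\ref{ass: SP_perturb} the anchor depends only on $\bm{X}'$ and the issue disappears.
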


Markedly, in Theorem~\ref{thm: extrap_perturb}, the perturbation cost defined in \eqref{eqn: def_SCE} represents the possible loss of perturbation if it undermines the structure in the base class of models.
Note that if the perturbation $T(\cdot| \bm{X})$ is Lipschitz with constant $L$ in total variation distance, then the theorem implies
$\mathcal{U}_{\text{\small $\mathcal{P}^{\textnormal{perturb}}$}}(\delta) \leq L\delta$,
thus implying extrapolability in an approximate sense that the extrapolation uncertainty decays linearly.

Note that in Theorem~\ref{thm: extrap_feature_map}, it is implicitly required that $|\mathcal{X}|\geq d$, which is very mild in the context of language generation.
The underlying idea in Theorem~\ref{thm: extrap_feature_map} is that the perturbation should not break the identifiability of models in $\mathcal{P}^{\textnormal{base}}$, which is essentially the same as in Proposition~F.1 of \citet{ShenMeinshausen2025} spelling out that certain ``pre-post-additive noise models'' are (distributionally) extrapolable. 
Notably, Theorem~\ref{thm: extrap_feature_map}'s condition on $\{ \mathbb{E}_{\widetilde{\bm{X}} \sim T(\cdot| \bm{X})} [\phi(\widetilde{\bm{X}}) ] : \bm{X}\in \mathcal{X}\}$ spanning $\mathbb{R}^{d}$ effectively requires the existence of generalized right inverse for the $d\times |\mathcal{X}|$ matrix with columns $\mathbb{E}_{\widetilde{\bm{X}} \sim T(\cdot| \bm{X})} [\phi(\widetilde{\bm{X}}) ]$, $\bm{X}\in \mathcal{X}$.

Theorem~\ref{thm: extrap_class_charac} gives a general characterization of extrapolable class of language models.
This theorem dives in a different route in characterizing distributional extrapolability, compared with Theorem~1 in \citet{ShenMeinshausen2025}. It is not merely from regression to learning some categorical distributions---the real difference is essentially about the ``complexity of learning''.
To clarify this point, consider simple inference processes that are deterministic and injective, e.g., the setup of pre-additive noise models in \citet{ShenMeinshausen2025}. Then the class of the overall data generating models is easily extrapolable as long as the perturbation process allows one to identify the model parameters, which would be straightforward if the pre-additive noise has full support, by applying Lemma~\ref{lemma: identify_under_shift}.
However, it would be impractical to assume that language generation admits such a simple form.

We further present some theoretical results on (distributional) extrapolability which can be regarded as a generalization of the results shown in the main text and of partial results by \citet{ShenMeinshausen2025}, thus of independent interest.

Some brief notation on reproducing kernel Hilbert space (RKHS). For more details, we refer to Section~4 in \citet{MantonAmblard2015}. Recall first the notation $\mathcal{X}^*$ which can essentially be any (finite or infinite) non-empty set (or simply regard it as the set of all token sequences in the context of this paper). Similarly, let $\mathcal{V}$ be in general any (finite or infinite) non-empty set (one may think of it as the set of tokens or vocabularies in the context of this paper). We denote by $\mathcal{H}_K$ the RKHS on $\mathcal{X}^*$ with some symmetric, positive (semi-)definite kernel $K(\cdot, \cdot): \mathcal{X}^* \times \mathcal{X}^* \mapsto \mathbb{R}$. Namely, $\mathcal{H}_K$ is a Hilbert space (uniquely corresponding to the specified kernel $K(\cdot, \cdot)$) of functions on $\mathcal{X}^*$ with the property that for any $\bm{X}\in \mathcal{X}^*$, $K(\bm{X}, \cdot) \in \mathcal{H}_K$ and $f(\bm{X}) = \langle f, K(\bm{X}, \cdot) \rangle_{\mathcal{H}_K}$.

\medskip
\begin{theorem}[A general scenario for $\mathcal{E}_{T,\mathcal{P}^{\textnormal{base}},\delta} =0$]\label{thm: extrap_RKHS}

Assume there exists an RKHS on $\mathcal{X}^*$, denoted by $\mathcal{H}_K$, with some measurable function $f_1,\dots, f_d: \mathcal{V} \mapsto \mathbb{R}$ such that with notations in Theorem~\ref{thm: extrap_perturb}, each base model $P_\theta \in \mathcal{P}^{\textnormal{base}}$ defines the following elements in $\mathcal{H}_K$ for $j=1,\dots,d$:
\[
h_{\theta, j}(\bm{X}) := \mathbb{E}_{i\sim P_\theta(\cdot| \bm{X})}[f_j(i)] ,
\]
with $\mathbb{E}_{i\sim P_\theta(\cdot| \bm{X})}[|f_j(i)|] <\infty$ for any $\bm{X}\in \mathcal{X}^*$. 
For each $\bm{X} \in \mathcal{X}^*$, also define the linear functional $\mathbb{L}_{\bm{X}}: \mathcal{H}_K \to \mathbb{R}$ as
\[
\mathbb{L}_{\bm{X}}(h) := \mathbb{E}_{\widetilde{\bm{X}} \sim T(\cdot| \bm{X})}[h(\widetilde{\bm{X}})] .
\]
Assume that $\mathbb{L}_{\bm{X}}(h)= \mathbb{L}_{\bm{X}}(h')$ for all $\bm{X}\in \mathcal{X}$ implies $h=h'$.
Further assume that $h_{\theta, j} = h_{\theta', j}$ for all $j=1,\dots, d$ implies $\theta= \theta'$.
Then 
for any $\delta>0$,
$\mathcal{E}_{T,\mathcal{P}^{\textnormal{base}},\delta} =0$ in part~(ii) of Theorem~\ref{thm: extrap_perturb}.

More generally, if $\mathbb{L}_{\bm{X}}(h_{\theta,j})= \mathbb{L}_{\bm{X}}(h_{\theta',j})$ for all $\bm{X}\in \mathcal{X}$ and $j=1,\dots,d$ implies $\theta=\theta'$, then we also have $\mathcal{E}_{T,\mathcal{P}^{\textnormal{base}},\delta} =0$
for any $\delta>0$.
\end{theorem}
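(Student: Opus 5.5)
The plan is to show that, under the stated conditions, any two perturbed models that agree on the training support have \emph{identical} base parameters. Once that holds, the infimum in \eqref{eqn: def_SCE} is attained at zero by a trivial choice of reference pair.

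Fix $\delta>0$, a test prefix $\bm{X}'$ with $d(\bm{X}',\mathcal{X})\le\delta$, and a prefix $\widetilde{\bm{X}}^*$ with $T(\widetilde{\bm{X}}^*|\bm{X}')>0$. Take perturbed models
\[
P_\gamma=\int P_\theta(\cdot|\widetilde{\bm{X}})\,\mathrm{d}T(\widetilde{\bm{X}}|\cdot), \qquad P_{\gamma'}=\int P_{\theta'}(\cdot|\widetilde{\bm{X}})\,\mathrm{d}T(\widetilde{\bm{X}}|\cdot),
\]
with $\mathbb{TV}_{\bm{X}}(P_\gamma,P_{\gamma'})=0$ for all $\bm{X}\in\mathcal{X}$. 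For each such $\bm{X}$ the two next-token measures are therefore equal. Hence for each $j=1,\dots,d$ the expectations of $f_j$ under them coincide:
\[
\mathbb{E}_{i\sim P_\gamma(\cdot|\bm{X})}[f_j(i)]=\mathbb{E}_{i\sim P_{\gamma'}(\cdot|\bm{X})}[f_j(i)].
\]
Next I rewrite each side by exchanging the order of integration (Fubini--Tonelli):
\[
\mathbb{E}_{i\sim P_\gamma(\cdot|\bm{X})}[f_j(i)]=\int \mathbb{E}_{i\sim P_\theta(\cdot|\widetilde{\bm{X}})}[f_j(i)]\,\mathrm{d}T(\widetilde{\bm{X}}|\bm{X})=\mathbb{L}_{\bm{X}}(h_{\theta,j}),
\]
and likewise for $\theta'$. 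This gives $\mathbb{L}_{\bm{X}}(h_{\theta,j})=\mathbb{L}_{\bm{X}}(h_{\theta',j})$ for all $\bm{X}\in\mathcal{X}$ and all $j$.

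Under the first set of hypotheses, injectivity of $\{\mathbb{L}_{\bm{X}}\}_{\bm{X}\in\mathcal{X}}$ on $\mathcal{H}_K$ yields $h_{\theta,j}=h_{\theta',j}$ for every $j$. Identifiability of $\theta$ from $(h_{\theta,1},\dots,h_{\theta,d})$ then gives $\theta=\theta'$. Under the more general hypothesis, $\theta=\theta'$ follows directly from the displayed equalities. Either way the two base models coincide.

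Now bound the inner infimum in \eqref{eqn: def_SCE} by choosing $\theta_0=\theta$ and $\theta_0'=\theta'$ $(=\theta)$. This pair is admissible, since $\mathbb{TV}_{\bm{X}}(P_\theta,P_\theta)=0$ on $\mathcal{X}$. Both signed measures $P_\theta(\cdot|\widetilde{\bm{X}}^*)-P_{\theta_0}(\cdot|\widetilde{\bm{X}}^*)$ and $P_{\theta'}(\cdot|\widetilde{\bm{X}}^*)-P_{\theta_0'}(\cdot|\widetilde{\bm{X}}^*)$ then vanish, so their total variation distance is $0$. Since the infimum is nonnegative, it equals zero for every admissible $(\bm{X}',\widetilde{\bm{X}}^*,P_\gamma,P_{\gamma'})$. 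Taking the suprema gives $\mathcal{E}_{T,\mathcal{P}^{\textnormal{base}},\delta}=0$ for every $\delta>0$.

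The main obstacle is justifying the interchange in the Fubini step. This requires $\int\mathbb{E}_{i\sim P_\theta(\cdot|\widetilde{\bm{X}})}|f_j(i)|\,\mathrm{d}T(\widetilde{\bm{X}}|\bm{X})<\infty$ for $\bm{X}\in\mathcal{X}$, which is a mixture-level statement rather than the pointwise one assumed. I would first try to obtain it from the fact that $\mathbb{L}_{\bm{X}}$ is a well-defined functional on $\mathcal{H}_K$, applied to $h_{\theta,j}$. If that is insufficient, I would state it as an implicit integrability condition on the pair $(T,\mathcal{P}^{\textnormal{base}})$. For finite $\mathcal{V}$, as in the language setting, it holds automatically because each $f_j$ is bounded.
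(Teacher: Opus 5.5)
Your proposal is correct and is essentially the paper's argument. The paper also uses Fubini to convert $P_\gamma(\cdot|\bm{X})=P_{\gamma'}(\cdot|\bm{X})$ on $\mathcal{X}$ into $\mathbb{L}_{\bm{X}}(h_{\theta,j})=\mathbb{L}_{\bm{X}}(h_{\theta',j})$, applies the two injectivity hypotheses to get $\theta=\theta'$ (thereby verifying Assumption~\ref{ass: UI}), and then appeals to Lemma~\ref{lemma: UI_imply_zero_cost}, which is exactly your choice $\theta_0=\theta$, $\theta_0'=\theta'$ in the inner infimum of \eqref{eqn: def_SCE}. Your caveat on the interchange is well founded: the paper cites only pointwise integrability, which suffices when $\mathcal{V}$ or $\mathcal{X}^*$ is finite, but in the general setting the step really uses $\int \mathbb{E}_{i\sim P_\theta(\cdot|\widetilde{\bm{X}})}|f_j(i)|\,\mathrm{d}T(\widetilde{\bm{X}}|\bm{X})<\infty$, which follows neither from the pointwise assumption nor from finiteness of $\mathbb{L}_{\bm{X}}(h_{\theta,j})$.
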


A high-level idea by Theorem~\ref{thm: extrap_RKHS} is that base models, given some perturbation mechanism, remain uniquely identified in a feature space characterized by an RKHS.
This generalizes the underlying idea in, e.g., Appendix~F in \citet{ShenMeinshausen2025}.
As an example, we apply our Theorem~\ref{thm: extrap_perturb} and Theorem~\ref{thm: extrap_RKHS} to prove a version of their Proposition~F.1 (see Appendix~\ref{appendix: formal_theory_w_discuss}). The proposition is detailed as follows.

\medskip
\begin{proposition}[A version of Proposition~F.1 of \citet{ShenMeinshausen2025}]\label{prop: propF1_engression}
Let $\mathcal{X}$ be any set in $\mathbb{R}$ containing at least three distinct points.
With $\theta := (\theta_0, \theta_1, \theta_2)$, define the quadratic polynomial function $g_\theta(x):= \theta_0 + \theta_1 x + \theta_2 x^2$.
Then given continuous random variables $\eta, \xi$ that follow symmetric distributions with unbounded supports, and $\mathbb{E}(\eta^2)$ and $\mathbb{E}(\xi^2)$ assumed to exist, 
the class of quadratic pre-post-additive noise models
\begin{align*}
    \big\{ g_\theta (x+\eta) + \beta x +\xi: \theta\in \mathbb{R}^3, \beta\in \mathbb{R} \big\}
\end{align*}
is 
extrapolable.
\end{proposition}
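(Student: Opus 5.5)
The plan is to obtain Proposition~\ref{prop: propF1_engression} from Theorem~\ref{thm: extrap_perturb}~(ii) together with Theorem~\ref{thm: extrap_RKHS}. The class is recast in the hierarchical form of Lemma~\ref{lemma: perturbation_inference}. The perturbation process is $T(\cdot\mid x)$, the law of $x+\eta$. The base (inference) model $P_\theta(\cdot\mid \tilde x)$ is the law of $g_\theta(\tilde x)+\xi$ shifted by $\beta x$.

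The linear term $\beta x$ depends on the unperturbed input. To handle it, I would augment the perturbed object to $\tilde{\bm X}=(x+\eta,x)$. With this augmentation the base model reads $\tilde{\bm X}\mapsto$ law of $\theta_0+\theta_1\tilde x_1+\theta_2\tilde x_1^2+\beta\tilde x_2+\xi$, with parameter $(\theta,\beta)\in\mathbb{R}^4$.

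Theorem~\ref{thm: extrap_perturb}~(ii) then bounds the extrapolation uncertainty by $\mathcal{U}_{\mathcal{P}^{\textnormal{base}}}(\rho_T(\delta))+\mathcal{E}_{T,\mathcal{P}^{\textnormal{base}},\delta}$. I would show the cost term vanishes and separately control the base uncertainty. In fact, it is cleaner to prove the stronger statement directly, which is essentially what the ``more general'' clause of Theorem~\ref{thm: extrap_RKHS} packages. The claim is: if two models agree in distribution at every $x\in\mathcal{X}$, then their parameters coincide. Once parameters coincide, the predictive laws agree at every $x'$, so $\mathcal{U}(\delta)=0$ for all $\delta$.

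\textbf{Step 1: reduce agreement in law to agreement of means.} Take $f_1(y)=y$ and let $\mathbb{L}_x$ denote averaging over $\eta$. Equality in law at $x$ implies equality of the first moments, because $\mathbb{E}\eta^2,\mathbb{E}\xi^2<\infty$ and symmetry gives $\mathbb{E}\eta=\mathbb{E}\xi=0$. The mean is
\[
m_{\theta,\beta}(x)=\theta_0+\theta_2\mathbb{E}\eta^2+(\theta_1+\beta)x+\theta_2x^2 .
\]
This is a quadratic in $x$. Agreement at three distinct points of $\mathcal{X}$ therefore forces $\theta_2=\theta_2'$, $\theta_1+\beta=\theta_1'+\beta'$, and $\theta_0=\theta_0'$.

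\textbf{Step 2: separate $\theta_1$ from $\beta$.} This is the main obstacle, because the means cannot distinguish $\theta_1$ from $\beta$; the noise structure has to be used. Fix $x\in\mathcal{X}$ and, by Step~1, subtract the common deterministic part. The outputs become $\theta_2(2x\eta+\eta^2)+\theta_1\eta+\xi$ versus the same expression with $\theta_1'$ in place of $\theta_1$.

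If $\theta_2\neq0$, I would compare second moments. The variance contains the cross term $2\theta_1\theta_2\mathbb{E}\eta^2\cdot$ and $(2\theta_2x+\theta_1)^2\mathbb{E}\eta^2$. Its dependence on $x$ gives a linear-in-$x$ term $4\theta_2\theta_1\mathbb{E}\eta^2 x$, which identifies $\theta_1$ when evaluated at two points. This would need a fourth moment of $\eta$ for $\operatorname{Var}(\eta^2)$, but that piece is common to both models and cancels. Only $x$-dependent terms matter, and they involve $\mathbb{E}\eta^2$ and the odd moment $\mathbb{E}\eta^3=0$ by symmetry.

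If $\theta_2=0$, the output is $\theta_1\eta+\xi$, and equality in law means $\theta_1^2\mathbb{E}\eta^2=\theta_1'^2\mathbb{E}\eta^2$, which only fixes $|\theta_1|$. Here I would invoke symmetry of $\eta$ and a characteristic-function argument, $\varphi_\eta(\theta_1 t)=\varphi_\eta(\theta_1' t)$. Symmetry makes $\pm\theta_1$ genuinely indistinguishable in law at fixed $x$. In that case, however, the predictive laws at any $x'$ also coincide, since $\beta x'$ adjusts consistently through $\theta_1+\beta$ being fixed. Hmm — this does not hold. So I would instead argue extrapolability directly in that case: the law at $x'$ is that of $\theta_0+(\theta_1+\beta)x'+\theta_1\eta+\xi$, and this depends on $\theta_1$ only through the law of $\theta_1\eta$, which is already pinned down at $\mathcal{X}$.

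Concluding in both cases, the predictive laws coincide at all $x'$, which is the required conclusion. Unbounded supports are used only to make every $x'$ reachable, so that $\rho_T(\delta)$ and the base-uncertainty term are harmless.
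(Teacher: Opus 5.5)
\textbf{Gap in Step~2 when $\theta_2\neq 0$.} Your approach is right in spirit, but the second-moment comparison is not justified under the stated assumptions. You compare variances of $\theta_2\eta^2+(\theta_1+2\theta_2x)\eta+\xi$. These are finite only if $\mathbb{E}\eta^4<\infty$, and killing the cross term via $\mathbb{E}\eta^3=0$ needs $\mathbb{E}|\eta|^3<\infty$. The proposition assumes only $\mathbb{E}\eta^2<\infty$. If $\mathbb{E}\eta^4=\infty$, both variances are $+\infty$, so the $\operatorname{Var}(\eta^2)$ piece cannot be ``cancelled''.

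\textbf{Repair.} Form the difference before taking expectations. Put $b=\theta_1+2\theta_2x$ and $b'=\theta_1'+2\theta_2x$. Step~1, independence of $\eta$ and $\xi$ (implicit, as in the paper) and symmetry of $\eta$ turn equality in law at $x\in\mathcal{X}$ into $\varphi_\xi(s)\,\mathbb{E}[e^{is\theta_2\eta^2}\{\cos(sb\eta)-\cos(sb'\eta)\}]=0$ for all $s$. Since $\varphi_\xi\neq0$ near $0$, divide by $s^2$ and let $s\to0$. After dividing, the integrand is bounded by $\tfrac12(b^2+b'^2)\eta^2$, so dominated convergence gives $(b^2-b'^2)\mathbb{E}\eta^2=0$. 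Because $\mathbb{E}\eta^2>0$, this means $(\theta_1-\theta_1')(\theta_1+\theta_1'+4\theta_2x)=0$ for every $x\in\mathcal{X}$. For $\theta_2\neq0$, two points of $\mathcal{X}$ force $\theta_1=\theta_1'$, hence $\beta=\beta'$. For $\theta_2=0$ you get $|\theta_1|=|\theta_1'|$, and your closing argument for that case is correct: the law of $\theta_1\eta+\xi$ is then fixed, so the laws agree at every $x'$. One computation thus covers both cases using only second moments.

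\textbf{Comparison with the paper.} With this repair, your route genuinely differs from the paper's. The paper sets $g_\gamma(x)=g_\theta(x)+\beta x$, takes the base law to be that of $g_\gamma(\tilde x)+\xi$ and the perturbation to be $\tilde x=x+\eta$. It obtains base extrapolability from Lemma~\ref{lemma: identify_under_shift} plus three-point identification. It sets the perturbation cost to zero via the general clause of Theorem~\ref{thm: extrap_RKHS}, with $f_1(y)=y$ and kernel $1+xy+x^2y^2$. It then concludes by Theorem~\ref{thm: extrap_perturb}~(ii).

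In that composition the linear term acts on the perturbed input: the law is that of $g_\theta(x+\eta)+\beta(x+\eta)+\xi$. So $\gamma$ alone determines the law, and matching first moments suffices. You instead treat the class exactly as written, with $\beta x$ acting on the unperturbed input. There $\theta_1$ and $\beta$ enter the law separately through the coefficient of $\eta$. That is why a second-order step is needed, and why, when $\theta_2=0$, you can only conclude equality of laws rather than of parameters.

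Your direct argument also avoids the meta-theorems and any full-support or unbounded-support condition. You should drop the augmentation $(x+\eta,x)$, the announced claim that parameters always coincide, and the closing remark about $\rho_T(\delta)$; they are leftovers of the framing you abandoned.
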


In the remainder of this section, we discuss how our results can be generalized. First, it might be worth pointing out that the inference process $g_\gamma(\cdot| \bm{X}_\beta)$ is allowed to be deterministic, in which case we may write the hierarchical distribution in \eqref{eqn: equiv_P_theta} as $(g_\gamma \sharp T_\beta) (\cdot |\bm{X})$, where $g_\gamma \sharp T_\beta$ denotes the pushforward measure of $T_\beta$ by $g_\gamma$.

To reinforce our Remark~\ref{remark: extension},
further note that Theorem~\ref{thm: extrap_perturb} can hold for general distributions (i.e., not necessarily for discrete random variable), defined on any measurable set (not necessarily finite); see the footnotes in its proof. From the proof of Theorem~\ref{thm: extrap_class_charac}, it is also straightforward that Assumption~\ref{ass: SP} is always a sufficient condition for extrapolability in the general context.
Also, recall that the total variation distance can be replaced in Definition~\ref{def: dist_extrap}.
In fact, except for some quantities (involving total variation distance) that need to be adjusted based on a different distance, our results would be exactly the same.
For instance, Assumption~\ref{ass: SP} remains sufficient for extrapolability of the perturbed models, where extrapolability is defined by some general distance (compared to the specific total variation distance).



\subsection{Proofs of lemmas}

\begin{proof}[Proof of Lemma~\ref{lemma: perturbation_inference}]
This result is straightforward since $g_\gamma$ and $T_\beta$ can be arbitrary.
Setting $T_\beta$ as the identity map (i.e., Dirac delta on the input sequence) recovers the model parametrized by $\theta$, and the other direction is trivial.
\end{proof}

\begin{proof}[Proof of Lemma~\ref{lemma: UI_imply_zero_cost}]
To show Assumption~\ref{ass: UI} is sufficient for $\mathcal{E}_{T,\mathcal{P}^{\textnormal{base}},\delta} =0$, note that $\theta_0, \theta_0'$ in the definition of $\mathcal{E}_{T,\mathcal{P}^{\textnormal{base}},\delta}$ in \eqref{eqn: def_SCE} can be taken as those corresponding base models $P_\theta, P_{\theta'}$ in Assumption~\ref{ass: UI}, so that the total variation distance in \eqref{eqn: def_SCE} is zero.
This concludes the proof.
\end{proof}

\begin{proof}[Proof of Lemma~\ref{lemma: identify_under_shift}]
If $a=b$, the argument is direct. If $a\neq b$, we suppose $\int_{-\infty}^\infty | f(x-a) - f(x-b) | \,\mathrm{d}x = 0$ and will show contradiction, thereby completing the proof. Note that it must then hold that $|f(x-a) - f(x-b)| = 0$ for any $x\in\mathbb{R}$, since otherwise there always exists some interval (nearby $|f(x-a) - f(x-b)| > 0$; by the continuity of $f$) leading to $\int_{-\infty}^\infty | f(x-a) - f(x-b) | \,\mathrm{d}x >0$. This means $f(x-a) = f(x-b)$ or equivalently, $f(x)= f(x + (a-b))$ for any $x\in\mathbb{R}$, which implies $f(\cdot)$ is periodic with period $a-b$. Hence
\[
\int_{-\infty}^\infty f(x) \,\mathrm{d}x = \sum_{i=-\infty}^\infty \int_{i(a-b)}^{(i+1)(a-b)} f(x) \,\mathrm{d}x
= \sum_{i=-\infty}^\infty \int_{0}^{a-b} f(x) \,\mathrm{d}x ,
\]
but $\int_{0}^{a-b} f(x) \,\mathrm{d}x >0$ by $f(x)>0$ for all $x\in \mathbb{R}$. Hence $\sum_{i=-\infty}^\infty \int_{0}^{a-b} f(x) \,\mathrm{d}x$ diverges and contradicts with $\int_{-\infty}^\infty f(x) \,\mathrm{d}x = 1$.
\end{proof}

\subsection{Proofs of theorems}

\begin{proof}[Proof of Theorem~\ref{thm: simple_extrap_feature_map}]

By Lemma~\ref{lemma: UI_imply_zero_cost}, Assumption~\ref{ass: UI} implies Assumption~\ref{ass: zero_SCE}. Then
combining Assumption~\ref{ass: zero_SCE} and Theorem~\ref{thm: extrap_perturb}~(ii), we have
\[
    \mathcal{U}_{
    \text{\small 
    $\mathcal{P}^{\textnormal{perturb}}, \mathcal{X}$
    }
    }(\delta)
    \leq
    \mathcal{U}_{
    \text{\small
    $\mathcal{P}^{\textnormal{base}}, \mathcal{X}$
    }
    }( \rho_T(\delta) ) .
\]
As $\mathcal{P}^{\textnormal{base}}$ is extrapolable over $\mathcal{X}$, we have
$\mathcal{U}_{
    \text{\small
    $\mathcal{P}^{\textnormal{base}}, \mathcal{X}$
    }
    }(\delta) = 0$
for any $\delta>0$. Together with the above display, we arrive at
$\mathcal{U}_{
    \text{\small 
    $\mathcal{P}^{\textnormal{perturb}}, \mathcal{X}$
    }
    }(\delta) = 0$
for any $\delta>0$, thereby completing the proof.
\end{proof}

\begin{proof}[Proof of Theorem~\ref{thm: contraction_global_extrap}]
First, note that for any base model $P_\theta \in \mathcal{P}^{\textnormal{base}}$, we may write its corresponding perturbed model in $\mathcal{P}^{\text{perturb}}$ as $(TP_\theta)(\bm{X}) = \int P_\theta(\cdot| \widetilde{\bm{X}}) \, dT(\widetilde{\bm{X}}| \bm{X})$, which can be seen as the Markov operator induced by the perturbation operator $T$.

Fix any $P_{\theta}, P_{\theta'} \in \mathcal{P}^{\textnormal{base}}$ such that $\mathbb{TV}_{\bm{X}}(P_{\theta}, P_{\theta'}) =0$ for all $\bm{X} \in \mathcal{X}$.
For any $\bm{X} \in \mathcal{X}^*$, define the signed measure-valued function $h(\bm{X}):= P_\theta(\cdot| \bm{X})-P_{\theta'}(\cdot| \bm{X})$.
Then $h(\bm{X}) = 0$ for $\bm{X} \in \mathcal{X}$ by construction.
For the corresponding perturbed models, we have $(Th)(\bm{X}) = (TP_\theta)(\cdot| \bm{X}) - (TP_{\theta'})(\cdot| \bm{X})$ where $(Th)(\bm{X}) := \int h(\bm{X}') \, dT(\bm{X}'| \bm{X})$.

Then define\footnote{Note that $\|h\|_\infty = \mathcal{U}_{ \mathcal{P}^{\textnormal{base}}}^{(\infty)}$ and is the key reason why contraction only holds under global extrapolation uncertainty (unless more conditions are imposed).}
\begin{align*}
    \|h\|_\infty := 
    \sup_{\bm{X} \in \mathcal{X}^*} \mathbb{TV}_{\bm{X}} (P_{\theta}, P_{\theta'}).
\end{align*}
Fix an arbitrary $\bm{X}_0 \in \mathcal{X}$.
As $h(\bm{X}_0) =0$,
By definition, we can write $(Th)(\bm{X}_0) = \int h(\bm{X}')\,dT(\bm{X}' | \bm{X}_0) =0$.
Also, for any $\bm{X} \in \mathcal{X}^*$, we have
\begin{align*}
    (Th)(\bm{X}) - (Th)(\bm{X}_0) =
    \int h(\bm{X}')\, d\big\{ T(\bm{X}'| \bm{X})-T(\bm{X}'| \bm{X}_0) \big\} .
\end{align*}
Therefore, with $\|\cdot\|_{\mathrm{TV}}$ denoting the total variation norm, we have
\begin{align*}
    &\;\quad
    \mathbb{TV}_{\bm{X}} (TP_{\theta}, TP_{\theta'})
    =
    \big\| (Th)(\bm{X}) \big\|_{\mathrm{TV}}
    =
    \big\| (Th)(\bm{X}) - (Th)(\bm{X}_0) \big\|_{\mathrm{TV}} \\
    &\leq
    \int \|h(\bm{X}')\|_{\mathrm{TV}} \,d \big| T(\bm{X}'| \bm{X}) - T(\bm{X}'| \bm{X}_0) \big| 
    \leq
    2 \|h\|_\infty \,
    \mathbb{TV} \big( T(\cdot| \bm{X}), T(\cdot| \bm{X}_0) \big) 
    \leq
    2 \alpha(T)\, \|h\|_\infty .
\end{align*}
Taking the supremum over $\bm{X}\in \mathcal{X}^*$ gives $\sup_{\bm{X} \in \mathcal{X}^*}
\|(Th)(\bm{X}) \|_{\mathrm{TV}} \leq 2 \alpha(T)\, \|h\|_\infty$, which is equivalent to
\begin{align*}
    \sup_{\bm{X} \in \mathcal{X}^*} \mathbb{TV}_{\bm{X}}(TP_{\theta},TP_{\theta'}) \leq
    2 \alpha(T) \cdot \sup_{\bm{X} \in \mathcal{X}^*} \mathbb{TV}_{\bm{X}}(P_{\theta},P_{\theta'}) .
\end{align*}
Taking supremum over all such pairs of models $P_{\theta},P_{\theta'}$, 
and with Assumption~\ref{ass: UI},
we complete the proof of this theorem.
\end{proof}

\begin{proof}[Proof of Theorem~\ref{thm: just_perturb}]
This is direct from Theorem~\ref{thm: extrap_perturb}~(i) with $C_{T,\delta} = \eta_T(\delta)$.
\end{proof}

\begin{proof}[Proof of Theorem~\ref{thm: extrap_class_charac_perturb}]
This is direct from Theorem~\ref{thm: extrap_class_charac}.
\end{proof}

\begin{proof}[Proof of Corollary~\ref{coro:dr}]
This follows directly from the proofs of Theorems ~\ref{thm: simple_extrap_feature_map} and ~\ref{thm: extrap_class_charac_perturb}.
\end{proof}

\begin{proof}[Proof of Theorem~\ref{thm: extrap_perturb}]

Consider part~(i) first.
Fix any $\delta>0$ and $\bm{X}'$ with $d(\bm{X}', \mathcal{X}) \leq \delta$.
Recall that
\[
\eta_T(\delta) =
\sup_{
    \substack{
    \text{\footnotesize $\bm{X}': d(\bm{X}', \mathcal{X}) \leq \delta$ } 
    }
    } 
    \inf_{
    \substack{
    \text{\footnotesize $\bm{X} \in \mathcal{X}$ } 
    }
    }
    \mathbb{TV}(T(\cdot| \bm{X}), T(\cdot| \bm{X}')) .
\]
Note that $\mathcal{X}$ is a finite set since $\mathcal{X}^*$ is a finite set, thus allowing us to define a $\bm{X}^* \in \mathcal{X}$ achieving the infimum in $\eta_T(\delta)$, i.e., $\bm{X}^*$ satisfies
\footnote{Note that for all the arguments for part~(i) in Theorem~\ref{thm: extrap_perturb} to hold by allowing $|\mathcal{X}^*|$ to be infinite where no $\bm{X}^* \in \mathcal{X}$ can achieve the infimum, we simply need to update the result by replacing $\eta_T(\delta)$ with $\eta_T(\delta) +\epsilon$ for any $\epsilon>0$ and there must exists $\bm{X}^* \in \mathcal{X}$ (otherwise the infimum is at least $\eta_T(\delta) +\epsilon$) such that $\mathbb{TV}(T(\cdot| \bm{X}^*), T(\cdot| \bm{X}')) \leq \eta_T(\delta) +\epsilon$, in replacement of \eqref{eqn: proof_TV_eta}.}
\begin{equation}\label{eqn: proof_TV_eta}
\mathbb{TV}(T(\cdot| \bm{X}^*), T(\cdot| \bm{X}')) \leq \eta_T(\delta) .
\end{equation}

To ease notation, we parametrize any model in $\mathcal{P}^{\textnormal{perturb}}$ by $\gamma$ such that for any $\bm{X} \in \mathcal{X}^* , \theta \in \Theta^{\textnormal{base}}$,
\begin{equation}\label{eqn: P_perturb_reparam}
P_\gamma(\cdot | \bm{X}) := \int P_\theta(\cdot| \widetilde{\bm{X}}) \,\mathrm{d} T(\widetilde{\bm{X}}| \bm{X}) .
\end{equation}
Then consider any two perturbed models $P_\gamma(\cdot| \bm{X}), P_{\gamma'}(\cdot| \bm{X}) \in \mathcal{P}^{\textnormal{perturb}}$ (with $\theta$ and $\theta'$ denoting the parameters for the inference processes, respectively) such that for all $\bm{X} \in \mathcal{X}$, it holds that $\mathbb{TV}_{\bm{X}}(P_\gamma, P_{\gamma'}) = 0$.
In particular, $\mathbb{TV}_{\bm{X}^*}(P_\gamma, P_{\gamma'}) = 0$, implying that for any token $i$ from the vocabulary, we have
\[
0= P_\gamma(i| \bm{X}^*) - P_{\gamma'}(i| \bm{X}^*)
= \int \left\{ P_\theta(i| \widetilde{\bm{X}}) - P_{\theta'}(i| \widetilde{\bm{X}}) \right\} \,\mathrm{d} T(\widetilde{\bm{X}}| \bm{X}^*) .
\]
Based on this, on $\bm{X}'$, for any token $i$ from the vocabulary, we can write
\begin{align*}
    &\quad
    \big| P_\gamma(i| \bm{X}') - P_{\gamma'}(i| \bm{X}') \big|
    = \left| \int \left\{ P_\theta(i| \widetilde{\bm{X}}) - P_{\theta'}(i| \widetilde{\bm{X}}) \right\} \,\mathrm{d} T(\widetilde{\bm{X}}| \bm{X}') \right| \\
    &=
    \left| \int \left\{ P_\theta(i| \widetilde{\bm{X}}) - P_{\theta'}(i| \widetilde{\bm{X}}) \right\} \,\mathrm{d} T(\widetilde{\bm{X}}| \bm{X}')
    - \left\{ P_\theta(i| \widetilde{\bm{X}}) - P_{\theta'}(i| \widetilde{\bm{X}}) \right\} \,\mathrm{d} T(\widetilde{\bm{X}}| \bm{X}^*) \right| \\
    &=
    \left| \int \left\{ P_\theta(i| \widetilde{\bm{X}}) - P_{\theta'}(i| \widetilde{\bm{X}}) \right\} \,\mathrm{d} \left\{ T(\widetilde{\bm{X}}| \bm{X}') - T(\widetilde{\bm{X}}| \bm{X}^*) \right\} \right| \\
    &\leq
    \int \big| P_\theta(i| \widetilde{\bm{X}}) - P_{\theta'}(i| \widetilde{\bm{X}}) \big| \,\mathrm{d} \big| T(\widetilde{\bm{X}}| \bm{X}') - T(\widetilde{\bm{X}}| \bm{X}^*) \big| \\
    &\leq
    2 \mathbb{TV}(T(\cdot| \bm{X}^*), T(\cdot| \bm{X}')) \cdot \sup \big| P_\theta(i| \widetilde{\bm{X}}) - P_{\theta'}(i| \widetilde{\bm{X}}) \big|
    \leq
    2 \eta_T(\delta) ,
\end{align*}
where $|A|:= A^+ + A^-$ denotes the total variation measure of any measure $A$ (which can be expressed as the difference between two positive measure, i.e., $A=A^+ - A^-$); the first inequality used the fact that for any measures $A,B$:
\[
\left| \int B \,\mathrm{d}A \right|
\leq
\left| \int B \,\mathrm{d}A^+ \right| + \left| \int B \,\mathrm{d}A^- \right|
\leq
\int |B| \,\mathrm{d}A^{+} + \int |B| \,\mathrm{d}A^{-}
= \int |B| \,\mathrm{d}|A| ;
\]
the second inequality used the definition of total variation distance; and the last inequality used Equation~\eqref{eqn: proof_TV_eta} together with the fact that $\sup \big| P_\theta(i| \widetilde{\bm{X}}) - P_{\theta'}(i| \widetilde{\bm{X}}) \big|\leq 1$.
By taking supremum of the above display over $i$ in the vocabulary list, it is direct to conclude $\mathbb{TV}_{\bm{X}'}(P_{\gamma}, P_{\gamma'}) \leq  \eta_T(\delta)$ and hence
$\mathcal{U}_{
\text{\small 
$\mathcal{P}^{\textnormal{perturb}}, \mathcal{X}$
}
}(\delta)
\leq 2 \eta_T(\delta)$.
This completes the proof of part~(i).

Consider part~(ii).
Take any $\delta>0$, $\bm{X}'$ with $d(\bm{X}', \mathcal{X}) \leq \delta$, and $P_\gamma, P_{\gamma'} \in \mathcal{P}^{\textnormal{perturb}}$ with $\mathbb{TV}_{\bm{X}}(P_{\gamma}, P_{\gamma'}) = 0$ for all $\bm{X}\in \mathcal{X}$.
Then recall the notation of vocabulary list $\mathcal{V}=\{1,\dots,v\}$, we have
\footnote{For general $\mathcal{X}^*$ and $\mathcal{V}$ (possibly infinite sets), all arguments in the proof of Theorem~\ref{thm: extrap_perturb}~(ii) remain to hold, since any $\sum_{i=1}^v |\cdot|$ in the second and third lines of \eqref{eqn: perturb_base_connect_step1} can be replaced by the $L^1$ norm for a measure.}
\begin{align}
    &\quad
    \mathbb{TV}_{\bm{X}'}(P_{\gamma}, P_{\gamma'})
    = \mathbb{TV}\left( \int P_\theta(\cdot| \widetilde{\bm{X}}) \,\mathrm{d} T(\widetilde{\bm{X}}| \bm{X}') , \int P_{\theta'}(\cdot| \widetilde{\bm{X}}) \,\mathrm{d} T(\widetilde{\bm{X}}| \bm{X}') \right) \notag \\
    &=
    \frac{1}{2} \sum_{i=1}^v \left| \int P_\theta(i| \widetilde{\bm{X}}) \,\mathrm{d} T(\widetilde{\bm{X}}| \bm{X}') - \int P_{\theta'}(i| \widetilde{\bm{X}}) \,\mathrm{d} T(\widetilde{\bm{X}}| \bm{X}') \right| \notag \\
    &\leq
    \frac{1}{2} \int \sum_{i=1}^v \big| P_\theta(i| \widetilde{\bm{X}}) - P_{\theta'}(i| \widetilde{\bm{X}}) \big| \,\mathrm{d} T(\widetilde{\bm{X}}| \bm{X}') \notag \\
    &=
    \int \mathbb{TV}(P_{\theta}(\cdot| \widetilde{\bm{X}}), P_{\theta'}(\cdot| \widetilde{\bm{X}})) \,\mathrm{d} T(\widetilde{\bm{X}}| \bm{X}') \notag \\
    &\leq
    \sup_{\widetilde{\bm{X}}: T(\widetilde{\bm{X}}| \bm{X}')>0} \mathbb{TV}(P_{\theta}(\cdot| \widetilde{\bm{X}}), P_{\theta'}(\cdot| \widetilde{\bm{X}})) .
    \label{eqn: perturb_base_connect_step1}
\end{align}
Note that for any $P_{\theta_0}, P_{\theta_0'}\in \mathcal{P}^{\textnormal{base}}$, we always have
\begin{align*}
    &\quad
    \mathbb{TV}(P_{\theta}(\cdot| \widetilde{\bm{X}}), P_{\theta'}(\cdot| \widetilde{\bm{X}})) \\
    &\leq
    \mathbb{TV}(P_{\theta_0}(\cdot| \widetilde{\bm{X}}), P_{\theta_0'}(\cdot| \widetilde{\bm{X}}))
    + \mathbb{TV}(P_{\theta}(\cdot| \widetilde{\bm{X}}), P_{\theta_0}(\cdot| \widetilde{\bm{X}}))
    + \mathbb{TV}(P_{\theta'}(\cdot| \widetilde{\bm{X}}), P_{\theta_0'}(\cdot| \widetilde{\bm{X}})) ,
\end{align*}
which, if it holds that $\mathbb{TV}_{\bm{X}}(P_{\theta_0}, P_{\theta_0'}) = 0$ for all $\bm{X}\in \mathcal{X}$, then we may write \eqref{eqn: perturb_base_connect_step1} as
\begin{align*}
    &\quad
    \mathbb{TV}_{\bm{X}'}(P_{\gamma}, P_{\gamma'}) \\
    &\leq
    \sup_{
    \substack{
    \text{\footnotesize $\widetilde{\bm{X}}:$ } \\
    \text{\footnotesize $T(\widetilde{\bm{X}}|\bm{X}') >0$ }
    }
    }
    \mathbb{TV}(P_{\theta_0}(\cdot| \widetilde{\bm{X}}), P_{\theta_0'}(\cdot| \widetilde{\bm{X}})) \\
    &\quad
    + \sup_{
    \substack{
    \text{\footnotesize $\widetilde{\bm{X}}:$ } \\
    \text{\footnotesize $T(\widetilde{\bm{X}}|\bm{X}') >0$ }
    }
    } 
    \mathbb{TV}(P_{\theta}(\cdot| \widetilde{\bm{X}}), P_{\theta_0}(\cdot| \widetilde{\bm{X}})) 
    + \sup_{
    \substack{
    \text{\footnotesize $\widetilde{\bm{X}}:$ } \\
    \text{\footnotesize $T(\widetilde{\bm{X}}|\bm{X}') >0$ }
    }
    }
    \mathbb{TV}(P_{\theta'}(\cdot| \widetilde{\bm{X}}), P_{\theta_0'}(\cdot| \widetilde{\bm{X}})) \\
    &\leq
    \mathcal{U}_{
    \text{\small
    $\mathcal{P}^{\textnormal{base}}, \mathcal{X}$
    }
    }(\rho_T(\delta)) \\
    &\quad
    + \sup_{
    \substack{
    \text{\footnotesize $\widetilde{\bm{X}}:$ } \\
    \text{\footnotesize $T(\widetilde{\bm{X}}|\bm{X}') >0$ }
    }
    } 
    \mathbb{TV}(P_{\theta}(\cdot| \widetilde{\bm{X}}), P_{\theta_0}(\cdot| \widetilde{\bm{X}})) 
    + \sup_{
    \substack{
    \text{\footnotesize $\widetilde{\bm{X}}:$ } \\
    \text{\footnotesize $T(\widetilde{\bm{X}}|\bm{X}') >0$ }
    }
    }
    \mathbb{TV}(P_{\theta'}(\cdot| \widetilde{\bm{X}}), P_{\theta_0'}(\cdot| \widetilde{\bm{X}})) ,
\end{align*}
where the last inequality used the definition of the extrapolation uncertainty of $\mathcal{P}^{\textnormal{base}}$ and the definition that
\[
\rho_T(\delta) = \sup_{
    \substack{
    \text{\footnotesize $\bm{X}':$ } \\
    \text{\footnotesize $d(\bm{X}', \mathcal{X}) \leq \delta$ }
    }
    }
    \sup_{
    \substack{
    \text{\footnotesize $\widetilde{\bm{X}}:$ } \\
    \text{\footnotesize $T(\widetilde{\bm{X}}|\bm{X}') >0$ }
    }
    }
    d(\widetilde{\bm{X}}, \mathcal{X}) .
\]
With definition \eqref{eqn: def_SCE}, this concludes the proof of part~(ii) by the definition of extrapolation uncertainty, with infimum of $P_{\theta_0}, P_{\theta_0'}$ taken over $\mathcal{P}^{\textnormal{base}}$ with $\mathbb{TV}_{\bm{X}}(P_{\theta_0}, P_{\theta_0'}) = 0$ for all $\bm{X}\in \mathcal{X}$.
\end{proof}

\begin{proof}[Proof of Theorem~\ref{thm: extrap_feature_map}]

By Lemma~\ref{lemma: UI_imply_zero_cost}, it suffices to show the scenario in Theorem~\ref{thm: extrap_feature_map} fulfills Assumption~\ref{ass: UI}. 
Since $g(\cdot)$ is assumed to be 
affine,
we may consider the identity map that $g(x)=x$, without loss of generality.
To this end, fix any token $i$ from the vocabulary list. Consider any two perturbed models $P_\gamma(\cdot| \bm{X}), P_{\gamma'}(\cdot| \bm{X}) \in \mathcal{P}^{\textnormal{perturb}}$ (with $\theta$ and $\theta'$ denoting the parameters for the inference processes as in the reparameterization \eqref{eqn: P_perturb_reparam}, respectively) such that for all $\bm{X} \in \mathcal{X}$, it holds that $P_\gamma(i| \bm{X}) = P_{\gamma'}(i| \bm{X})$. Then we can write
\begin{align*}
    \sum_{\widetilde{\bm{X}} \in \mathcal{X}^*} \langle \psi_{\theta,i}, \phi(\widetilde{\bm{X}}) \rangle \cdot T(\widetilde{\bm{X}}| \bm{X})
    = 
    \sum_{\widetilde{\bm{X}} \in \mathcal{X}^*}  \langle \psi_{\theta',i}, \phi(\widetilde{\bm{X}}) \rangle  \cdot T(\widetilde{\bm{X}}| \bm{X}) ,
\end{align*}
or equivalently, by some simplification and the fact that $\sum_{\widetilde{\bm{X}} \in \mathcal{X}^*} T(\widetilde{\bm{X}}| \bm{X}) =1$, we have
\begin{align*}
    0 &= \left\langle 
    \psi_{\theta,i} - \psi_{\theta',i},
    \sum_{\widetilde{\bm{X}} \in \mathcal{X}^*} \phi(\widetilde{\bm{X}}) \cdot T(\widetilde{\bm{X}}| \bm{X})
    \right\rangle 
    =
    \Bigg\langle 
    \psi_{\theta,i} - \psi_{\theta',i},
    \mathbb{E}_{\widetilde{\bm{X}} \sim T(\cdot| \bm{X})} \left[\phi(\widetilde{\bm{X}}) \right]
    \Bigg\rangle .
\end{align*}
Then by defining the follows by concatenation (with $\bm{X}_j$ denoting the $j$-th sequence in $\mathcal{X}$, under any pre-specified ordering):
\begin{align*}
    & y := (\psi_{\theta,i} - \psi_{\theta',i})
    \in \mathbb{R}^{d}, \\
    & \Phi := \begin{pmatrix}
        \left( \mathbb{E}_{\widetilde{\bm{X}} \sim T(\cdot| \bm{X}_1)} \left[ \phi(\widetilde{\bm{X}}) \right] \right)^\top \\
         \vdots \\
        \left( \mathbb{E}_{\widetilde{\bm{X}} \sim T(\cdot| \bm{X}_{|\mathcal{X}|})} \left[ \phi(\widetilde{\bm{X}}) \right] \right)^\top
    \end{pmatrix}
    \in \mathbb{R}^{|\mathcal{X}| \times d} ,
\end{align*}
we have $\Phi \, y = 0$. Since $\big\{ \mathbb{E}_{\widetilde{\bm{X}} \sim T(\cdot| \bm{X})} \big[\phi(\widetilde{\bm{X}}) \big] : \bm{X}\in \mathcal{X} \big\}$ spans $\mathbb{R}^{d}$, the matrix $\Phi$ has full column rank, thus implying $y=0$. In other words, $\psi_{\theta,i} = \psi_{\theta',i}$, and hence the two base models $P_\theta$ and $P_{\theta'}$ are the same. This completes the proof.
\end{proof}

\begin{proof}[Proof of Theorem~\ref{thm: extrap_class_charac}]
Fix any $\delta >0$, and consider any $\bm{X}'$ with $d(\bm{X}', \mathcal{X}) \leq \delta$. Then consider any two models $P_\theta, P_{\theta'} \in \mathcal{P}$ such that $\mathbb{TV}_{\bm{X}}(P_{\theta}, P_{\theta'}) =0$ for all $\bm{X} \in \mathcal{X}$. In other words, $P_\theta(\cdot |\bm{X}) = P_{\theta'}(\cdot |\bm{X})$ for all $\bm{X} \in \mathcal{X}$.
Denote by $\theta =(\gamma, \beta)$ and $\theta' =(\gamma', \beta')$ using \eqref{eqn: equiv_P_theta}.
By Assumption~\ref{ass: SP}, for $P_\theta$ and $P_{\theta'}$, there are some perturbation processes and $\bm{X}_0\in \mathcal{X}$ such that $T_\beta(\cdot| \bm{X}') = T_\beta(\cdot| \bm{X}_0)$ and $T_{\beta'}(\cdot| \bm{X}') = T_{\beta'}(\cdot| \bm{X}_0)$.
Thus, we can fix the representation from \eqref{eqn: equiv_P_theta} under the perturbation processes from Assumption~\ref{ass: SP}.
Then for $\bm{X}'$, we may write
\begin{align*}
    & P_\theta(\cdot |\bm{X}') = \int g_\gamma(\cdot| \bm{X}_\beta) \,\mathrm{d} T_\beta(\bm{X}_\beta |\bm{X}') 
    = \int g_\gamma(\cdot| \bm{X}_\beta) \,\mathrm{d} T_\beta(\bm{X}_\beta |\bm{X}_0) , \\
    & P_{\theta'}(\cdot |\bm{X}') = \int g_{\gamma'}(\cdot| \bm{X}_\beta) \,\mathrm{d} T_{\beta'}(\bm{X}_\beta |\bm{X}')
    = \int g_{\gamma'}(\cdot| \bm{X}_\beta) \,\mathrm{d} T_{\beta'}(\bm{X}_\beta |\bm{X}_0).
\end{align*}
However, the RHS of the two lines above must equal, by recalling that $P_\theta$ and $P_{\theta'}$ agree on all $\bm{X} \in \mathcal{X}$, i.e.,
\begin{align*}
    \int g_\gamma(\cdot| \bm{X}_\beta) \,\mathrm{d} T_\beta(\bm{X}_\beta |\bm{X})
    = P_\theta(\cdot |\bm{X})
    = P_{\theta'}(\cdot |\bm{X}) 
    = \int g_{\gamma'}(\cdot| \bm{X}_\beta) \,\mathrm{d} T_{\beta'}(\bm{X}_\beta |\bm{X}) .
\end{align*}
This shows $P_\theta(\cdot |\bm{X}')= P_{\theta'}(\cdot |\bm{X}')$, thereby implying $\mathbb{TV}_{\bm{X}'}(P_{\theta}, P_{\theta'}) =0$ which holds true uniformly over all $\bm{X}'$ in the neighboring sets in $\mathcal{X}$ (detailed at the beginning). This concludes the extrapolability of $\mathcal{P}$ over $\mathcal{X}$ and hence the sufficiency of Assumption~\ref{ass: SP}.
\end{proof}

\begin{proof}[Proof of Theorem~\ref{thm: extrap_RKHS}]
We generalize the proof of Theorem~\ref{thm: extrap_feature_map} and in particular, we will verify Assumption~\ref{ass: UI} therein which has shown to be sufficient to $\mathcal{E}_{T,\mathcal{P}^{\textnormal{base}},\delta} =0$.

Consider any two perturbed models $P_\gamma(\cdot| \bm{X}), P_{\gamma'}(\cdot| \bm{X}) \in \mathcal{P}^{\textnormal{perturb}}$ (with $\theta$ and $\theta'$ denoting the parameters for the inference processes as in the reparameterization \eqref{eqn: P_perturb_reparam}, respectively) such that for all $\bm{X} \in \mathcal{X}$, it holds that $P_\gamma(\cdot| \bm{X}) = P_{\gamma'}(\cdot| \bm{X})$.
Hence for any $j=1,\dots,d$, $\bm{X} \in \mathcal{X}$, by Fubini's theorem and the fact that $f_j(i)$ is integrable w.r.t.\ any base model, we have
\begin{align*}
    &\quad
    \mathbb{L}_{\bm{X}}(h_{\theta, j})
    =
    \int_{\widetilde{\bm{X}} \in \mathcal{X}^*} h_{\theta, j}(\widetilde{\bm{X}}) \,\mathrm{d} T(\widetilde{\bm{X}} | \bm{X}) \\
    &=
    \int_{\widetilde{\bm{X}} \in \mathcal{X}^*} \mathbb{E}_{i\sim P_\theta(\cdot| \widetilde{\bm{X}})}[f_j(i)] \,\mathrm{d} T(\widetilde{\bm{X}} | \bm{X})
    =
    \int_{\widetilde{\bm{X}} \in \mathcal{X}^*} \int_{i\in \mathcal{V}} f_j(i) \,\mathrm{d} P_\theta(i| \widetilde{\bm{X}}) \,\mathrm{d} T(\widetilde{\bm{X}} | \bm{X}) \\
    &=
    \int_{i\in \mathcal{V}} f_j(i) \,\mathrm{d} P_\gamma(i| \bm{X})
    = \int_{i\in \mathcal{V}} f_j(i) \,\mathrm{d} P_{\gamma'}(i| \bm{X}) 
    =
    \int_{\widetilde{\bm{X}} \in \mathcal{X}^*} \int_{i\in \mathcal{V}} f_j(i) \,\mathrm{d} P_{\theta'}(i| \widetilde{\bm{X}}) \,\mathrm{d} T(\widetilde{\bm{X}} | \bm{X}) \\
    &=
    \int_{\widetilde{\bm{X}} \in \mathcal{X}^*} \mathbb{E}_{i\sim P_{\theta'}(\cdot| \widetilde{\bm{X}})}[f_j(i)] \,\mathrm{d} T(\widetilde{\bm{X}} | \bm{X}) 
    =
    \int_{\widetilde{\bm{X}} \in \mathcal{X}^*} h_{\theta', j}(\widetilde{\bm{X}}) \,\mathrm{d} T(\widetilde{\bm{X}} | \bm{X}) 
    = \mathbb{L}_{\bm{X}}(h_{\theta', j}) .
\end{align*}
Since the above holds for all $\bm{X}\in \mathcal{X}$, by our assumption we obtain $h_{\theta, j} = h_{\theta', j}$.
Similarly, this holds for all $j=1,\dots, d$, and by our assumption we conclude $\theta = \theta'$.
The more general assumption also leads to this implication (more directly).
This shows Assumption~\ref{ass: UI} and hence completes the proof.
\end{proof}

\subsection{Proofs of propositions}

\begin{proof}[Proof of Proposition~\ref{prop: basic_extrap}]
Part~(i) is direct by the definitions.
Similarly for part~(ii) by noting that the set $\{\bm{X}':d(\bm{X}',\mathcal{X})\leq \delta_1\}$ is contained in $\{\bm{X}':d(\bm{X}',\mathcal{X})\leq \delta_2\}$.

For part~(iii),
note that since $\mathcal{X}^*$ is finite, the set
\begin{align*}
    \mathcal{D}_{\mathcal{X}^*} := \left\{ d(\bm{X}, \mathcal{X}): \bm{X} \in \mathcal{X}^* \right\}
\end{align*}
is a finite subset of $[0, \infty)$.
Hence for every $\delta_0 \geq 0$, there exists $\varepsilon_0 > 0$ such that $(\delta_0, \delta_0+ \varepsilon_0) \cap \mathcal{D}_{\mathcal{X}^*} = \emptyset$. As $\mathcal{X} \subseteq \mathcal{X}^*$, we thus have for any $\delta \in [\delta_0, \delta_0+ \varepsilon_0)$ that
\begin{align*}
    \left\{ \bm{X}: d(\bm{X}, \mathcal{X}) \leq \delta \right\}
    =
    \left\{ \bm{X}: d(\bm{X}, \mathcal{X}) \leq \delta_0 \right\} ,
\end{align*}
so that $\mathcal{U}_{\mathcal{P}}(\delta) = \mathcal{U}_{\mathcal{P}}(\delta_0)$ on this interval. Consequently, we have $\lim_{\delta \downarrow \delta_0} \mathcal{U}_{\mathcal{P}}(\delta) = \mathcal{U}_{\mathcal{P}}(\delta_0)$, which concludes right-continuity.

Finally, consider part~(iv).
By Definition~\ref{def: dist_extrap}, $\mathcal{U}_{\mathcal{P}}(\delta)$ is upper bounded by $1$. Together with parts~(i) and (ii) of this proposition, the limit exists due to the monotone convergence theorem.
This completes the proof of this proposition.
\end{proof}

\begin{proof}[Proof of Proposition~\ref{prop: extrap_synonym}]
First, we clarify in the statement of this theorem that the notation $M(\cdot, \cdot)$ is the Hamming distance.
By Theorem~\ref{thm: extrap_perturb}, it suffices to show $T(\cdot| \bm{X})$ is Lipschitz with constant $(1-\beta)$ in total variation distance (w.r.t.\ the Hamming distance). That is, we want $\mathbb{TV}(T(\cdot| \bm{X}), T(\cdot| \bm{X}')) \leq (1-\beta) M(\bm{X}, \bm{X}')$.
If this holds, we also have $\alpha(T) \leq (1-\beta) \ell$ since $M(\cdot, \cdot)\leq \ell$ by construction.

We first construct a coupling (cf.\ Definition~4.1.1 of \citet{Roch2024}) between $T(\cdot| \bm{X})$ and $T(\cdot| \bm{X}')$. To do this, for position $l=1,\dots, L$, independently generate $U_l \sim \text{Bernoulli}(\beta)$ and some uniformly random draw from 
$Q_0$, 
denoted as $\widetilde{X}_l$. Then construct $\bm{Y} =(Y_1,\dots, Y_L)$ and $\bm{Y}' =(Y_1',\dots, Y_L')$ as token sequences such that for any $l=1,\dots, L$,
\[
Y_l = 
\begin{cases}
    X_l & \text{if $U_l=0$},\\
    \widetilde{X}_l & \text{if $U_l=1$}.
\end{cases} ,\quad
Y_l' = 
\begin{cases}
    X_l' & \text{if $U_l=0$},\\
    \widetilde{X}_l & \text{if $U_l=1$}.
\end{cases}
\]
By our perturbation procedure, $\bm{Y}\sim T(\cdot| \bm{X})$ and $\bm{Y}' \sim T(\cdot| \bm{X}')$, so that $(\bm{Y}, \bm{Y}')$ is a coupling of the measures of $\bm{X}$ and $\bm{X}'$.
Then by the coupling inequality (cf.\ Lemma~4.1.11 of \citet{Roch2024}), we have
\begin{equation}
\label{eqn: coupling_inequality}
\mathbb{TV}(T(\cdot| \bm{X}), T(\cdot| \bm{X}')) \leq \mathbb{P}(\bm{Y} \neq \bm{Y}') .
\end{equation}

Let $Z\subseteq \{1,\dots, L\}$ be the set of locations where $\bm{X}$ and $\bm{X}'$ differ.
Then observe that $\bm{Y}$ can only equal $\bm{Y}'$ if $U_l=1$ for all $l\in Z$, and hence by the independence over $U_l$, we have
\begin{align*}
    &\quad
    \mathbb{P}(\bm{Y} \neq \bm{Y}') = 1 - \mathbb{P}(\bm{Y} = \bm{Y}') = 1 - \prod_{l\in Z} \mathbb{P}(U_l = 1) = 1 - \beta^{|Z|} \\
    &=
    1 - [1- (1-\beta)]^{|Z|}
    \leq
    1 - \big[ 1 - (1-\beta) \cdot |Z| \big]
    = (1-\beta) \cdot |Z|
    = (1-\beta) \cdot M(\bm{X}, \bm{X}')
\end{align*}
where the last inequality used the Bernoulli's inequality, and the last equality used the definition of Hamming distance.
Combined with \eqref{eqn: coupling_inequality}, we may conclude the proof for $\alpha(T)$.
By the above arguments, the result for $C_{T,\delta}$ also follows directly, using its construction in the proof of Theorem~\ref{thm: just_perturb}.
Lastly, note that when $Q_0$ has support outside of $\mathcal{X}$, the only scenario for $\mathbb{TV}_{\bm{X}}(P_\gamma, P_{\gamma'}) =0$ (using notations in Assumption~\ref{ass: UI}) is when $\mathbb{TV}_{\bm{X}}( (1-\beta) P_\theta, (1-\beta) P_{\theta'}) =0$ and thus $\mathbb{TV}_{\bm{X}}(P_\theta, P_{\theta'}) =0$.
This completes this proof.
\end{proof}

\begin{proof}[Proof of Proposition~\ref{prop: propF1_engression}]
First, note that now $\mathcal{X}^* = \mathbb{R}$ and $\mathcal{V}=\mathbb{R}$. We define
\[
g_{\gamma}(x) := g_\theta(x) + \beta x
= \theta_0 + (\theta_1 +\beta) x + \theta_2 x^2 ,
\]
with $\gamma=(\theta_0, \theta_1 +\beta, \theta_2)$. Note that it suffices to identify the sum $\theta_1 +\beta$ rather than the individual $\theta_1$ and $\beta$, since the former already identifies the function $g_{\gamma}(\cdot)$.
We will read each quadratic pre-post-additive noise model as in a base model $P_\theta(\cdot| x)$ being the distribution of $g_{\gamma}(x) + \xi$, and a perturbation $T(\cdot| x)$ being the distribution of $x+\eta$. 

Consider the base model first. Denoting the probability density function of $\xi$ as $f(\cdot)$, if $P_\theta(\cdot| x_0) = P_{\theta'}(\cdot| x_0)$ for any $x_0$, we have
\[
\int_{-\infty}^\infty \big| f(x - g_{\gamma}(x_0)) - f(x - g_{\gamma'}(x_0)) \big| \,\mathrm{d}x = 0 ,
\]
which, by Lemma~\ref{lemma: identify_under_shift}, implies $g_{\gamma}(x_0) = g_{\gamma'}(x_0)$. As both $g_{\gamma}(\cdot)$ and $g_{\gamma'}(\cdot)$ are quadratic, $\mathcal{X}$ containing at least three distinct points guarantee that $\theta_0 = \theta_0'$, $\theta_1 + \beta_1 = \theta_1' + \beta_1'$, and $\theta_2 = \theta_2'$ (cf.\ the fact that the corresponding Vandermonde matrix is invertible). Note that albeit $\theta_1, \beta_1$ cannot be identified, it suffices to identify the function $g_{\gamma}$ since all coefficients in the polynomial function are identified. That is, we now have $g_{\gamma}(x) = g_{\gamma'}(x)$ for any $x\in\mathbb{R}$. Hence the class of base models is extrapolable over $\mathcal{X}$, i.e.,
$\mathcal{U}_{
    \text{\small
    $\mathcal{P}^{\textnormal{base}}, \mathcal{X}$
    }
    } (\delta) = 0$ for any $\delta >0$.

By Theorem~\ref{thm: extrap_perturb}, it remains to show $\mathcal{E}_{T,\mathcal{P}^{\textnormal{base}},\delta} =0$, for which we will use Theorem~\ref{thm: extrap_RKHS} with kernel $K(x, y) = 1+ xy + x^2 y^2$, so that this RKHS consists of all polynomial functions with maximum degree 2. Further let $f_1(x) = x$ and
\[
h_{\theta,1}(x) := \mathbb{E}_{i\sim P_{\theta}(\cdot| x)} [f_1(i)]
= \mathbb{E}_{i\sim P_{\theta}(\cdot| x)} [i]
= g_\gamma(x) ,
\]
where the last equality used the fact that the distribution of $\xi$ is symmetric. 
We may also verify that $\mathbb{E}_{i\sim P_{\theta}(\cdot| x)} [|f_1(i)|] = \mathbb{E}_{i\sim P_{\theta}(\cdot| x)} [|i|] <\infty$ by $\mathbb{E}(\xi^2)< \infty$.
Furthermore, define the linear functional
\begin{align*}
    &\quad
    \mathbb{L}_{x}(h_{\theta,1}) := \mathbb{E}_{y\sim T(\cdot| x)} [h_{\theta,1}(y)]
    = \mathbb{E}_{y\sim T(\cdot| x)} [g_\gamma(y)] \\
    &=
    \mathbb{E}_{y\sim T(\cdot| x)} \left[ 
    \theta_0 + (\theta_1 +\beta) y + \theta_2 y^2
    \right]
    =
    \theta_0 + (\theta_1 +\beta) x + \theta_2 \mathbb{E}(\eta^2) + \theta_2 x^2 ,
\end{align*}
by the fact that the distribution of $\eta$ is symmetric. Then by Theorem~\ref{thm: extrap_RKHS}, it suffices to show $\theta_0 + (\theta_1 +\beta) x + \theta_2 \mathbb{E}(\eta^2) + \theta_2 x^2 = \theta_0' + (\theta_1' +\beta') x + \theta_2' \mathbb{E}(\eta^2) + \theta_2' x^2$ for sufficiently many $x\in\mathbb{R}$ (to be detailed soon) implies $(\theta_0, \theta_1 +\beta, \theta_2) = (\theta_0', \theta_1' +\beta', \theta_2')$. Note that we may rewrite
\[
0 = \Big[ (\theta_0 -\theta_0') + (\theta_2 - \theta_2') \mathbb{E}(\eta^2) \big] + \big[ (\theta_1 +\beta) - (\theta_1' +\beta') \big] x  + (\theta_2 - \theta_2') x^2 ,
\]
which, if it holds for at least three distinct $x$, gives us the desired. This completes the verification of conditions in Theorem~\ref{thm: extrap_RKHS} and hence shows Proposition~F.1 of \citet{ShenMeinshausen2025}.
\end{proof}

\section{Details of experiments}\label{appendix: experiments}

\subsection{Implementation details in the training phase}
Perturbation is not applied to every token, as doing so may generate a completely random prefix. Instead, we perturb an $\alpha$ proportion of tokens in the prefix, where $\alpha$ denotes the perturbation intensity. Moreover, to be compatible with the existing training pipeline so as to simplify implementation, we perturb the original dataset $\mathcal{D}$ to obtain a collection of new datasets $\widetilde{\mathcal{D}}^{(1)}, \ldots, \widetilde{\mathcal{D}}^{(m)}$. Finally, the combined dataset $\mathcal{D} \cup \widetilde{\mathcal{D}}^{(1)} \cup \cdots \cup \widetilde{\mathcal{D}}^{(m)}$ is used to train $P_{\theta}$ based on some specified scoring rule. We fix $m = 1$ throughout the experiments. 

Notably, our method makes use of combined datasets. Hence to ensure a fair comparison, we combine the original dataset and its copy when conducting the training procedure using the classical approach (i.e., without perturbation). The model parameters are then updated after each epoch.

\subsection{Details on simulation studies}\label{appendix:simulation}

\textbf{Setting of Algorithm~\ref{alg: pretrain}.} We set $m = 2$ in Algorithm~\ref{alg: pretrain}. Specifically, when $j = 1$, we set $\widetilde{\bm{X}}^{(i,j)}_{t} = \bm{X}^{(i)}_{t}$. That is, the first half of the perturbed dataset is identical to the original dataset. For $j = 2$, we merely consider tokens that belong to the following set:
\begin{align*}
\left\{ v \in \mathcal{V} \mid 
P\big(v | \bm{X}^{(i)}_{t-1}\big) > 2 / |\mathcal{V}|
\quad \text{and} \quad
P\big(\bm{X}^{(i)}_{t+1} | v \big) > 2/|\mathcal{V}|
\right\},
\end{align*}
which is designed to mimic the set of synonyms. Each token $v$ in the above set is assigned with a probability $c P\big(v | \bm{X}^{(i)}_{t-1}\big)$ where $c$ is a normalizing constant such that all probabilities are summed to one. 

\textbf{Training configuration.} We train the model using the Adam optimizer \citep{kingma2015adam} with a learning rate of $10^{-3}$ and a weight decay of $10^{-4}$. The batch size is set to 500, and the model is trained for 25 epochs. In particular, for the classical method, since it does not involve perturbation, we simply replicate the dataset once and combine it with the original dataset to match the size of the training data used by our method. This ensures that both our method and the classical method follow the same and fair training procedure.

\end{document}